\def\maxop{\mathop{\rm max}\limits}
\newcommand{\sign}{\mathop{\rm sign}}
\def\R{\mathbb{R}}
\newcommand{\norm}[1]{\left\|#1\right\|}
\newcommand{\inner}[1]{\left\langle#1\right\rangle}
\DeclareMathOperator{\E}{\mathbb{E}}
\let\l\relax
\DeclareMathOperator{\l}{\mathbf{\ell}}
\newcommand{\cv}{\boldsymbol{c}}
\newcommand{\rv}{\boldsymbol{r}}
\newcommand{\sv}{\boldsymbol{s}}
\newcommand{\tv}{\boldsymbol{t}}
\newcommand{\uv}{\boldsymbol{u}}
\newcommand{\vv}{\boldsymbol{v}}
\newcommand{\wv}{\boldsymbol{w}}
\newcommand{\xv}{\boldsymbol{x}}
\newcommand{\yv}{\boldsymbol{y}}
\newcommand{\zv}{\boldsymbol{z}}
\newcommand{\vI}{\boldsymbol{I}}
\newcommand{\vU}{\boldsymbol{U}}
\newcommand{\vV}{\boldsymbol{V}}
\newcommand{\vX}{\boldsymbol{X}}
\newcommand{\tr}{\text{tr}}
\newcommand{\lm}{\lambda_{\text{max}}}
\newcommand{\relu}{\text{ReLU}}
\newcommand{\gelu}{\text{GELU}}
\newcommand{\diag}{\mbox{$\mbox{diag}$}}
\newcommand{\betav}{\boldsymbol{\beta}}
\newcommand{\deltav}{\boldsymbol{\delta}}
\newcommand{\gammav}{\boldsymbol{\gamma}}
\definecolor{darkgreen}{rgb}{0. .7 0.}
\newcommand{\Scal}{\mathcal{S}}
\newlength\newl
\newcommand{\iter}[2]{#1^{(#2)}}
\newcommand{\myparagraph}{\textbf}
\newtheorem{theorem}{Theorem}
\newtheorem{proposition}[theorem]{Proposition}
\begin{document}

\twocolumn[
\icmltitle{A Modern Look at the Relationship between Sharpness and Generalization}

\icmlsetsymbol{equal}{*}

\begin{icmlauthorlist}
    \icmlauthor{Maksym Andriushchenko}{epfl}
    \icmlauthor{Francesco Croce}{tueai,tue}
    \icmlauthor{Maximilian M{\"u}ller}{tueai,tue}
    \icmlauthor{Matthias Hein}{tueai,tue}
    \icmlauthor{Nicolas Flammarion}{epfl}
\end{icmlauthorlist}

\icmlaffiliation{epfl}{EPFL}
\icmlaffiliation{tueai}{T{\"u}bingen AI Center}
\icmlaffiliation{tue}{University of T{\"u}bingen}

\icmlcorrespondingauthor{Maksym Andriushchenko}{maksym.andriushchenko@epfl.ch}

\icmlkeywords{Deep learning, Sharpness, Generalization}

\vskip 0.3in
]

\printAffiliationsAndNotice{}  %

\begin{abstract}
    Sharpness of minima is a promising quantity that can positively correlate with test error for deep networks and, when optimized during training, can improve generalization. However, standard sharpness is not invariant under reparametrizations of neural networks, and, to fix this, reparametrization-invariant sharpness definitions have been proposed, most prominently \textit{adaptive sharpness} \citep{kwon2021asam}. But does it really capture generalization in modern practical settings? We comprehensively explore this question in a detailed study of various definitions of adaptive sharpness in settings ranging from training from scratch on ImageNet and CIFAR-10 to fine-tuning CLIP on ImageNet and BERT on MNLI. We focus mostly on \textit{transformers} for which little is known in terms of sharpness despite their widespread usage. Overall, we observe that sharpness \textit{does not correlate well} with generalization but rather with some training parameters like the learning rate that can be positively or negatively correlated with generalization depending on the setup. Interestingly, in multiple cases, we observe a consistent \textit{negative} correlation of sharpness with out-of-distribution error implying that \textit{sharper} minima can generalize \textit{better}. Finally, we illustrate on a simple model that the right sharpness measure is highly data-dependent, and that we do not understand well this aspect for realistic data distributions. Our code is available at \url{https://github.com/tml-epfl/sharpness-vs-generalization}.
\end{abstract}

\section{Introduction}
Considering the sharpness of the training objective at a minimum has intuitive appeal: if the loss surface is slightly perturbed due to a train vs. test or out-of-distribution (OOD) discrepancy, flat minima of deep networks should still have low loss \citep{hochreiter1995simplifying, keskar2016large}. 
On the theoretical side, sharpness appears in generalization bounds \citep{neyshabur2017exploring, dziugaite2018entropy, foret2021sharpnessaware} but this fact alone is not necessarily informative for practical settings. For example, quantities like the VC-dimension typically correlate \textit{negatively} with generalization contrary to what the generalization bound might suggest \citep{jiang2019fantastic}. 
Importantly, it has been shown empirically that sharpness can also correlate well with generalization in common deep learning setups \citep{keskar2016large, jiang2019fantastic} which makes it a promising generalization measure that can potentially distinguish well-generalizing solutions. 
Additionally, empirical success of training methods that minimize sharpness such as sharpness-aware minimization (SAM) \citep{zheng2020regularizing, wu2020adversarial, foret2021sharpnessaware} further suggests that sharpness can be an important quantity for generalization.

\myparagraph{Motivation: why revisiting sharpness?}
Many works imply or conjecture that flatter minima should generalize better \citep{xing2018walk, zhou2020towards, cha2021swad, park2022vision, lyu2022understanding} for standard or OOD data. However, standard sharpness definitions do not correlate well with generalization \citep{jiang2019fantastic, kaur2022maximum} which can be partially due to their lack of invariance under reparametrizations that leave the model unchanged \citep{dinh2017sharp, granziol2020flatness, zhang2021flatness}. 
Adaptive sharpness appears to be more promising since it fixes the reparametrization issue and is shown to empirically correlate better with generalization \citep{kwon2021asam}. %
However, the empirical evidence in \citet{kwon2021asam} and other works that discuss sharpness \citep{keskar2016large, jiang2019fantastic, dziugaite2020search, bisla2022low} is restricted to small datasets like CIFAR-10 or SVHN.
In addition, SAM appears to be particularly useful for new architectures like vision transformers \citep{chen2021vision} for which there has been no systematic studies of sharpness vs. generalization.
Moreover, transfer learning is becoming the default option for vision and language tasks but not much is known about sharpness there.
Finally, the relationship between sharpness and OOD generalization is also underexplored. 
These new developments motivate us to revisit the role of sharpness in these new settings.

\myparagraph{Contributions.}
We aim to provide a comprehensive study focusing specifically on adaptive sharpness in order to answer the following fundamental question:
\begin{center}
    \textit{Can reparametrization-invariant sharpness capture generalization in modern practical settings?}
    \\
\end{center}
Towards this goal, we make the following contributions:
\begin{itemize}[left=0mm, topsep=0pt]
    \item We provide extensive evaluations of multiple reparametrization-invariant sharpness measures for (1) training from scratch on ImageNet and CIFAR-10 using transformers and ConvNets, and (2) fine-tuning CLIP and BERT transformers on ImageNet and MNLI.
    \item 
    We observe that sharpness \textit{does not correlate well} with generalization but rather with some training parameters like the learning rate which can be positively or negatively correlated with generalization depending on the setup.
    \item Interestingly, in multiple cases, we observe a consistent \textit{negative} correlation of sharpness with OOD generalization implying that \textit{sharper} minima can generalize \textit{better}.
    \item Finally, we provide an analysis on a simple model where we know the measure responsible for generalization. Our analysis suggests that (1) different sharpness definitions can capture totally different trends, and (2) the right sharpness measure is highly \textit{data-dependent}. %
\end{itemize}

\section{Related work}
\label{sec:related_work}
Here we discuss the most related papers to our work. %

\myparagraph{Systematic studies on sharpness vs. generalization.}
The seminal work of \citet{keskar2016large} shows that the performance degradation of large-batch SGD \citep{lecun2012efficient} is correlated with sharpness of minima. 
\citet{neyshabur2017exploring} explore different generalization measure that may explain generalization for deep networks suggesting that sharpness can be a promising measure. %
\citet{jiang2019fantastic} perform a systematic study that shows a strong correlation between sharpness and generalization on a large set of CIFAR-10/SVHN models trained with many different hyperparameters.  
Their experimental protocol is, however, criticized in \citet{dziugaite2020search} since it can obscure failures of generalization measures and instead should be  evaluated within the framework of distributional robustness. 
\citet{vedantam2021an} discuss OOD generalization on small datasets and evaluate a definition of sharpness which, however, does not correlate well with OOD generalization. 
\citet{stutz2021relating} study the relationship between sharpness and generalization under $\ell_p$-bounded adversarial perturbations. %
\citet{andriushchenko2022towards} study reasons behind the success of SAM and highlight the importance of using sharpness computed on a small subset of training points. 
\citet{kaur2022maximum} discuss that the maximum eigenvalue of the Hessian is not always predictive to generalization even for models obtained via standard training methods.

\myparagraph{Reparametrization-invariant sharpness definitions.}
The magnitude-aware sharpness of \citet{keskar2016large} mitigates but does not completely resolve reparametrization invariance.
\citet{liang2019fisher} consider the Fisher-Rao metric related to sharpness and invariant to network reparametrization.  %
\citet{petzka2021relative} propose a sharpness measure based on the trace of the Hessian and show correlation for a small ConvNet on CIFAR-10. 
\citet{tsuzuku2020normalized} suggest to use a specifically rescaled sharpness inspired by the PAC-Bayes theory and report high correlation with generalization for ResNets on CIFAR-10. %
Most importantly for our work, \citet{kwon2021asam} introduce adaptive sharpness which is reparametrization invariant, correlates well with generalization, and generalizes multiple existing sharpness definitions.

\myparagraph{Explicit and implicit sharpness minimization.}
The idea that flat minima can be beneficial for generalization dates back to \citet{hochreiter1995simplifying} and inspires multiple methods that optimize for more robust minima. These methods optimize different criteria ranging from random perturbations such as dropout \citep{srivastava2014dropout} and Entropy-SGD \citep{chaudhari2016entropy} to worst-case perturbations such as SAM \citep{foret2021sharpnessaware} and its variations \citep{kwon2021asam, zhuang2022surrogate, du2022sharpnessaware}. 
Notably, \citet{chen2021vision} suggest that SAM is particularly helpful for vision transformers on ImageNet scale and that standard transformers by default converge to very sharp minima. %
Concurrently, 
works on the implicit bias of SGD suggest \textit{implicit} minimization of some hidden complexity measures related to flatness of minima \citep{keskar2016large, smith2017bayesian, xing2018walk}. 
\citet{izmailov2018averaging} propose to average weights during SGD to improve generalization and motivate it by sharpness reduction. 
\citet{smith2021origin} derive an implicit regularization term of SGD based on the gradient norm. 
Sharpness-related quantities based on the Hessian have been a focus of many recent works. 
E.g., \citet{cohen2021gradient, arora2022understanding, damian2022self} empirically and theoretically characterize the regime of full-batch gradient descent where the maximum eigenvalue of the Hessian becomes inversely proportional to the learning rate used for training. 
\citet{blanc2020implicit, li2021happens, damian2021label} discover implicit minimization of the trace of the Hessian for label-noise SGD used as a proxy of standard SGD. 
The common theme behind these works is a focus on sharpness-related metrics as a tool to better understand generalization for deep networks.

\section{Adaptive Sharpness, its Invariances, and Computation}
In this section, we first provide background on adaptive sharpness, then discuss its invariance properties for modern architectures, and propose a way to compute worst-case sharpness efficiently.

\subsection{Background on Sharpness}
\myparagraph{Sharpness definitions.}
We denote the loss on a set of \textit{training} points $\Scal$ as $L_\Scal(\wv) = \frac{1}{|S|} \sum_{(\xv, \yv) \in \Scal} \l_{\xv\yv}(\wv)$, where $\ell_{\xv \yv}(\wv) \in \R_+$ represents some loss function (e.g., cross-entropy) on the training pair $(\xv, \yv) \in \Scal$ computed with the network weights $\wv$.
For arbitrary $\wv \in \R^p$ (i.e., not necessarily a minimum), we define the \textit{adaptive average-case} and \textit{adaptive worst-case $m$-sharpness} with radius $\rho$ and with respect to a vector $\cv \in \R^p$ as:
\begin{align} \label{eq:sharpness}
S_{avg}^\rho(\wv, \cv) &\triangleq \E_{\substack{\Scal \sim P_m \ \ \ \ \ \ \\ \deltav \sim \mathcal{N}(0, \rho^2 diag(\cv^2))}} \hspace{-8mm} L_\Scal(\wv + \deltav) - L_\Scal(\wv), \\
S_{max}^\rho(\wv, \cv) &\triangleq \E_{\Scal \sim P_m} \maxop_{\norm{\deltav \odot \cv^{-1}}_p \leq \rho} L_\Scal(\wv + \deltav) - L_\Scal(\wv), \nonumber
\end{align}
where $\odot$/$^{-1}$ denotes elementwise multiplication/inversion and $P_m$ is the data distribution that returns $m$ training pairs $(\xv, \yv)$. 
Both average-case and worst-case sharpness have often been considered in the literature, and worst-case sharpness is mostly determined to correlate better with generalization \citep{jiang2019fantastic, dziugaite2020search, kwon2021asam}, especially with a small $m$ (i.e., $|\Scal|$) in worst-case sharpness \citep{foret2021sharpnessaware}. 
Using $\cv = |\wv|$ leads to \textit{elementwise} adaptive sharpness \citep{kwon2021asam} and makes the sharpness invariant under multiplicative reparametrizations that preserve the network, i.e., for any $\cv \in \R^p$ such that $f(\wv \odot \cv) = f(\wv)$ we have:
\begin{align*}
&S_{max}^\rho(\wv \odot \cv, |\wv \odot \cv|) =\\
&\E_\Scal \maxop_{\|\deltav \odot (|\wv| \odot \cv)^{-1}\|_p \leq \rho} L_\Scal(\wv \odot \cv + \deltav) - L_\Scal(\wv \odot \cv) =\\
&\E_\Scal \maxop_{\|\deltav' \odot |\wv|^{-1}\|_p \leq \rho} L_\Scal((\wv + \deltav') \odot \cv) - L_\Scal(\wv \odot \cv) =\\
&\E_\Scal \maxop_{\|\deltav' \odot |\wv|^{-1}\|_p \leq \rho} L_\Scal(\wv + \deltav') - L_\Scal(\wv) = S_{max}^\rho(\wv, |\wv|),
\end{align*}
where we used the substitution $\deltav' := \deltav \odot \cv^{-1}$. Similarly, one can show that $S_{avg}^\rho(\wv \odot \cv, |\wv \odot \cv|) = S_{avg}^\rho(\wv, |\wv|)$. 
Thus, this illustrates that \textit{the criticism of sharpness stated in \citet{dinh2017sharp} does not apply to adaptive sharpness}, and there is no need to ``balance'' the network in a pre-processing step like, e.g., done in \citet{bisla2022low}.

\myparagraph{Connections between different sharpness definitions.}
Here we generalize the analytical expressions of standard sharpness for radius $\rho \to 0$ that depend on the first- or second-order terms which are frequently used in the literature \citep{blanc2020implicit, tsuzuku2020normalized, li2021happens, damian2021label}. For a thrice differentiable loss $L(\wv)$, the average-case elementwise adaptive sharpness can be computed as (see App.~\ref{sec:app_asymptotic} for proofs):
\begin{align} \nonumber
S_{avg}^\rho(\wv, |\wv|) = &\E_{\Scal \sim P_m} \frac{\rho^2}{2} \tr(\nabla^2 L_\Scal(\wv) \odot |\wv| |\wv|^\top)\\
                           &+ O(\rho^3).
\label{eq:avg_sharpness_small_rho}
\end{align}
We note that the first-order term cancels out completely and plays no role. This is not the case for worst-case adaptive sharpness where we get for $p=2$ the following expression for every critical point that is not a local maximum:
\begin{align} \nonumber
S_{max}^\rho(\wv, |\wv|) = &\E_{\Scal \sim P_m} \frac{\rho^2}{2} \lm(\nabla^2 L_\Scal(\wv) \odot |\wv| |\wv|^\top)\\
                           &+ O(\rho^3),
\label{eq:max_sharpness_small_rho}
\end{align}
otherwise the first-order term dominates and we get $\rho \E_{\Scal \sim P_m}\| \nabla L(\wv) \odot |\wv| \|_2$, which resembles the implicit gradient regularization of \citet{smith2021origin}. Thus, worst-case sharpness with a small radius captures different properties of the loss surface depending on whether $\wv$ is close to a minimum or not. We make use of these quantities in the last section to discuss insights from simple models.
For the experiments, however, we evaluate a range of $\rho$ where the smallest $\rho$ well-approximates the above quantities.

\myparagraph{What do we expect sharpness to capture?}
We are looking for a sharpness measure that can be \textit{predictive for generalization} meaning that it satisfies either of these two hypotheses: 
\begin{itemize}[left=0mm]
    \item \textbf{Strong hypothesis}: sharpness is highly correlated with generalization suggesting a \textit{possibility} of a causal relation.
    \item \textbf{Weak hypothesis}: models with the lowest sharpness generalize well suggesting that sharpness might be \textit{sufficient but not necessary} for generalization.
\end{itemize}
To detect correlation, we follow the previous works by \citet{jiang2019fantastic, dziugaite2020search, kwon2021asam} and use the Kendall rank correlation coefficient:
\begin{align} \label{eq:tau}
\hspace{-3mm}
\tau(\tv, \sv) = \frac{2}{M(M-1)} \sum_{i<j} \sign(t_i - t_j) \sign(s_i - s_j)
\end{align}
where $\tv, \sv \in \R^M$ are vectors of test error and sharpness values for $M$ different models.
We adopt a less demanding setting than in the previous works of \citet{neyshabur2017exploring, jiang2019fantastic, dziugaite2020search}, and only compare models \textit{within the same loss surface} motivated by the geometric motivation behind sharpness. This restriction rules out comparing models with different architectures (including different width and depth) or measuring sharpness on a different set of points since both changes would change the loss surface. 
According to the same reason, we also do not consider the ability of sharpness to capture robustness to different amounts of noisy labels (unlike, e.g., \citet{neyshabur2017exploring}). 
We always evaluate sharpness on the \textit{same} training points taken without any data augmentations. Moreover, we always compare models trained with exactly the same training sets but, at the same time, we allow the usage of algorithmic techniques such as data augmentation or mixup for training.

\subsection{Which Invariances Do We Need Sharpness to Capture for Modern Architectures?}
Throughout the paper, we focus on \textit{elementwise} adaptive sharpness which, as we show, satisfies the main reparametrization invariances for ResNets and ViTs. 
Let us denote $f_{\wv}:\R^d \to \R^K$ a network with parameters $\wv$, which returns the logits $f_{\wv}(\xv) \in \R^K$ for an input $\xv \in \R^d$.
By a reparametrization invariance we mean a function $T: \R^p \to \R^p$ such that for every $\wv \in \R^p$ and $\xv \in \R^d$ it holds $f_{\wv}(\xv) = f_{T(\wv)}(\xv)$.
We briefly discuss here that adaptive sharpness also stays invariant for \textit{modern} architectures like ResNets and ViTs %
involving normalization layers and self-attention. 
Finally, we discuss how to treat the scale-sensitivity of classification losses.

\myparagraph{Adaptive sharpness for ResNets.}
A typical block of a pre-activation ResNet between skip connections includes the following sequence of operations: \texttt{BN}$\to$\texttt{ReLU}$\to$\texttt{conv}$\to$\texttt{BN}$\to$\texttt{ReLU}$\to$\texttt{conv}  where \texttt{BN} denotes BatchNorm. So we need to make sure that the sharpness definition we use is invariant to transformations that leave the network unchanged: (1) multiplication of the affine BatchNorm parameters by $\alpha \in \R_+$ and division of the subsequent convolutional parameters by the same $\alpha$ (since ReLU is positive one-homogeneous and $\relu(\alpha z) / \alpha = \relu(z)$), and (2) multiplying the convolutional layer by any $\alpha \in \R_+$ due to scale-invariance of the subsequent BatchNorm layer. Both multiplicative invariances are satisfied by elementwise adaptive sharpness since $S_{max}^\rho(\wv \odot \cv, |\wv \odot \cv|) = S_{max}^\rho(\wv, |\wv|)$ as shown above.

\myparagraph{Adaptive sharpness for ViTs.}
A typical MLP block of ViTs contains the following operations: \texttt{LN}$\to$\texttt{Linear} $\to$\texttt{GELU}$\to$\texttt{Linear} where \texttt{LN} denotes LayerNorm, and pre-softmax self-attention weights are computed as $Z W_Q W_K^\top Z^\top$ where $Z \in \R^{P \times D}$ is the matrix of $P$ $D$-dimensional tokens.
The network thus has the following invariances to multiplication/division by $\alpha$: (1) between \texttt{LN} and \texttt{Linear} in MLP, (2) between $W_Q$ in $W_K$ in self-attention, (3) between two \texttt{Linear} layers that have GELU in-between for which $\gelu(\alpha z) / \alpha \approx \gelu(z)$. Moreover, at the beginning of the network there is a part of the network which is invariant to the scale of the \texttt{Linear} layer (\texttt{Linear}$\to$\texttt{LN}).
Similarly to ResNets, all these invariances are multiplicative, so the argument about the invariance of elementwise adaptive sharpness is the same.

\myparagraph{Scale-sensitivity for classification losses.}
However, adaptive sharpness remains sensitive to the \textit{scale} of the classifier, meaning that 
the sharpness together with the cross-entropy loss keep decreasing to zero after reaching zero training error.
This can be seen even for linear models for which scaling the weight vector by a constant %
changes the adaptive sharpness as shown in Fig.~\ref{fig:sharpness_linear_model}. 
To fix this issue, \citet{tsuzuku2020normalized} propose to use normalization of the logits $f_{\wv}$, i.e.: 
\begin{align}
\tilde{f}_{\wv}(\xv) \triangleq \frac{f_{\wv}(\xv)}{\sqrt{\frac{1}{K}\sum_{i=1}^K (f_{\wv}(\xv)_i - f_{avg}(\xv))^2}}, %
\end{align}
where $f_{avg}(\xv) = \frac{1}{K}\sum_{j=1}^K f_{\wv}(\xv)_j$. 
This provably fixes the scaling issue meaning that scaling the output layer by $\alpha \in \R_+$ does not affect the logits. %
Moreover, this change can make models having different training loss more comparable to each other. %
\begin{figure}
    \centering
    \includegraphics[width=0.36\textwidth]{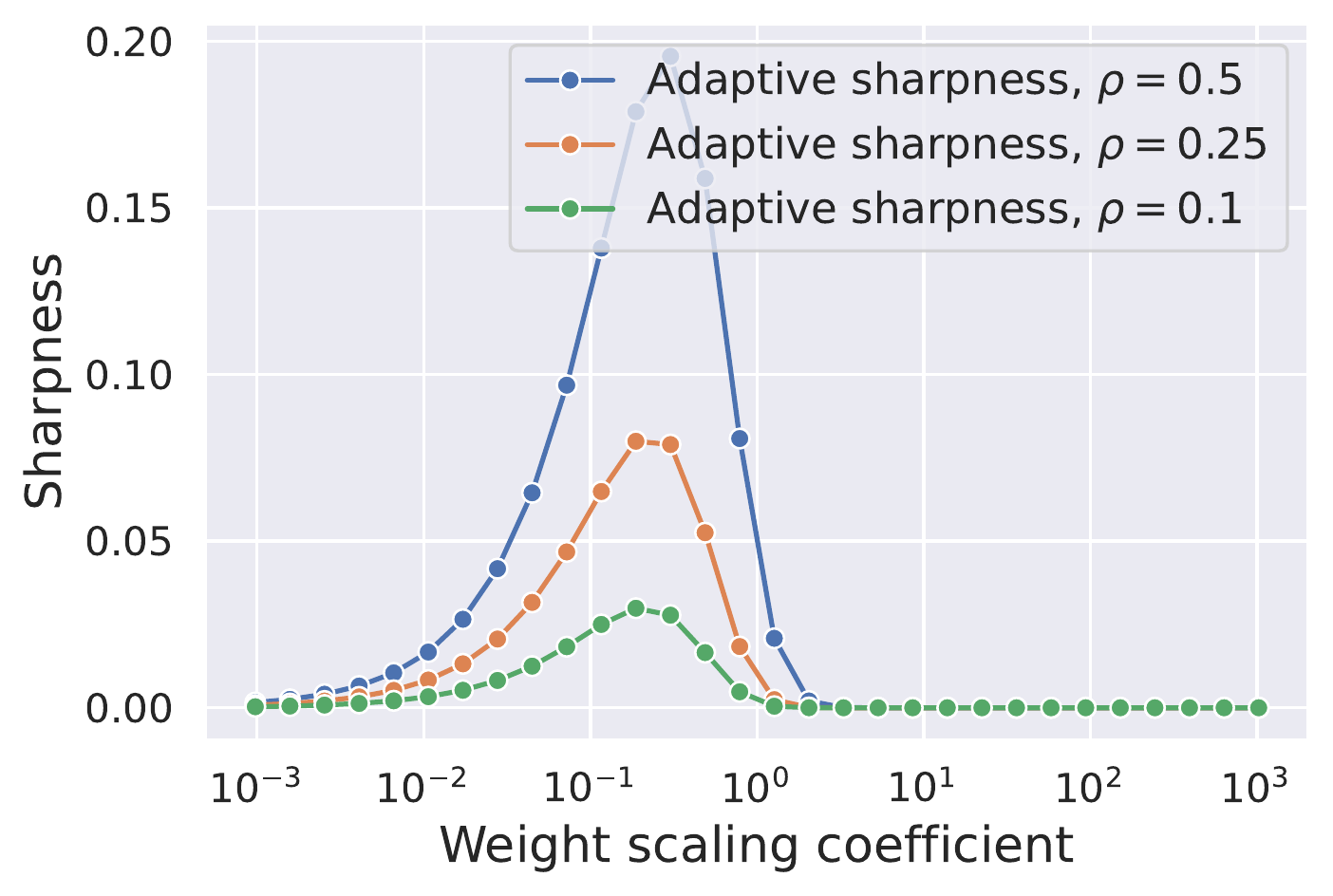}
    \vspace{-3mm}
    \caption{Sensitivity of adaptive sharpness to weight scaling for a linear model that achieves zero training error.} %
    \label{fig:sharpness_linear_model}
\end{figure}

\subsection{How to Compute Worst-Case Sharpness Efficiently?}
Estimation of worst-case sharpness involves solving a constrained maximization problem typically using projected gradient ascent which can be sensitive to its hyperparameters, primarily the step size. To avoid doing extensive grid searches over the hyperparameters of gradient ascent for each model, we choose to use \textit{Auto-PGD} \citep{croce2020reliable} (see Algorithm~\ref{alg:apgd} in Appendix for the precise formulation). Auto-PGD is a \textit{hyperparameter-free} method designed to accurately estimate adversarial robustness by solving a similar optimization problem to worst-case sharpness but over the input space instead of the parameter space. As in $\ell_\infty$ and $\ell_2$ versions of Auto-PGD, for each gradient step, we use gradient-sign and plain-gradient updates, respectively, but we make them proportional to $|\wv|$, to better take into account the geometry induced by elementwise adaptive sharpness. We show in Sec.~\ref{sec:app_cifar10_role_n_iter} in Appendix that as few as $20$ steps are typically sufficient to converge with Auto-PGD.

\section{Sharpness vs. Generalization: Modern Setup}
\begin{figure*}[t] \centering \small
    \tabcolsep=1.1pt
    \begin{tabular}{@{}c@{}}
        \textbf{With logit normalization} \\
        \includegraphics[width=0.99\textwidth]{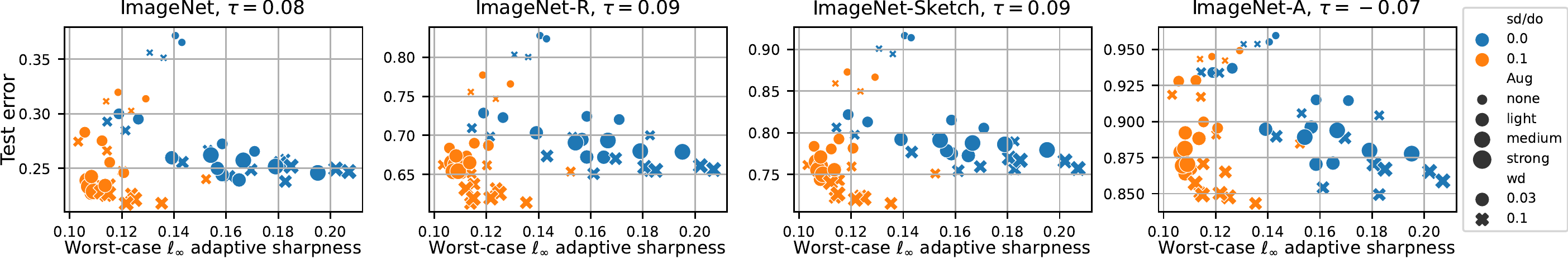} \vspace{1.5mm}\\
        \textbf{Without logit normalization} \\
        \includegraphics[width=0.99\textwidth]{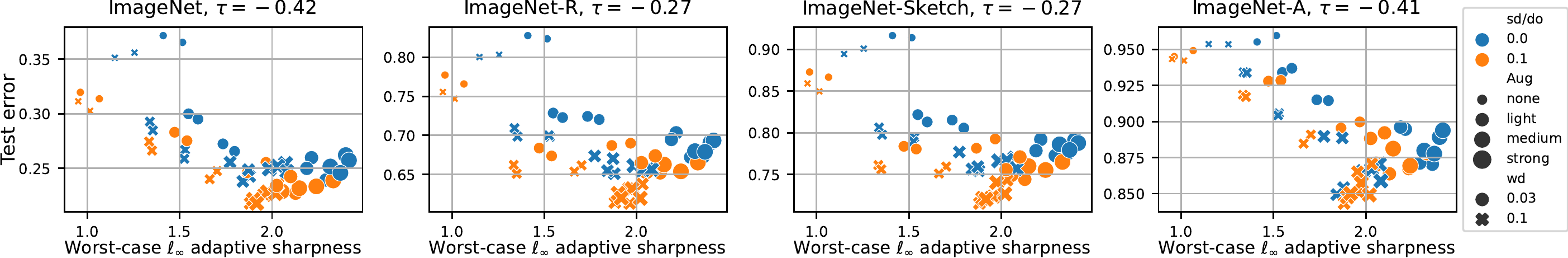}
    \end{tabular}
    \vspace{-2.5mm}
    \caption{\textbf{ViT-B/16 trained from scratch on ImageNet-1k}. 
        We show for 56 models from \citet{steiner2021train} the test error on ImageNet and its OOD variants vs. worst-case $\ell_\infty$ sharpness with (top) or without (bottom) normalization at $\rho=0.002$. The color indicates models trained with stochastic depth (sd) and dropout (do), markers and their size indicate the strength of weight decay (wd) and augmentations (aug), and $\tau$ indicates the rank correlation coefficient from Eq.~\eqref{eq:tau}. Overall, the correlation of sharpness with test error is either close to zero or even negative.}
    \label{fig:IN1k-scratch-main}
\end{figure*}
The current understanding of the relationship between sharpness and generalization is based on experiments on non-residual convolution networks and small datasets like CIFAR-10 and SVHN~\citep{jiang2019fantastic}. 
We revisit here this relationship for state-of-the-art transformers trained from scratch on ImageNet-1k and CLIP / BERT fine-tuned on ImageNet-1k / MNLI.
We explore both in-distribution (ID) and out-of-distribution (OOD) generalization due to the common intuition that flatter models are expected to be more robust~\citep{cha2021swad}. 
We focus on worst-case $\ell_\infty$ adaptive sharpness with low $m$ ($256$) since it appears to be one of the most promising sharpness definitions \citep{kwon2021asam}. 
We compute sharpness with and without logit normalization, and provide \textit{average-case} sharpness for different radii $\rho$ in Appendix.
We focus primarily on the relationship between sharpness and \textit{test error} but we also discuss sharpness vs. \textit{generalization gap} in Sec.~\ref{sec:app_gen_gap} in Appendix.

\myparagraph{Training on ImageNet-1k from scratch.}
To investigate the relationship between sharpness and generalization for large-scale settings, we evaluate ViT models from \citet{steiner2021train}, using ViT-B/16-224 weights. Those were trained from scratch on ImageNet-1k for 300 epochs with different hyperparameter settings, and subsequently fine-tuned on the same dataset for 20.000 steps with 2 different learning rates. The different hyperparameters include augmentations, weight decay, and stochastic depth / dropout, leading to a rich pool of 56 models with test errors ranging from 21.8\% to 37.2\%. 
As shown in Figure~\ref{fig:IN1k-scratch-main} (first column), neither the sharpness measure computed with nor without logit normalization can effectively distinguish model performance. 
Logit-normalized sharpness effectively separates  models with stochastic depth / dropout (sd/do from now on) from those without by grouping them into two distinct clusters (blue and orange). However, these clusters do not correspond to a separation by test error.
For the OOD tasks (ImageNet-R, ImageNet-Sketch, ImageNet-A), %
within each cluster, the models trained with higher weight decay yield lower test error fairly consistently. However, this ranking is not captured by sharpness, which only disentangles the sd/do clusters.
For sharpness without logit normalization, the sd/do clusters are not well-separated. 
Surprisingly, there is a consistent \textit{negative} correlation between sharpness and test error, both on ID and OOD data, 
i.e. the flattest models tend to have the largest test error.
Evaluation for other radii, average-case sharpness measures (App.~\ref{sec:app_IN-1k-extrafigures}) and for ViTs pretrained on IN-21k and fine-tuned on IN-1k (App.~\ref{sec:app_IN21k-IN1k-extrafigures}) similarly suggest that sharpness does not consistently capture generalization properties.
When considering IN-1k and IN-21k pre-trained models together (App.~\ref{sec:app_bothIN21k-IN1k-extrafigures}) we even find similar or \textit{higher} sharpness for significantly better-generalizing models. 
Then, for none of the settings studied, we can confirm either the strong or weak hypotheses.

\myparagraph{Fine-tuning on ImageNet-1k from CLIP.}\label{sec:clip_fine-tuning}
\begin{figure*}[t!] \centering \small
    \begin{tabular}{@{}c@{}}
        \textbf{With logit normalization} \\
        \includegraphics[width=1.0\linewidth]{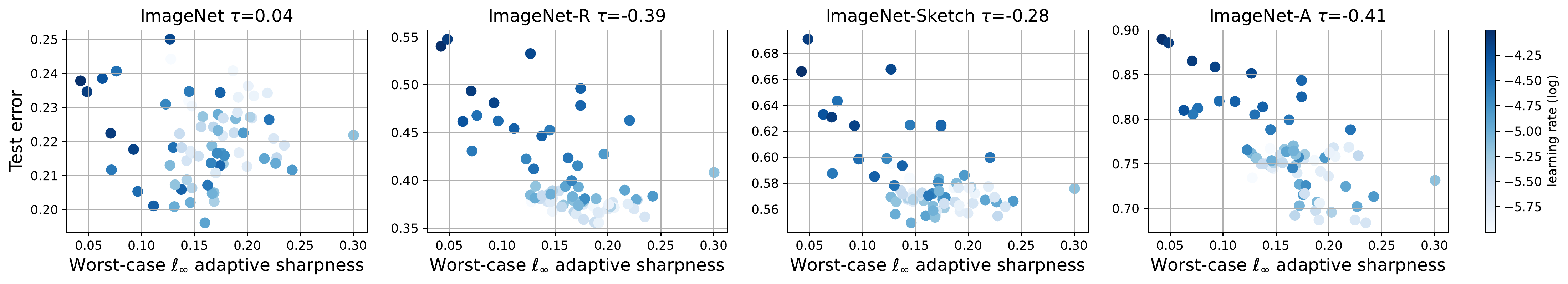}\\
        \textbf{Without logit normalization} \\
        \includegraphics[width=1.0\linewidth]{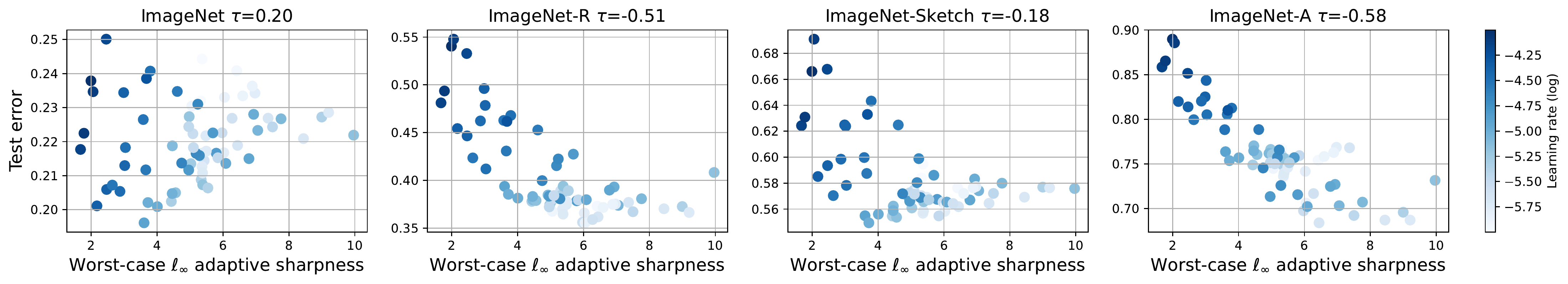}
    \end{tabular}
    \vspace{-3mm}
    \caption{\textbf{Fine-tuning CLIP ViT-B/32 on ImageNet-1k.} We show for 72 models from \citet{wortsman2022model} the test error on ImageNet or its variants (distribution shifts) vs worst-case $\ell_\infty$ sharpness with (top) or without (bottom) normalization at $\rho=0.002$. Darker color indicates larger learning rate used for fine-tuning.}\label{fig:clip_singlerho}
\end{figure*}
We investigate fine-tuning from CLIP~\citep{radford2021learning}, which is a crucial approach due to the popularity of CLIP features~\citep{ramesh2022hierarchical}, its fast training time, and its ability to achieve higher accuracy.
We study the pool of classifiers obtained by \citet{wortsman2022model} who fine-tuned a CLIP ViT-B/32 model on ImageNet multiple times by randomly selecting training hyperparameters such as learning rate, number of epochs, weight decay, label smoothing and augmentations.  This set of $71$ fine-tuned models, along with the base model, allows us to study how well generalization and training hyperparameters are captured by sharpness. 
The leftmost column of Fig.~\ref{fig:clip_singlerho} illustrates that worst-case $\ell_\infty$ adaptive sharpness does not effectively predict which classifiers have the lowest test error on ImageNet. 
Furthermore, there is a consistent negative correlation between sharpness and test error when evaluating classifiers on the distribution shifts ImageNet-R \citep{hendrycks2021many}, ImageNet-Sketch \citep{wang2019learning} and ImageNet-A \citep{hendrycks2021nae} (second to fourth columns). 
We further notice that, in contrast with ImageNet, higher test errors on these datasets go in parallel with higher learning rates used for fine-tuning (darker color in the plots). 
Indeed, smaller learning rates lead to smaller changes in the features of the base CLIP model which are more robust to distribution shifts since they were obtained from a much larger dataset than ImageNet. 
Finally, similar observations hold for the other sharpness definition and radii (App.~\ref{sec:app_finetuning_clip_on_imagenet}).

\begin{figure*}[t!] \centering \small
    \begin{tabular}{@{}c@{}}
        \textbf{With logit normalization} \\\includegraphics[width=1.0\linewidth]{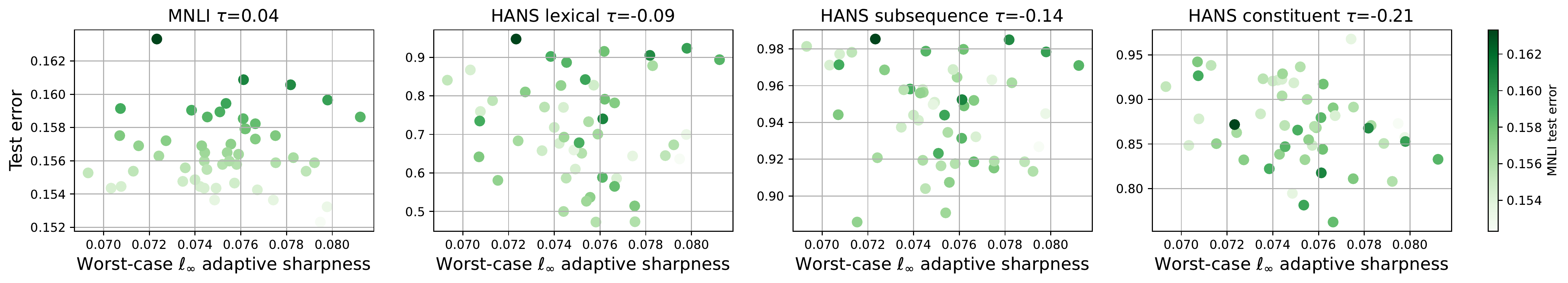}\\
        \textbf{Without logit normalization} \\\includegraphics[width=1.0\linewidth]{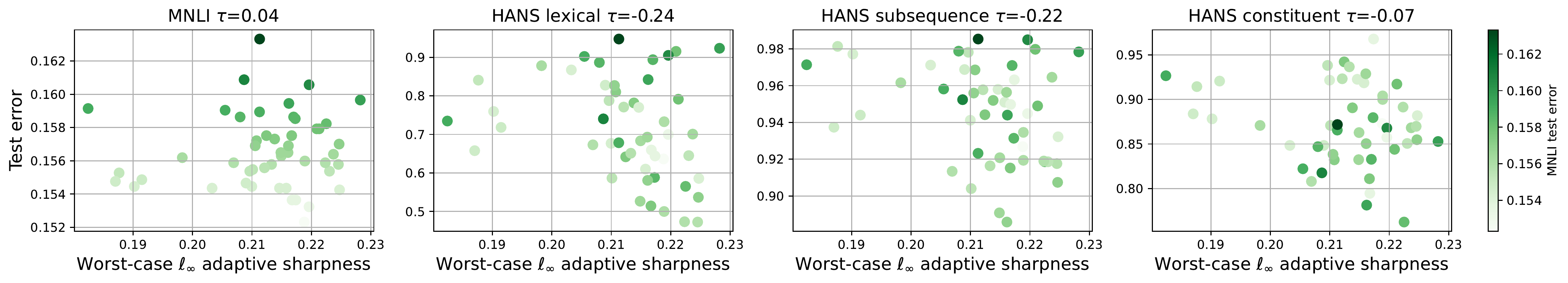}
    \end{tabular}
    \vspace{-3mm}
    \caption{\textbf{Fine-tuning BERT on MNLI.} We show for 50 models  the error on MNLI or out-of-distribution domains (HANS subsets) vs worst-case $\ell_\infty$ sharpness with (top) or without (bottom) normalization at $\rho=0.0005$. Darker color indicates higher test error on MNLI.} \label{fig:mnli_singlerho}
\end{figure*}
\myparagraph{Fine-tuning on MNLI from BERT.}
We explore fine-tuning from BERT~\citep{devlin-etal-2019-bert}, to expand our analysis beyond vision tasks. 
To study the linguistic generalization of multiple classifiers trained on the same dataset, \citet{mccoy-etal-2020-berts} have fine-tuned BERT $100$ times on the Multi-genre Natural Language Inference (MNLI)  dataset \citep{williams2018broadcoverage} varying exclusively the random seed across runs. These random seeds affect the initialization of the classifier and the scanning order of the training data for SGD.
All these classifiers achieve very similar in-distribution generalization, i.e. on MNLI test points, but behave differently on the out-of-distribution tasks represented by the HANS dataset~\citep{mccoy-etal-2019-right}. For example, in one of HANS sub-domains the accuracy of the models ranges from 5\% to 55\%.
We randomly choose $50$ of the $100$ available classifiers, and compute the different measures of sharpness for various radii. Fig.~\ref{fig:mnli_singlerho} shows how the worst-case $\ell_\infty$ adaptive sharpness, with and without logit normalization,  
correlates with test error on MNLI and three HANS tasks.
We observe that the correlation is weak and does not exceed $0.04$, even for datasets like HANS lexical (second column) where test errors vary significantly (between 45\% and 95\%). Moreover, in some cases the correlation is weakly negative suggesting that on average sharper models tend to generalize slightly better. 
Results for other radii can be found in App.~\ref{sec:app_mnli}.

\myparagraph{Summary of the findings.}
To conclude, \textit{none} of the settings studied above support either the strong or weak hypotheses about the role of sharpness.    
Contrary to our expectations, CLIP models fine-tuned on ImageNet suggest that flatter solutions consistently generalize \textit{worse} on OOD data.
Finally, sharpness is not useful to distinguish different solutions found by fine-tuning BERT on MNLI. 
All this evidence suggests that the intuitive ideas about the generalization benefits of flat minima are \textit{not supported in the modern settings}.

\section{Why Doesn't Sharpness Correlate Well with Generalization?}
The goal of this section is to clarify the disconnect between sharpness and generalization in the modern setup.
We first revisit sharpness in a controlled environment on CIFAR-10, then explore the different sharpness definitions for a simple model where generalization is well understood. 

\subsection{The Role of Sharpness in a Controlled Setup} \label{subsec:controlled_setup_cifar10}
\myparagraph{Motivation.}
We consider three potential explanations for why sharpness does not correlate well with generalization in the previous section: 
(1) the use of transformers instead of typical convolutional networks, 
(2) the use of much larger datasets (ImageNet vs. CIFAR-10), 
(3) the need to measure sharpness closer to a global minimum. 
We thus train $200$ ResNets-18 and $200$ ViTs on CIFAR-10 in a setting similar to \citet{jiang2019fantastic} and \citet{kwon2021asam}, and evaluate sharpness only for models that reach \textit{at most $1\%$ training error}. 
This is in contrast to the ImageNet models from the previous section that are not necessarily trained to $\approx0\%$ training error as it is usually not necessary in practice. 
Being closer to a global minimum ensures that the worst-case sharpness captures more the curvature by preventing first-order terms from dominating in Eq.~\ref{eq:max_sharpness_small_rho}.

\begin{figure*}[t!]
    \centering \small
    \tabcolsep=1.1pt
    \begin{tabular}{cccc}
        \multicolumn{2}{c}{\textbf{ResNets-18 with logit normalization}} & \multicolumn{2}{c}{\textbf{ViTs with logit normalization}} \\
        \includegraphics[width=0.245\textwidth]{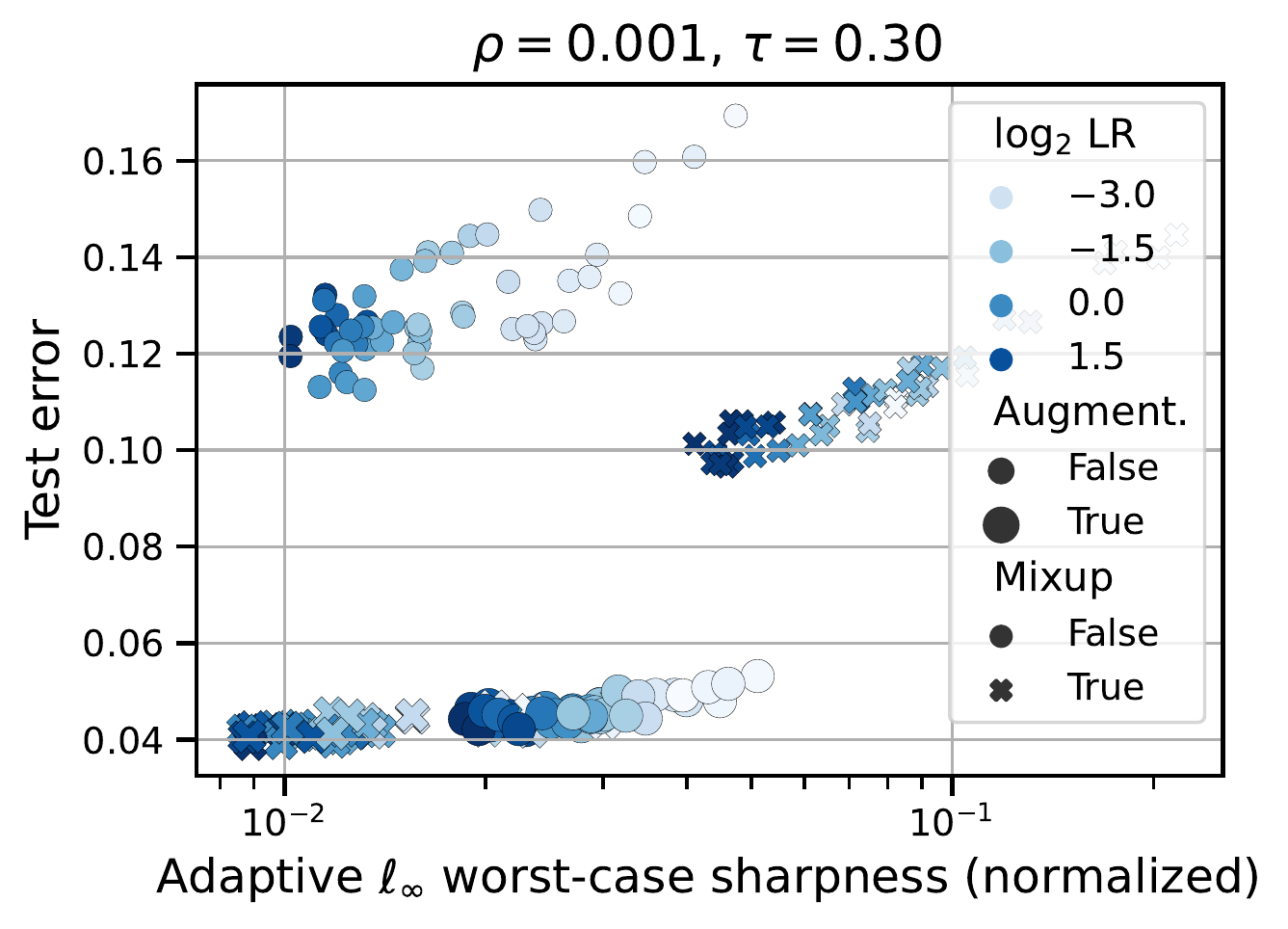} &
        \includegraphics[width=0.25\textwidth]{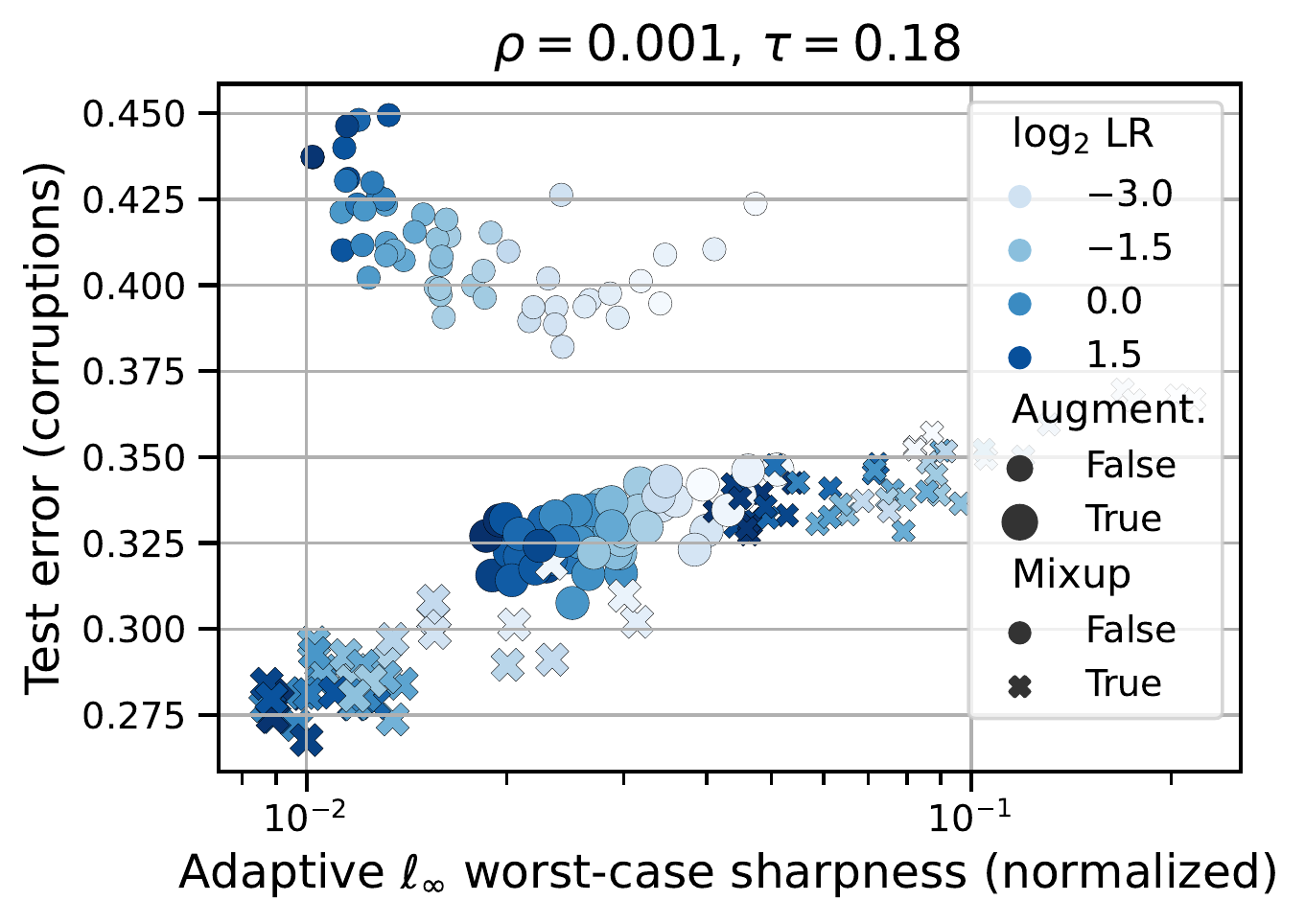} &
        \includegraphics[width=0.245\textwidth]{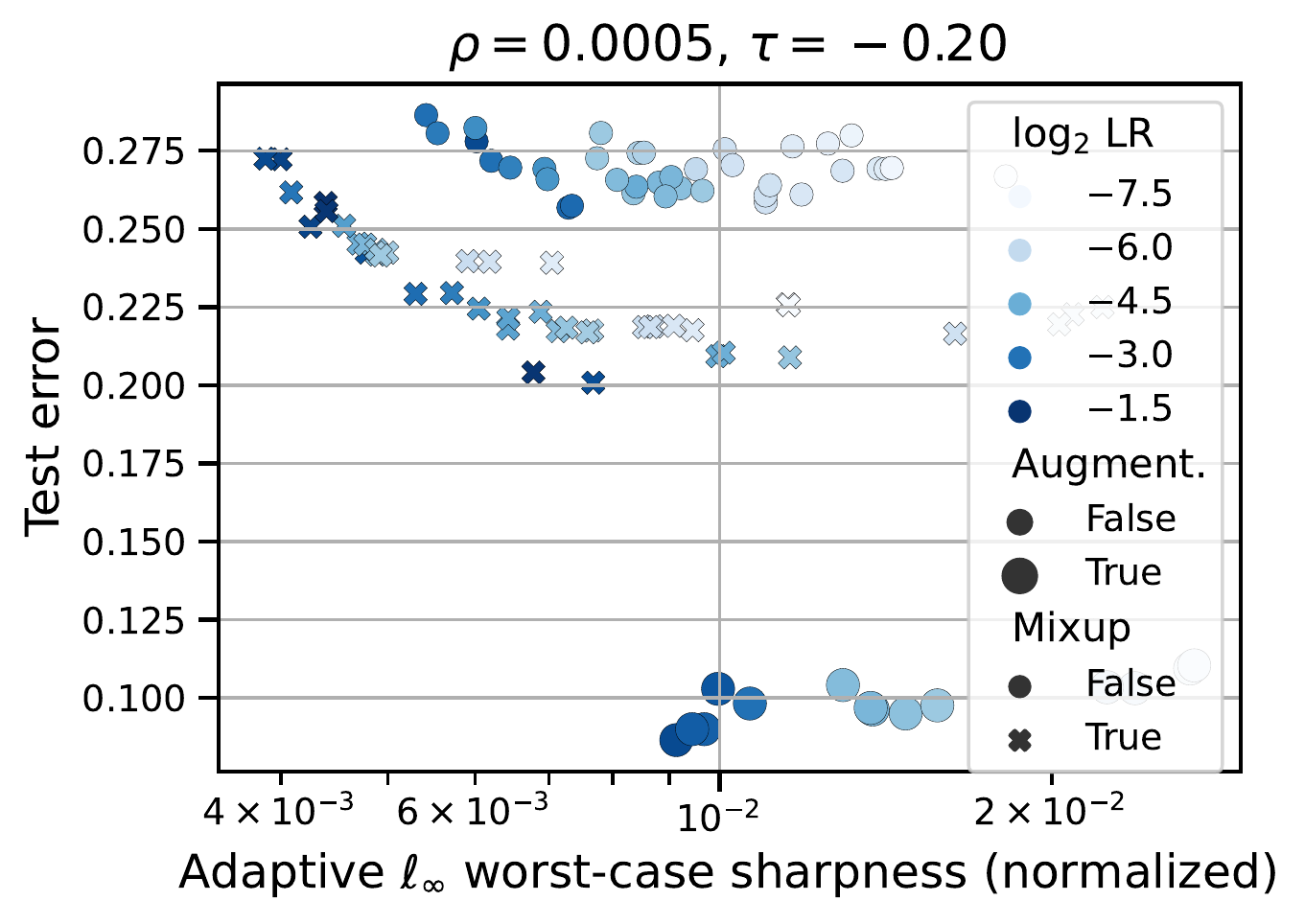} &
        \includegraphics[width=0.24\textwidth]{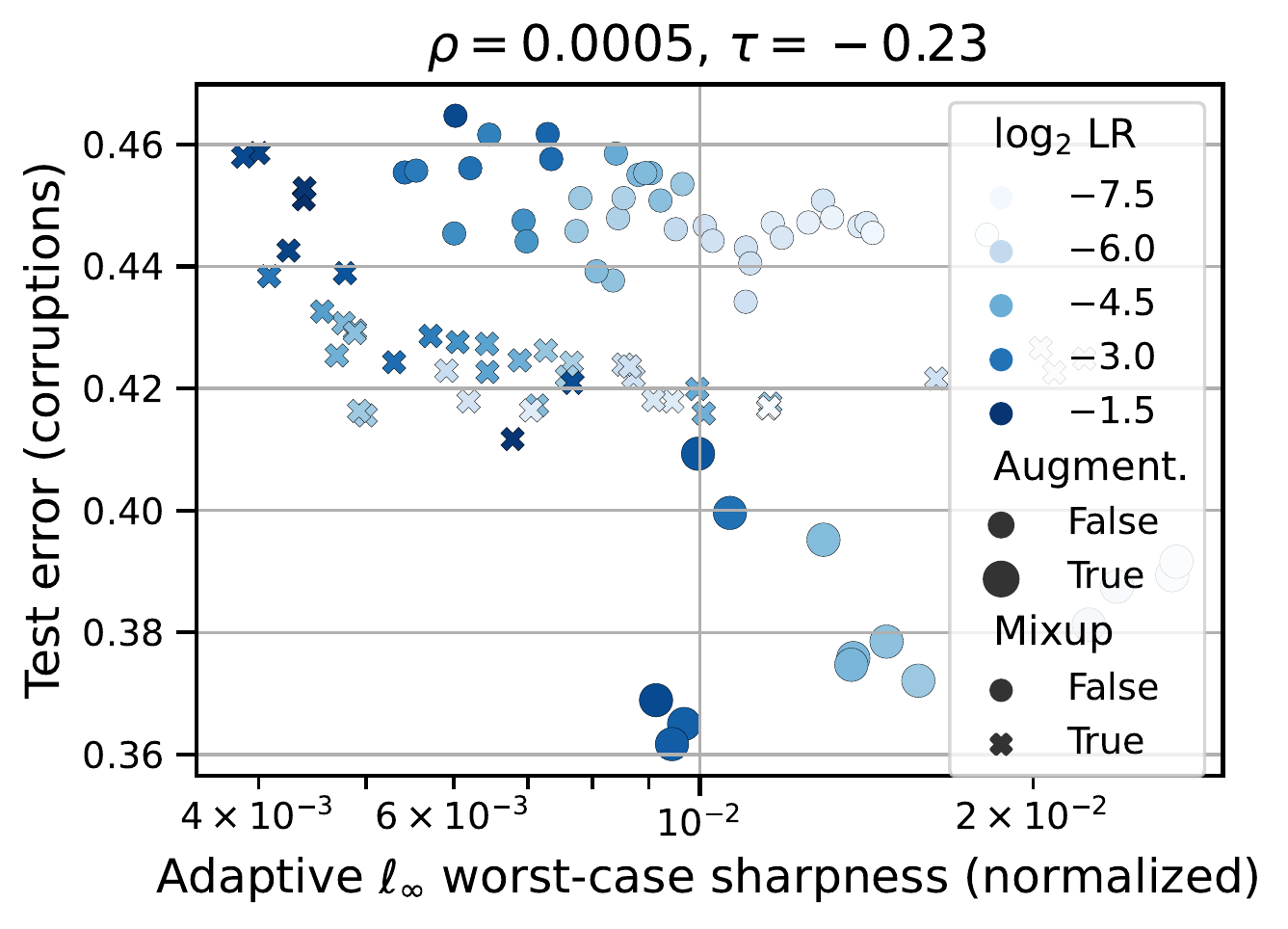} \\
    \end{tabular} 
    \begin{tabular}{cccc}
        \multicolumn{2}{c}{\textbf{ResNets-18 without logit normalization}} & \multicolumn{2}{c}{\textbf{ViTs without logit normalization}} \\
        \includegraphics[width=0.245\textwidth]{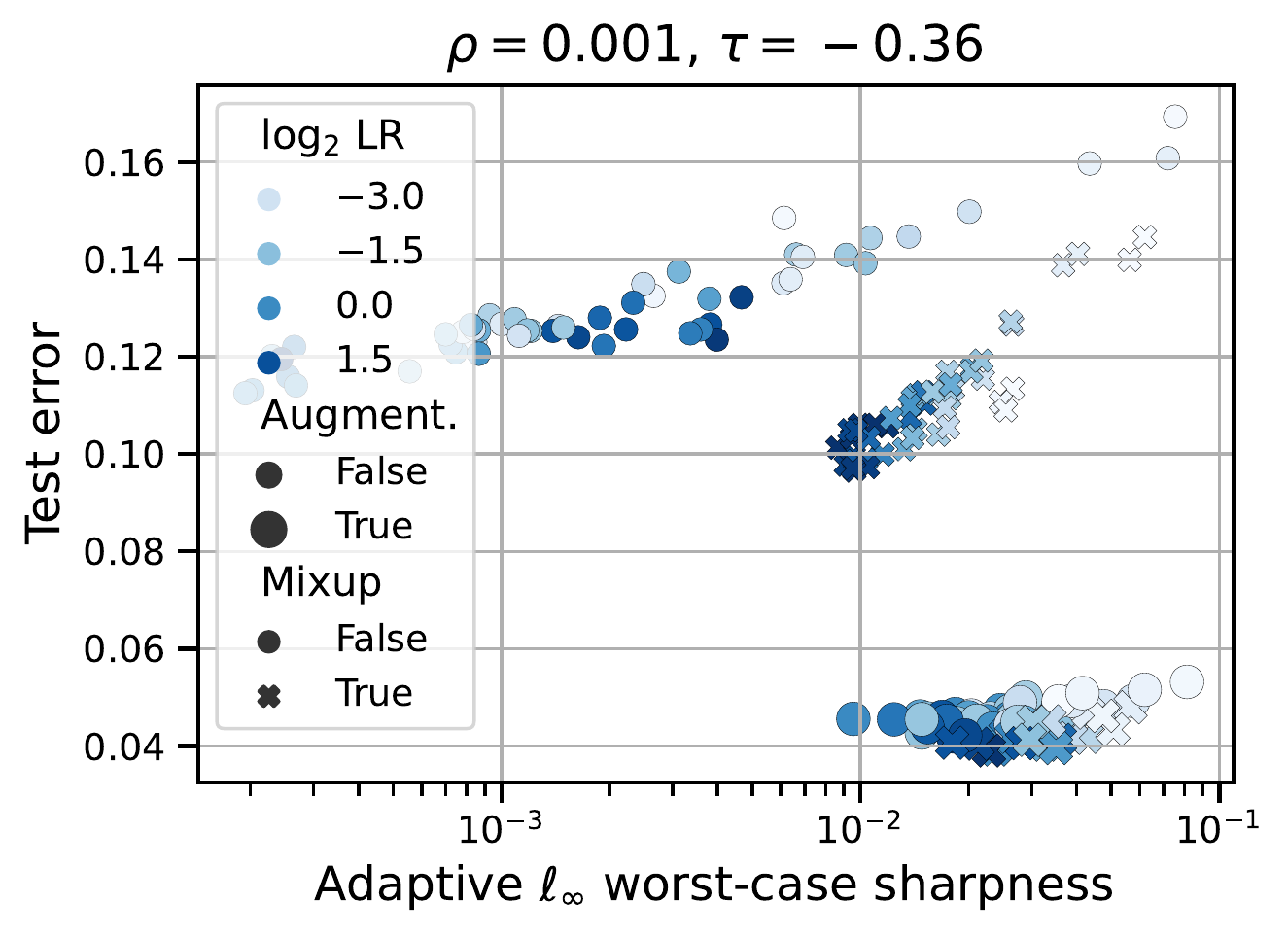} &
        \includegraphics[width=0.25\textwidth]{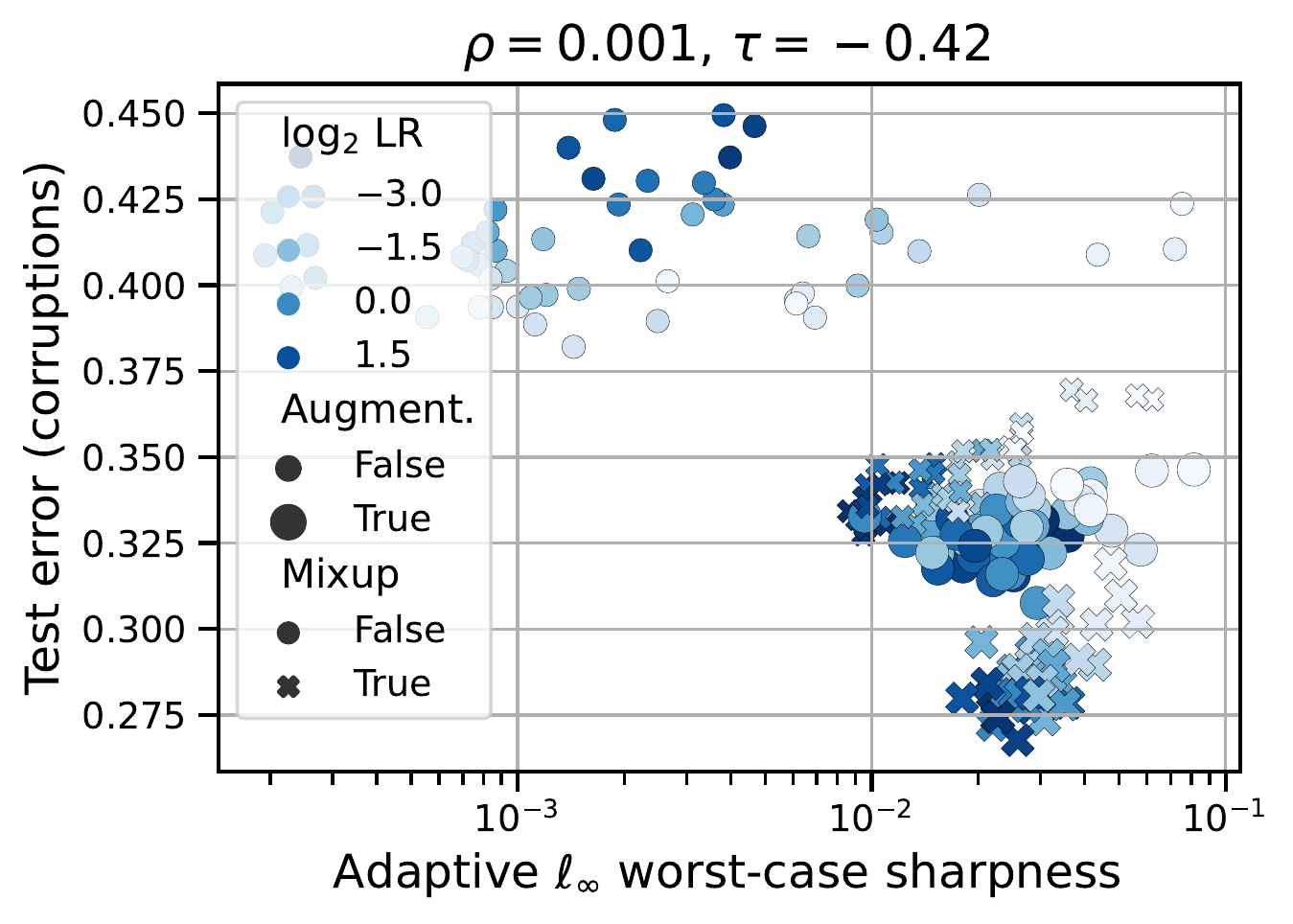} &
        \includegraphics[width=0.245\textwidth]{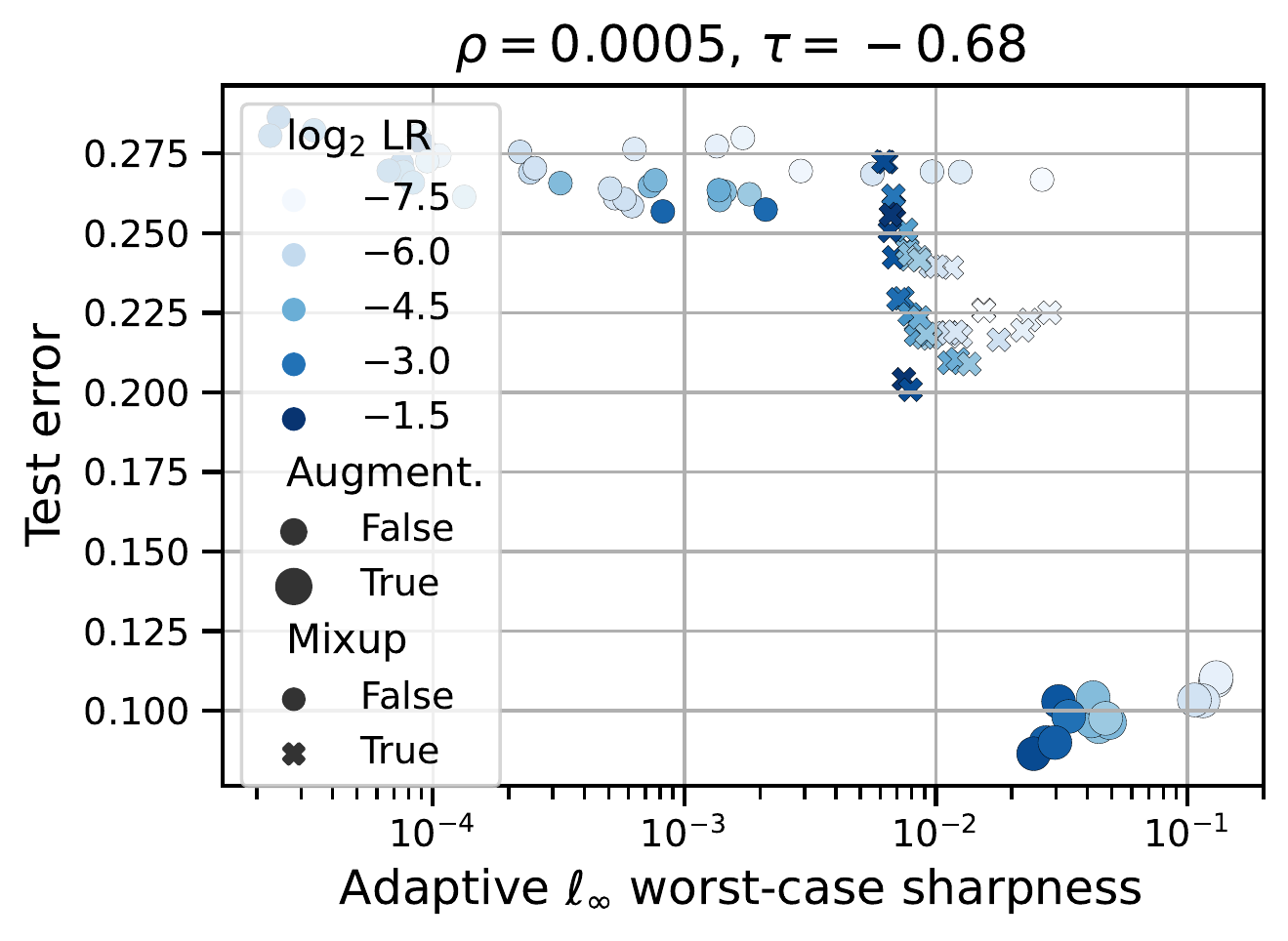} &
        \includegraphics[width=0.24\textwidth]{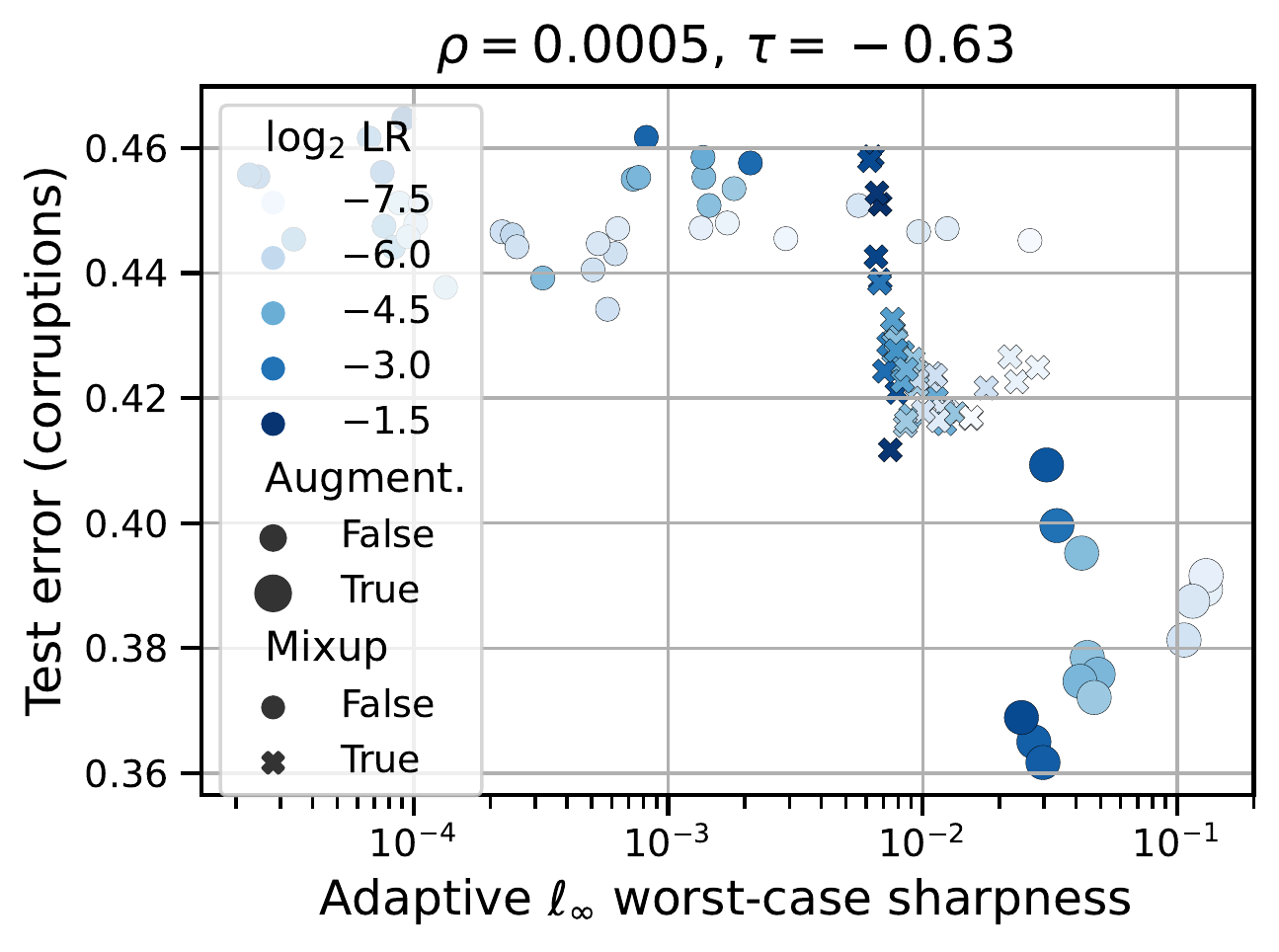} \\
    \end{tabular}
    \vspace{-3mm}
    \caption{\textbf{Training from scratch on CIFAR-10.} Normalized and unnormalized $\ell_\infty$ adaptive sharpness vs. standard and OOD test error on common corruptions for ResNets-18 and ViTs. For other sharpness definitions ($\ell_2$/$\ell_\infty$, average-/worst-case, etc) and multiple sharpness radii $\rho$, see App.~\ref{sec:app_cifar10_role_defns_radii}.}
    \label{fig:cifar10_adaptive_sharpness}
\end{figure*}

\myparagraph{Setup.}
We train models for $200$ epochs using SGD with momentum and linearly decreasing learning rates after a linear warm-up for the first $40\%$ iterations. We use the SimpleViT architecture from the \texttt{vit-pytorch} library which is a modification of the standard ViT \citep{dosovitskiy2020image} with a fixed positional embedding and global average pooling instead of the CLS embedding. We vary the learning rate, $\rho \in \{0, 0.05, 0.1\}$ of SAM \citep{foret2021sharpnessaware}, mixup ($\alpha=0.5$) \citep{zhang2017mixup}, and standard augmentations combined with RandAugment \citep{cubuk2019randaugment}. We only show models that have $\leq 1\%$ training error.

\myparagraph{Observations.}
We benchmark $12$ different sharpness definitions: $\ell_2$ vs. $\ell_\infty$, average- vs. worst-case, standard vs. adaptive, with vs. without logit normalization, and consider different perturbation radii $\rho$. We report most of these results in App.~\ref{sec:app_cifar10_extra_figures} and here highlight only $\ell_\infty$ adaptive sharpness in Fig.~\ref{fig:cifar10_adaptive_sharpness}.
We observe that for ResNets, there is a strong correlation between sharpness and test error but \textit{only within each subgroup of training parameters} such as augmentations and mixup.
Importantly, sharpness does not correctly capture generalization between different subgroups leading to low positive or negative correlation ($0.30$ and $-0.36$).
For ViTs, we do not observe strong positive correlation even within each subgroup (in fact, without logit normalization the correlation is noticeably negative $-0.68$), and many models with an order of magnitude difference in sharpness can have the same test error.
Moreover, we do not consistently observe that models with the lowest sharpness generalize best.
For OOD generalization on common image corruptions \citep{hendrycks2019robustness}, the trend is even less clear and the subgroups are mixed.
We note that similar conclusions hold for other sharpness radii $\rho$ and definitions which we show in App.~\ref{sec:app_cifar10_role_defns_radii}. Moreover, in App.~\ref{sec:app_cifar10_extra_figures} we also analyze the role of data points used to evaluate sharpness (with and without augmentations), number of iterations of Auto-PGD for worst-case sharpness, and different $m$ in worst-case $m$-sharpness \citep{foret2021sharpnessaware}.
In conclusion, even in this controlled small-scale setup that includes more established architectures like ResNets, we find no empirical support to either the strong or weak hypothesis.

\begin{figure}[t!] \centering\small
    \tabcolsep=1.1pt
    \newl=.495\columnwidth
    \begin{tabular}{c c} 
        \hspace{3mm} \textbf{ResNets-18} & \hspace{3mm} \textbf{Vision transformers} \\
        \includegraphics[width=\newl]{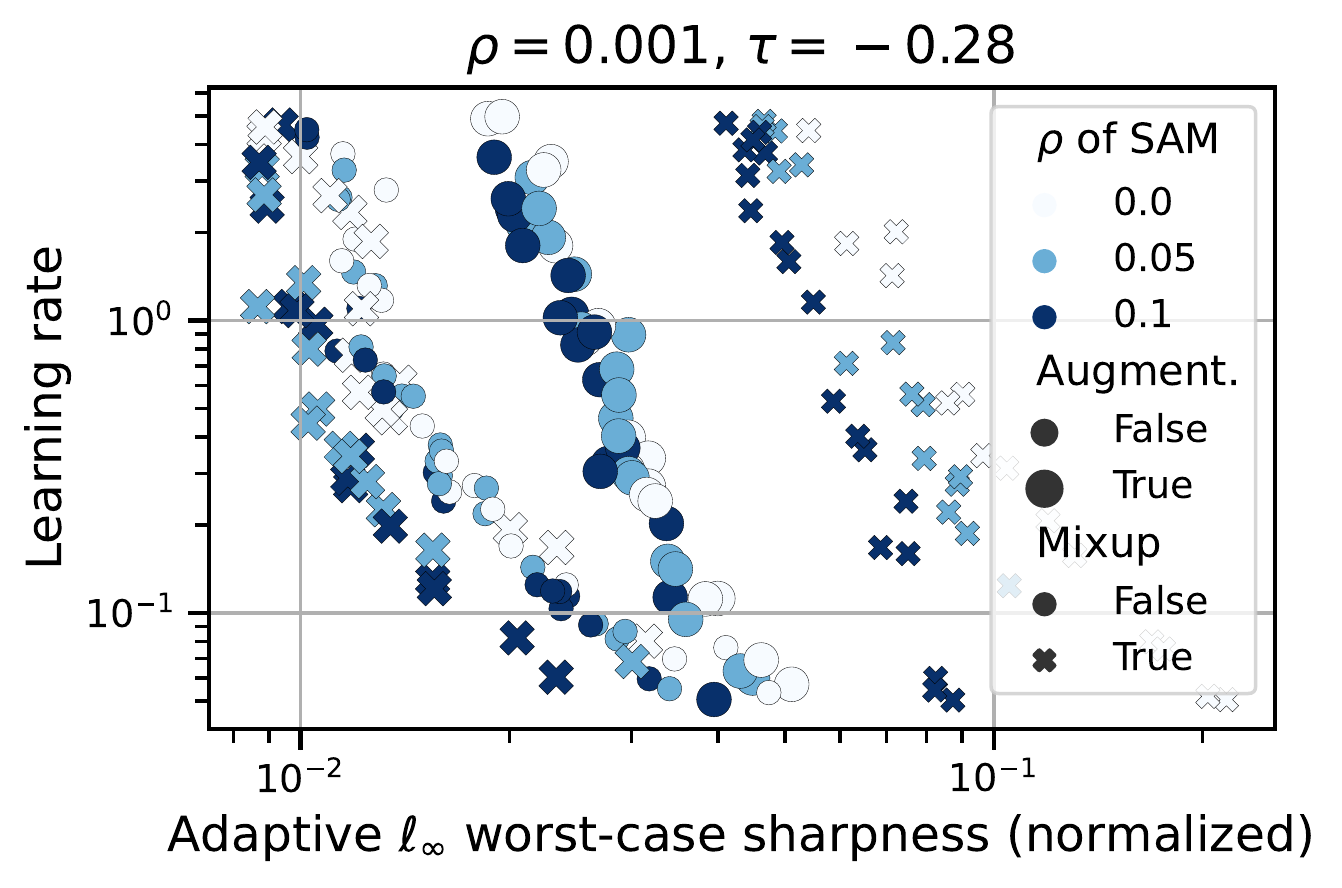} &
        \includegraphics[width=\newl]{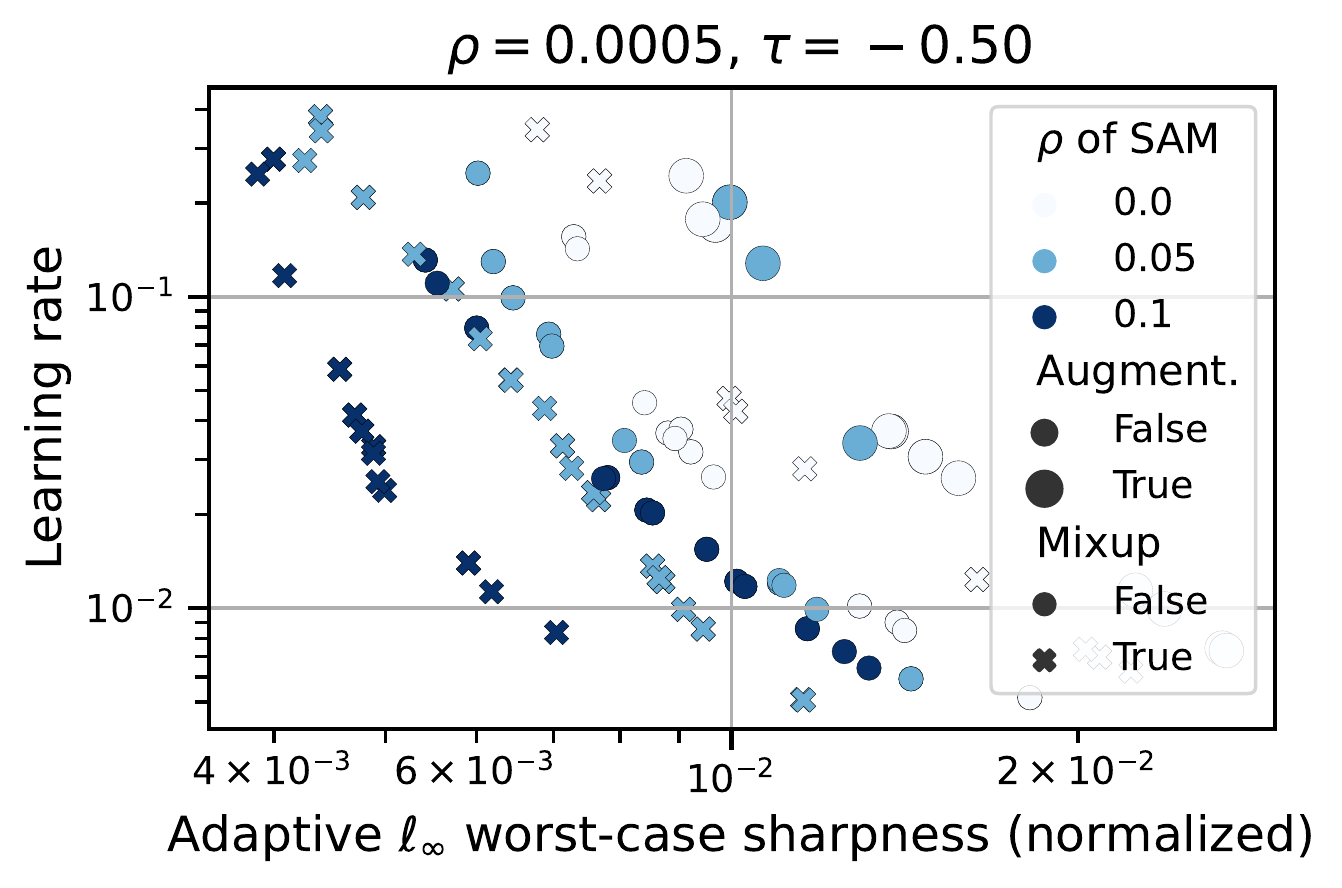}\\
        \includegraphics[width=\newl]{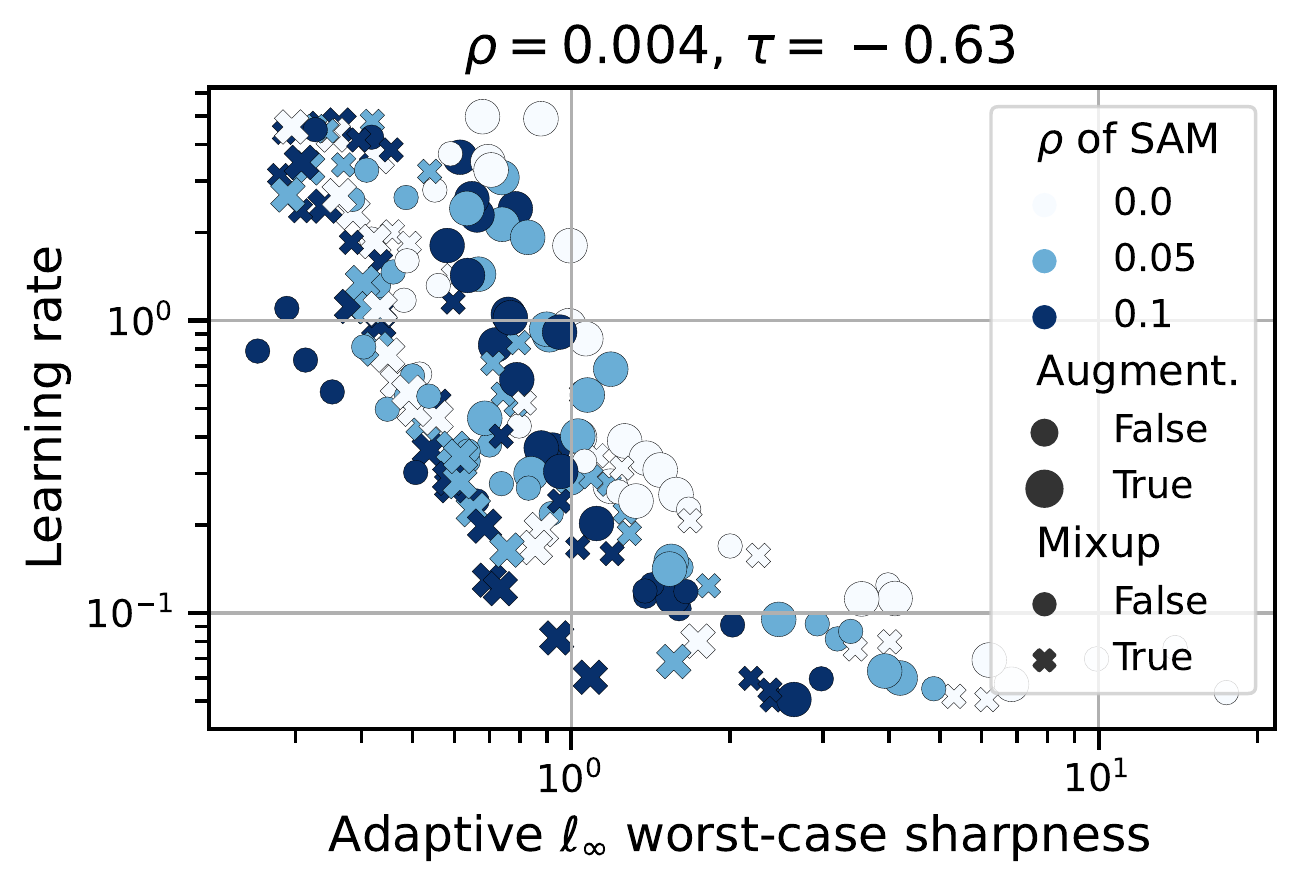} &
        \includegraphics[width=\newl]{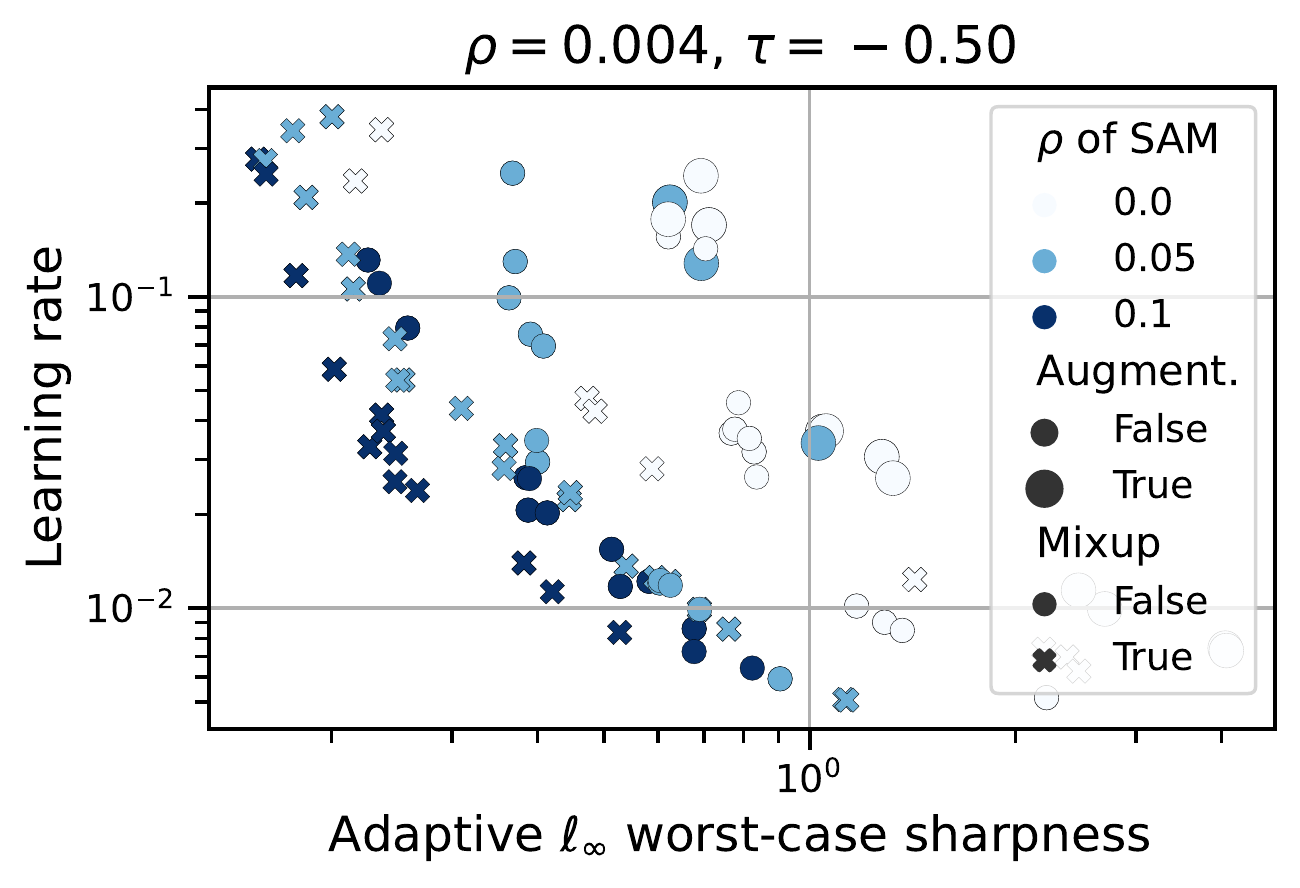}\\
    \end{tabular}
    \vspace{-2mm}
    \caption{\textbf{Training from scratch on CIFAR-10.} Sharpness negatively correlates with the \textit{learning rate}, especially within each subgroup defined by the same values of \texttt{augment} $\times$ \texttt{mixup}.} 
    \label{fig:cifar10_lr_sharpness}
\end{figure}
\myparagraph{Sharpness captures the learning rate even when it is not helpful to predict generalization.} 
Prior works have shown a robust link between the learning rate of first-order methods and standard sharpness definitions such as $\lm(\nabla^2 L(\wv))$ and $\tr(\nabla^2 L(\wv))$ \citep{cohen2021gradient, wu2022does}. However, the connection between the learning rate and \textit{adaptive} sharpness remains elusive, so we investigate it empirically in Fig.~\ref{fig:cifar10_lr_sharpness}.
For both ResNets and ViTs, we observe a significant negative correlation, especially within each subgroup defined by the same values of \texttt{augment} $\times$ \texttt{mixup}. 
This is however \textit{not} always a desirable property for predicting generalization. On the one hand, monotonically capturing the learning rates can be useful in setting like training ResNets from scratch \citep{li2019towards}. On the other hand, large learning rates do not preserve the original features and can significantly harm OOD generalization for fine-tuning~\citep{wortsman2022robust}. 
We also see a negative correlation between sharpness and learning rate for CLIP models fine-tuned on ImageNet in Fig.~\ref{fig:imagenet_clip_lr_sharpness}, shown in App.~\ref{sec:app_finetuning_clip_on_imagenet}.
However, for these models, we do not have subgroups as clearly defined as for the CIFAR-10 models so we cannot see a more fine-grained trend. 
Finally, we note that whenever learning rates have a beneficial regularization effect, it is closely tied to the amount of stochastic noise in SGD \citep{jastrzkebski2017three, andriushchenko2022sgd}. This amount is equally determined by other hyperparameters like batch size, momentum coefficient, or weight decay for normalized networks (see \citet{li2020reconciling} for a discussion on the intrinsic learning rate). These parameters are commonly varied in studies on sharpness vs. generalization \citep{jiang2019fantastic, kwon2021asam, bisla2022low} but all reflect essentially the same underlying trend.

\subsection{Is Sharpness the Right Quantity in the First Place? Insights from Simple Models} \label{sec:simple_models}
Here, we study the link between sharpness and generalization for sparse regression with \textit{diagonal linear networks} for which the $\ell_1$ norm of the solution is predictive of generalization. 
This simple model suggests that sharpness measures which are universally correlated with better generalization across all possible data distributions simply do not exist.

Diagonal linear networks are defined as predictors  $\langle \xv, \betav\rangle$ with parameterization $\betav = \uv \odot \vv$ for weights $\wv = \left[\begin{smallmatrix}\uv\\\vv\end{smallmatrix}\right] \in \R^{2 d}$. They have been widely studied as the simplest non-trivial neural network \citep{woodworth2020kernel, pesme2021implicit}.
We consider an overparametrized sparse regression problem for a data matrix $\vX \in \R^{n \times d}$ and label vector $\yv$:
\begin{align}\label{eq:dln_objective}
L(\wv) :=  \| \vX ( \uv \odot \vv) - \yv\|_2^2, 
\end{align}
for which the ground truth $\betav^*$ is a sparse vector (i.e., most coordinates are zeros) and there exist many solutions $\wv$ such that $L(\wv)=0$. Assuming whitened data $\vX^\top \vX = \vI$ and that $\wv$ is a global minimum, 
the Hessian of the loss $L$ simplifies to 
\begin{align*}
\nabla^2 L(\wv) = 
\begin{bmatrix}
\diag(\vv \odot \vv) & \diag(\uv \odot \vv)  \\
\diag(\uv \odot \vv) & \diag(\uv \odot \uv)  
\end{bmatrix}.
\end{align*}
We first consider standard definitions of \textit{local} (i.e., $\rho \to 0$) sharpness for which we have a closed-form expression. 
The average-case local sharpness is equal to  $\tr(\nabla^2 L(\wv)) = \sum_{i=1}^d u_i^2 + v_i^2$ while the worst-case local sharpness at a minimum is $\lm(\nabla^2 L(\wv)) = \maxop_{1 \leq i \leq d} v_i^2 + u_i^2$ (see Sec.~\ref{sec:app_proofs_dln} for details). 
Importantly, both average- and worst-case local sharpness are not invariant under $\alpha$-reparametrization $(\alpha \uv, \vv / \alpha)$ while the predictor $\betav=\uv\odot \vv$ is.  This fact emphasizes the need for a measure of the sharpness that adjusts to the changing scale of the parameters as the adaptive sharpness.
Indeed, with the carefully selected elementwise scaling $c_i = \sqrt{|v_i| / |u_i|} $ for $1 \leq i \leq d$ and $c_i = \sqrt{|u_i| / |v_i|}$ for $d < i \leq 2d$, we obtain for the average-case and worst-case adaptive local sharpness 
\begin{align*}
&S_{avg}^\rho(\wv, \cv) = \frac{1}{2} \!\sum_{i=1}^d\!u_i^2 |v_i| / |u_i| + \frac{1}{2}\!\sum_{i=1}^d\!v_i^2 |u_i| / |v_i| =\| \betav \|_1,\\ &S_{max}^\rho(\wv, \cv) = \maxop_{1 \leq i \leq d} |u_i| |v_i| = \| \betav \|_\infty.
\end{align*} 
We first note that both definitions of adaptive sharpness are invariant under $\alpha$-reparametrization as they only depend on the predictor $\betav$. However, average and worst-case sharpness do not capture the same properties of $\betav$. In particular, $\|\betav\|_1$ is a generalization measure that correctly captures the sparsity of the linear predictor which is a good indicator of generalization for a \textit{sparse} $\betav^*$. In contrast, $\|\betav\|_\infty$ is a generalization measure that is more suitable to capture how uniform the weights of $\betav$ are which is a good predictor of generalization for a \textit{dense} $\betav^*$. 
Finally, we note that using $\cv = \wv$ in adaptive sharpness would instead lead to $\|\betav\|_2^2$ and $\|\betav\|_\infty^2$ that would have a different interpretation. This simple model highlights that the sharpness definition that correlates well with generalization is data-dependent and in general $S_{avg}$ and $S_{max}$ capture very different trends. 
\begin{figure}[t]
    \centering
    \begin{subfigure}[t]{.235\textwidth}
        \includegraphics[width=1.0\columnwidth]{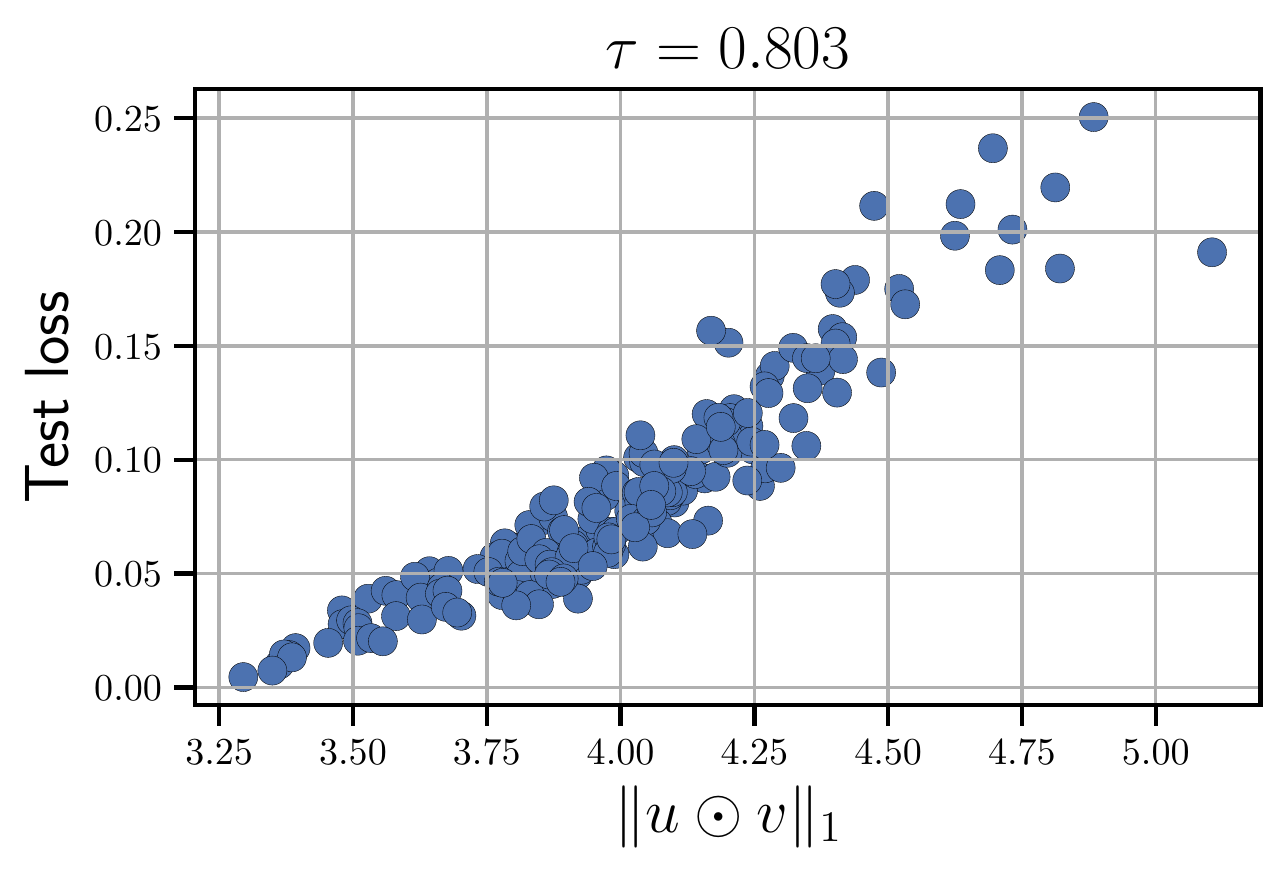}
    \end{subfigure}
    \begin{subfigure}[t]{.235\textwidth}
        \includegraphics[width=1.0\columnwidth]{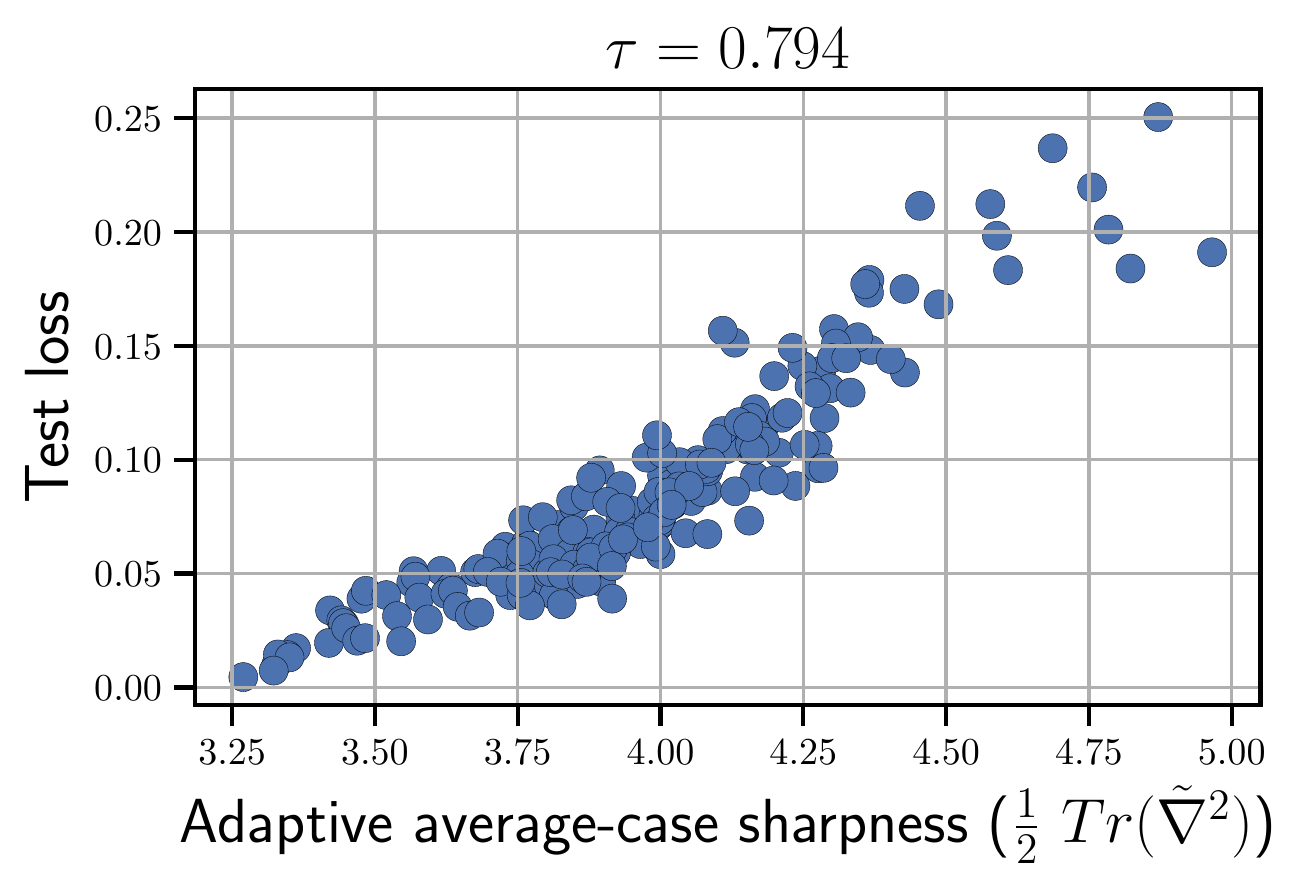}
    \end{subfigure}
    
    \begin{subfigure}[t]{.235\textwidth}
        \includegraphics[width=1.0\columnwidth]{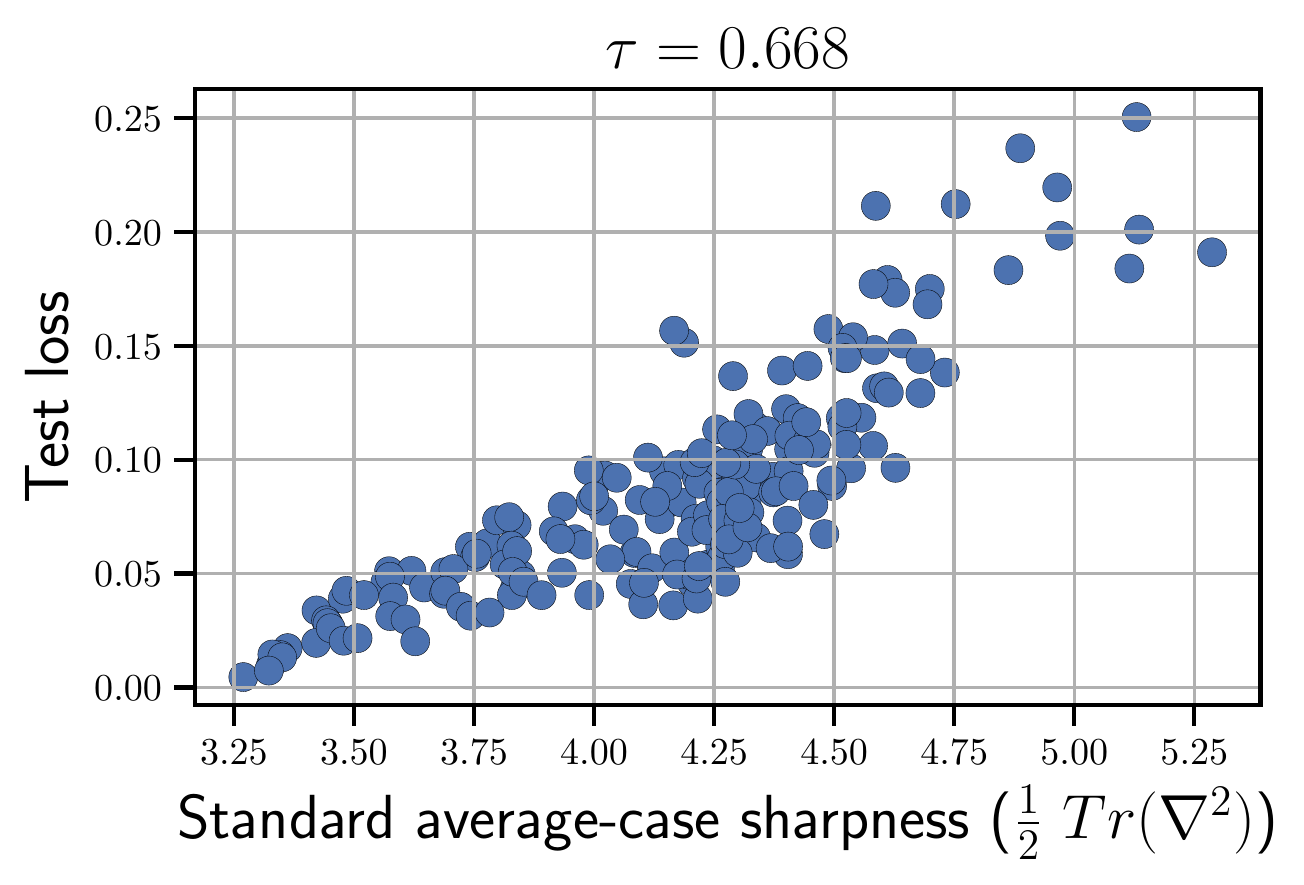}
    \end{subfigure}
    \begin{subfigure}[t]{.235\textwidth}
        \includegraphics[width=1.0\columnwidth]{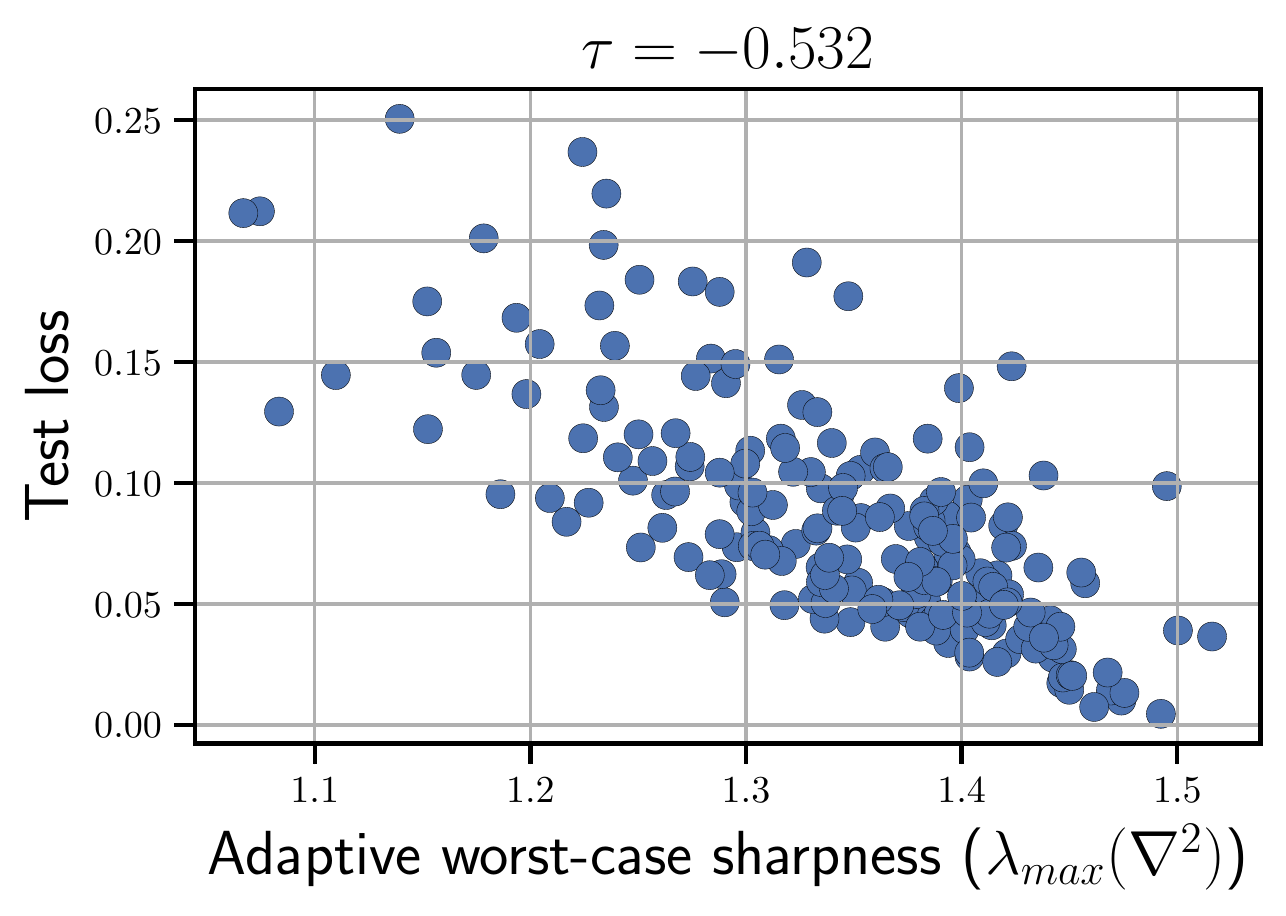}
    \end{subfigure}
    \vspace{-3mm}
    \caption{\textbf{Different generalization measures for diagonal linear networks}. $\tilde{\nabla}^2$ denotes the rescaled Hessian corresponding to adaptive sharpness.}
    \label{fig:diagonal_nets}
\end{figure}

To further illustrate this point, we train $200$ diagonal linear networks to $10^{-5}$ training loss on a sparse regression task ($d=200$ with $90\%$ sparsity) with different learning rates and random initializations. 
We show the results in Fig.~\ref{fig:diagonal_nets} which illustrate that 
(1) $\| \uv \odot \vv \|_1$ is approximated well by $\frac{1}{2} \tr(\tilde{\nabla}^2 L(\wv))$, 
(2) $\tr(\tilde{\nabla}^2 L(\wv))$ correlates better than $\tr(\nabla^2 L(\wv))$ so the adaptive part is important,
(3) the relationship between $\tr(\tilde{\nabla}^2 L(\wv))$ and $\lm(\tilde{\nabla}^2 L(\wv))$ can be even reverse showing that different sharpness definitions capture totally different trends. We also note that even with the right definition of sharpness, the correlation is not perfect (around $\tau=0.8$) and there is always some non-negligible gap in predicting the test loss. 
Overall, we conclude that finding a sharpness definition that correlates well with generalization requires understanding both the role of the data distribution and its interaction with the architecture. It is possible in very simple cases but appears extremely challenging for complex architectures like vision transformers on complex real-world datasets like ImageNet.

\section{Conclusions}
Our results suggest that even reparametrization-invariant sharpness is \textit{not} a good indicator of generalization in the modern setting. 
While there definitely exist restricted settings where correlation between sharpness and generalization is significantly positive (e.g., for ResNets on CIFAR-10 with a specific combination of augmentations and mixup), it is not true anymore when we compare all models \textit{jointly}. Moreover, the correlation, even within subgroups of models defined by augmentations, is much lower for vision transformers. 
Thus, we believe it is important to rethink the intuitive understanding of sharpness based on the geometric intuition about the shift of the loss surface. %
Moreover, our findings suggest that one should avoid blanket statements like \textit{``flatter minima generalize better''} since even when they are only intended to imply \textit{correlation}, their correctness still depends on a number of factors such as data distribution, model family, or initialization schemes (i.e., random vs. from pretrained weights).

\section*{Acknowledgements}
M.A. was supported by the Google Fellowship and Open Phil AI Fellowship. M.M. and M.H. were supported by the Carl Zeiss Foundation in the project "Certification and Foundations of Safe Machine Learning Systems in Healthcare". 
We thank David Stutz for very fruitful discussions at the initial stage of the project, Jana Vuckovic for experiments on sharpness that helped us to shape the project and Aditya Varre for discussions on sharpness for diagonal networks.

\bibliography{literature}
\bibliographystyle{icml2023}

\newpage
\appendix
\onecolumn

\begin{center}
    \ \\
    \Large\textbf{Appendix}
\end{center}

The appendix is organized as follows:
\begin{itemize}
    \item Sec.~\ref{sec:app_omitted_proofs}: omitted derivations for sharpness when $\rho \to 0$, first for the general case and then specifically for diagonal linear networks.
    \item Sec.~\ref{sec:app_gen_gap}: figures with correlation between sharpness and \textit{generalization gap}. We observe a similar trend between sharpness and \textit{generalization gap} as between  sharpness and \textit{test error} which is reported in the main part. 
    \item Sec.~\ref{sec:app_IN-1k-extrafigures}: additional figures about ViTs from \citet{steiner2021train} trained with different hyperparameter settings on ImageNet-1k. We observe that different sharpness variants are not predictive of the performance on ImageNet and the OOD datasets, typically only separating models by stochastic depth / dropout, but not ranking them according to generalization, and often even yielding a negative correlation with OOD test error.
    \item Sec.~\ref{sec:app_IN21k-IN1k-extrafigures}: figures about ViTs from \citet{steiner2021train} pre-trained on ImageNet-21k and then fine-tuned on ImageNet-1k. The observations are very similar to those for training on ImageNet-1k from scratch: sharpness variants are not predictive of the performance on ImageNet, and they often lead to a negative correlation with OOD test error.
    \item Sec.~\ref{sec:app_bothIN21k-IN1k-extrafigures}: figures for combined analysis of ViTs from \citet{steiner2021train} both with and without ImageNet-21k pre-training. We find the better-generalizing models pretrained on ImageNet-21k to have significantly higher worst-case sharpness and roughly equal or higher logit-normalized average-case adaptive sharpness, underlining that the models' generalization properties resulting from different pretraining datasets are not captured.
    \item Sec.~\ref{sec:app_finetuning_clip_on_imagenet}: additional details and figures for CLIP models fine-tuned on ImageNet. We observe that sharpness variants are not predictive of the performance on ImageNet and ImageNet-V2. Moreover, there is in most cases a negative correlation with test error in presence of distribution shifts which is likely to be related to the influence that the learning rate has on sharpness. 
    \item Sec.~\ref{sec:app_mnli}: additional details and figures for BERT models fine-tuned on MNLI. We find that all sharpness variants we consider are not predictive of the generalization performance of the model, and in some cases there is rather a weak negative correlation between sharpness and test error on out-of-distribution tasks from HANS.
    \item Sec.~\ref{sec:app_cifar10_extra_figures}: additional details and ablation studies for CIFAR-10 models. We analyze the role of data used to evaluate sharpness, the role of the number of iterations in Auto-PGD, the role of $m$ in $m$-sharpness, and the influence of different sharpness definitions and radii on correlation with generalization. Overall, we conclude that none of the considered sharpness definitions or radii correlates positively with generalization nor that low sharpness implies good performance of the model.
\end{itemize}

Also, for the sake of convenience, we provide in Table~\ref{tab:summary_imagenet}, Table~\ref{tab:summary_mnli}, Table~\ref{tab:summary_cifar10_resnet18}, and Table~\ref{tab:summary_cifar10_vit} a summary of correlation coefficients $\tau$ between sharpness and generalization for all our experiments (except ablation studies).
\begin{table}[h!]
    \centering

    \begin{tabular}{lll|rrrrrr}
        \multicolumn{9}{c}{\textbf{ImageNet-1k models trained from scratch}}\\
        & & & \multicolumn{6}{|c}{\textbf{Rank correlation coefficient} $\tau$}\\
        \textbf{Sharpness} & \textbf{LogitNorm} & $\rho$ & \textbf{IN} & \textbf{IN-v2} & \textbf{IN-R} & \textbf{IN-Sketch} & \textbf{IN-A} & \textbf{ObjectNet}\\
        \toprule
        Worst-case $\ell_\infty$ & Yes & 0.001 &  0.09 &  0.08 &  0.10 &  0.10 & -0.06 &  0.04\\
        Worst-case $\ell_\infty$ & Yes & 0.002 &  0.08 &  0.08 &  0.09 &  0.09 & -0.07 &  0.03\\
        Worst-case $\ell_\infty$ & Yes & 0.004 & -0.11 & -0.11 & -0.06 & -0.06 & -0.23 & -0.16\\
                                                      
        Worst-case $\ell_\infty$ & No & 0.001 &  -0.42 & -0.43 & -0.27 & -0.28 & -0.45 & -0.45\\
        Worst-case $\ell_\infty$ & No & 0.002 &  -0.42 & -0.42 & -0.27 & -0.27 & -0.41 & -0.45\\
        Worst-case $\ell_\infty$ & No & 0.004 &  -0.34 & -0.34 & -0.20 & -0.20 & -0.36 & -0.36\\
                                                      
        Avg-case $\ell_\infty$ & Yes & 0.05 &  0.46 &  0.44 &  0.38 &  0.42 & 0.31 &  0.39\\
        Avg-case $\ell_\infty$ & Yes & 0.1  &  0.44 &  0.43 &  0.39 &  0.43 & 0.29 &  0.39\\
        Avg-case $\ell_\infty$ & Yes & 0.2  &  0.42 &  0.42 &  0.39 &  0.42 & 0.29 &  0.38\\
                                                      
        Avg-case $\ell_\infty$ & No & 0.05 &  \textbf{-0.55} & \textbf{-0.56} & -0.40 & -0.42 & \textbf{-0.57} & \textbf{-0.60}\\
        Avg-case $\ell_\infty$ & No & 0.1  &  -0.44 & -0.43 & -0.28 & -0.32 & -0.47 & -0.47\\
        Avg-case $\ell_\infty$ & No & 0.2  &   0.13 &  0.15 &  0.26 &  0.23 &  0.05 &  0.11\\
        
        \\
        
        \multicolumn{9}{c}{\textbf{ImageNet-1k models fine-tuned from IN-21k}}\\
        & & & \multicolumn{6}{|c}{\textbf{Rank correlation coefficient} $\tau$}\\
        \textbf{Sharpness} & \textbf{LogitNorm} & $\rho$ & \textbf{IN} & \textbf{IN-v2} & \textbf{IN-R} & \textbf{IN-Sketch} & \textbf{IN-A} & \textbf{ObjectNet}\\
        \toprule
        Worst-case $\ell_\infty$ & Yes & 0.001 & -0.49 & -0.49 & -0.44 & -0.33 & \textbf{-0.53} & -0.46\\
        Worst-case $\ell_\infty$ & Yes & 0.002 & -0.48 & -0.48 & -0.46 & -0.33 & \textbf{-0.51} & -0.44\\
        Worst-case $\ell_\infty$ & Yes & 0.004 & -0.45 & -0.43 & -0.41 & -0.33 & -0.45 & -0.42\\
                                                      
        Worst-case $\ell_\infty$ & No & 0.001 &  -0.13 & -0.09 & -0.05 & 0.05 & -0.13 & -0.09\\
        Worst-case $\ell_\infty$ & No & 0.002 &  -0.10 & -0.03 & -0.01 & 0.11 & -0.07 & -0.02\\
        Worst-case $\ell_\infty$ & No & 0.004 &  -0.10 & -0.01 & -0.01 & 0.11 & -0.06 &  0.00\\
                                                      
        Avg-case $\ell_\infty$ & Yes & 0.05 &  -0.11 &  -0.08 &  -0.11 &  -0.07 & -0.06 & -0.06\\
        Avg-case $\ell_\infty$ & Yes & 0.1  &  -0.12 &  -0.11 &  -0.14 &  -0.10 & -0.09 & -0.08\\
        Avg-case $\ell_\infty$ & Yes & 0.2  &  -0.25 &  -0.24 &  -0.25 &  -0.23 & -0.25 & -0.24\\
                                                      
        Avg-case $\ell_\infty$ & No & 0.05 &  -0.02 & -0.04 & -0.03 & -0.02 & -0.05 & -0.06\\
        Avg-case $\ell_\infty$ & No & 0.1  &  -0.07 & -0.10 & -0.08 & -0.08 & -0.11 & -0.10\\
        Avg-case $\ell_\infty$ & No & 0.2  &  -0.11 & -0.11 & -0.10 & -0.11 & -0.12 & -0.13\\
        
        \\
        
        \multicolumn{9}{c}{\textbf{ImageNet-1k models fine-tuned from CLIP}}\\
        & & & \multicolumn{6}{|c}{\textbf{Rank correlation coefficient} $\tau$}\\
        \textbf{Sharpness} & \textbf{LogitNorm} & $\rho$ & \textbf{IN} & \textbf{IN-v2} & \textbf{IN-R} & \textbf{IN-Sketch} & \textbf{IN-A} & \textbf{ObjectNet}\\
        \toprule
        Worst-case $\ell_\infty$ & Yes & 0.001 &  -0.04 & -0.16 & -0.23 & -0.26 & -0.25 & -0.36\\
        Worst-case $\ell_\infty$ & Yes & 0.002 &   0.04 & -0.10 & -0.39 & -0.28 & -0.41 & -0.47\\
        Worst-case $\ell_\infty$ & Yes & 0.004 &  -0.08 & -0.19 & -0.12 & -0.16 & -0.17 & -0.27\\
                                                       
        Worst-case $\ell_\infty$ & No & 0.001 &   0.19 &  0.09 & -0.37 & -0.06 & \textbf{-0.57} & -0.48\\
        Worst-case $\ell_\infty$ & No & 0.002 &   0.20 &  0.08 & \textbf{-0.51} & -0.18 & \textbf{-0.58} & \textbf{-0.51}\\
        Worst-case $\ell_\infty$ & No & 0.004 &   0.02 & -0.05 & \textbf{-0.51} & -0.27 & -0.45 & -0.33\\
        
        Avg-case $\ell_\infty$ & Yes & 0.001 &  -0.03 & -0.18 & -0.36 & -0.34 & -0.33 & -0.46\\
        Avg-case $\ell_\infty$ & Yes & 0.002 &  -0.21 & -0.32 & -0.02 & -0.27 & -0.06 & -0.21\\
        Avg-case $\ell_\infty$ & Yes & 0.004 &  -0.19 & -0.21 &  0.26 & -0.03 &  0.23 &  0.06\\
                                                       
        Avg-case $\ell_\infty$ & No & 0.001 &   0.13 & -0.01 & \textbf{-0.62} & -0.26 & \textbf{-0.67} & \textbf{-0.60}\\
        Avg-case $\ell_\infty$ & No & 0.002 &   0.06 &  0.03 & -0.34 & -0.12 & \textbf{-0.50} & -0.37\\
        Avg-case $\ell_\infty$ & No & 0.004 &   0.19 &  0.21 & -0.12 &  0.09 & -0.21 & -0.08\\
    \end{tabular}
    \caption{A summary of correlation between sharpness and generalization for all experiments on \textbf{ImageNet}. We boldface entries with $|\tau| > 0.5$ suggesting a reasonably strong correlation. LogitNorm stands for \textit{logit normalization} and IN stands for \textit{ImageNet}.}
    \label{tab:summary_imagenet}
\end{table}

\begin{table}
    \centering
    \begin{tabular}{lll|rrrr}
        \multicolumn{7}{c}{\textbf{MNLI models fine-tuned from BERT}}\\
        & & & \multicolumn{4}{|c}{\textbf{Rank correlation coefficient} $\tau$}\\
        \textbf{Sharpness} & \textbf{LogitNorm} & $\rho$ & \textbf{MNLI} & \textbf{HANS-L} & \textbf{HANS-S} & \textbf{HANS-C}\\
        \toprule
           Worst-case $\ell_\infty$ & Yes & 0.0005 &  0.04 & -0.09 & -0.14 & -0.21\\
           Worst-case $\ell_\infty$ & Yes & 0.001  & -0.09 & -0.09 & -0.13 & -0.18\\
           Worst-case $\ell_\infty$ & Yes & 0.002  &  0.05 & -0.09 & -0.14 & -0.17\\
           
           Worst-case $\ell_\infty$ & No  & 0.0005 & 0.04  & -0.24 & -0.22 & -0.07\\
           Worst-case $\ell_\infty$ & No  & 0.001  & 0.04  & -0.13 & -0.15 & -0.15\\
           Worst-case $\ell_\infty$ & No  & 0.002  & -0.11 & -0.15 & -0.12 & -0.13\\
           
           Avg-case $\ell_\infty$ & Yes & 0.1 & -0.35 & -0.46 & -0.28 & 0.17\\
           Avg-case $\ell_\infty$ & Yes & 0.2 & -0.37 & -0.48 & -0.28 & 0.24\\
           Avg-case $\ell_\infty$ & Yes & 0.4 &  0.01 & -0.29 & -0.27 & 0.05\\
           
           Avg-case $\ell_\infty$ & No  & 0.1 & -0.34 & -0.31 & -0.23 & 0.13\\
           Avg-case $\ell_\infty$ & No  & 0.2 & -0.34 & \textbf{-0.58} & -0.39 & 0.16\\
           Avg-case $\ell_\infty$ & No  & 0.4 &  0.04 & -0.16 & -0.09 & 0.05\\
    \end{tabular}
    \caption{A summary of correlation between sharpness and generalization for all experiments on \textbf{MNLI} for models fine-tuned from BERT. We boldface entries with $|\tau| > 0.5$ suggesting a reasonably strong correlation. LogitNorm stands for \textit{logit normalization}.}
    \label{tab:summary_mnli}
\end{table}

\begin{table}
    \centering
    \begin{tabular}{lll|rr}
        \multicolumn{5}{c}{\textbf{ResNets-18 trained from scratch on CIFAR-10}}\\
        & & & \multicolumn{2}{|c}{\textbf{Rank correlation coefficient} $\tau$}\\
        \textbf{Sharpness} & \textbf{LogitNorm} & $\rho$ & \textbf{CIFAR-10} & \textbf{CIFAR-10-C}\\
        \toprule
        Standard avg-case $\ell_2$ & No & 0.05 & 0.14 & 0.04\\
        Standard avg-case $\ell_2$ & No & 0.1  & 0.26 & 0.19\\
        Standard avg-case $\ell_2$ & No & 0.2  & 0.28 & 0.21\\
        Standard avg-case $\ell_2$ & No & 0.4  & 0.28 & 0.20\\
                                                         
        Standard worst-case $\ell_2$ & No & 0.25 & 0.17 & 0.10\\
        Standard worst-case $\ell_2$ & No & 0.5  & 0.24 & 0.16\\
        Standard worst-case $\ell_2$ & No & 1.0  & 0.25 & 0.18\\
        Standard worst-case $\ell_2$ & No & 2.0  & 0.22 & 0.14\\

        Adaptive avg-case $\ell_2$ & No & 0.05 & -0.37 & -0.46\\
        Adaptive avg-case $\ell_2$ & No & 0.1  & \textbf{-0.50} & \textbf{-0.53}\\
        Adaptive avg-case $\ell_2$ & No & 0.2  & -0.42 & -0.41\\
        Adaptive avg-case $\ell_2$ & No & 0.4  & -0.31 & -0.31\\
                                                         
        Adaptive worst-case $\ell_2$ & No & 0.25 & -0.36 & -0.39\\
        Adaptive worst-case $\ell_2$ & No & 0.5  & -0.42 & -0.36\\
        Adaptive worst-case $\ell_2$ & No & 1.0  & -0.27 & -0.17\\
        Adaptive worst-case $\ell_2$ & No & 2.0  & -0.17 & -0.07\\
        
        Adaptive avg-case $\ell_2$ & Yes & 0.05 &  0.18 & 0.07\\
        Adaptive avg-case $\ell_2$ & Yes & 0.1  &  0.07 & -0.04\\
        Adaptive avg-case $\ell_2$ & Yes & 0.2  & -0.14 & -0.26\\
        Adaptive avg-case $\ell_2$ & Yes & 0.4  & -0.43 & -0.58\\
                                                         
        Adaptive worst-case $\ell_2$ & Yes & 0.25 & 0.19 & 0.14\\
        Adaptive worst-case $\ell_2$ & Yes & 0.5  & 0.07 & 0.00\\
        Adaptive worst-case $\ell_2$ & Yes & 1.0  & -0.13 & -0.22\\
        Adaptive worst-case $\ell_2$ & Yes & 2.0  & \textbf{-0.52} & \textbf{-0.58}\\

        Standard avg-case $\ell_\infty$ & No & 0.1 & 0.16 & 0.08\\
        Standard avg-case $\ell_\infty$ & No & 0.2  & 0.28 & 0.21\\
        Standard avg-case $\ell_\infty$ & No & 0.4  & 0.28 & 0.20\\
        Standard avg-case $\ell_\infty$ & No & 0.8  & 0.28 & 0.20\\
                                                         
        Standard worst-case $\ell_\infty$ & No & 0.0005 & 0.29 & 0.23\\
        Standard worst-case $\ell_\infty$ & No & 0.001  & 0.30 & 0.24\\
        Standard worst-case $\ell_\infty$ & No & 0.002  & 0.30 & 0.24\\
        Standard worst-case $\ell_\infty$ & No & 0.004  & 0.29 & 0.23\\

        Adaptive avg-case $\ell_\infty$ & No & 0.1 & -0.36 & -0.47\\
        Adaptive avg-case $\ell_\infty$ & No & 0.2 & \textbf{-0.53} & \textbf{-0.56}\\
        Adaptive avg-case $\ell_\infty$ & No & 0.4 & -0.41 & -0.41\\
        Adaptive avg-case $\ell_\infty$ & No & 0.8 & -0.20 & -0.18\\
                                                         
        Adaptive worst-case $\ell_\infty$ & No & 0.001 & -0.36 & -0.42\\
        Adaptive worst-case $\ell_\infty$ & No & 0.002 & -0.05 & -0.10\\
        Adaptive worst-case $\ell_\infty$ & No & 0.004 &  0.25 &  0.20\\
        Adaptive worst-case $\ell_\infty$ & No & 0.008 &  0.26 &  0.24\\
        
        Adaptive avg-case $\ell_\infty$ & Yes & 0.1 &  0.18 & 0.07\\
        Adaptive avg-case $\ell_\infty$ & Yes & 0.2 &  0.05 & -0.06\\
        Adaptive avg-case $\ell_\infty$ & Yes & 0.4 & -0.23 & -0.37\\
        Adaptive avg-case $\ell_\infty$ & Yes & 0.8 & -0.46 & \textbf{-0.62}\\
                                                         
        Adaptive worst-case $\ell_\infty$ & Yes & 0.001 & 0.30 & 0.18\\
        Adaptive worst-case $\ell_\infty$ & Yes & 0.002 & 0.29 & 0.16\\
        Adaptive worst-case $\ell_\infty$ & Yes & 0.004 & 0.21 & 0.07\\
        Adaptive worst-case $\ell_\infty$ & Yes & 0.008 & -0.04 & -0.19\\
    \end{tabular}
    
    \caption{A summary of correlation between sharpness and generalization for all experiments on \textbf{CIFAR-10} for ResNets-18 trained from scratch. We boldface entries with $|\tau| > 0.5$ suggesting a reasonably strong correlation. LogitNorm stands for \textit{logit normalization}.}
    \label{tab:summary_cifar10_resnet18}
\end{table}

\begin{table}
    \centering
    \begin{tabular}{lll|rr}
        \multicolumn{5}{c}{\textbf{Vision transformers trained from scratch on CIFAR-10}}\\
        & & & \multicolumn{2}{|c}{\textbf{Rank correlation coefficient} $\tau$}\\
        \textbf{Sharpness} & \textbf{LogitNorm} & $\rho$ & \textbf{CIFAR-10} & \textbf{CIFAR-10-C}\\
        \toprule
        Standard avg-case $\ell_2$ & No & 0.005 & -0.45 & \textbf{-0.54}\\
        Standard avg-case $\ell_2$ & No & 0.01  & -0.39 & -0.49\\
        Standard avg-case $\ell_2$ & No & 0.02  & -0.20 & -0.31\\
        Standard avg-case $\ell_2$ & No & 0.04  & -0.08 & -0.20\\
                                                         
        Standard worst-case $\ell_2$ & No & 0.025 & \textbf{-0.59} & \textbf{-0.62}\\
        Standard worst-case $\ell_2$ & No & 0.05  & -0.37 & -0.43\\
        Standard worst-case $\ell_2$ & No & 0.1   & -0.16 & -0.24\\
        Standard worst-case $\ell_2$ & No & 0.2   & -0.12 & -0.20\\

        Adaptive avg-case $\ell_2$ & No & 0.1  & -0.45 & \textbf{-0.50}\\
        Adaptive avg-case $\ell_2$ & No & 0.2  & -0.45 & -0.45\\
        Adaptive avg-case $\ell_2$ & No & 0.4  & -0.42 & -0.47\\
        Adaptive avg-case $\ell_2$ & No & 0.8  & -0.10 &  0.08\\
                                                         
        Adaptive worst-case $\ell_2$ & No & 0.5 & \textbf{-0.64} & \textbf{-0.53}\\
        Adaptive worst-case $\ell_2$ & No & 1.0 & -0.32 & -0.19\\
        Adaptive worst-case $\ell_2$ & No & 2.0 & -0.11 & -0.01\\
        Adaptive worst-case $\ell_2$ & No & 4.0 & -0.07 & -0.03\\
        
        Adaptive avg-case $\ell_2$ & Yes & 0.1 & -0.18 & -0.31\\
        Adaptive avg-case $\ell_2$ & Yes & 0.2 & -0.28 & -0.40\\
        Adaptive avg-case $\ell_2$ & Yes & 0.4 & -0.39 & -0.46\\
        Adaptive avg-case $\ell_2$ & Yes & 0.8 & -0.44 & \textbf{-0.52}\\
                                                         
        Adaptive worst-case $\ell_2$ & Yes & 0.25 & -0.21 & -0.12\\
        Adaptive worst-case $\ell_2$ & Yes & 0.5  & -0.24 & -0.17\\
        Adaptive worst-case $\ell_2$ & Yes & 1.0  & -0.22 & -0.19\\
        Adaptive worst-case $\ell_2$ & Yes & 2.0  & -0.14 & -0.11\\

        Standard avg-case $\ell_\infty$ & No & 0.01 & -0.44 & \textbf{-0.54}\\
        Standard avg-case $\ell_\infty$ & No & 0.02 & -0.35 & -0.45\\
        Standard avg-case $\ell_\infty$ & No & 0.04 & -0.17 & -0.28\\
        Standard avg-case $\ell_\infty$ & No & 0.08 & -0.04 & -0.14\\
                                                         
        Standard worst-case $\ell_\infty$ & No & 0.00001 & \textbf{-0.61} & \textbf{-0.63}\\
        Standard worst-case $\ell_\infty$ & No & 0.00002 & -0.46 & \textbf{-0.51}\\
        Standard worst-case $\ell_\infty$ & No & 0.00004 & -0.25 & -0.31\\
        Standard worst-case $\ell_\infty$ & No & 0.00008 & -0.16 & -0.22\\

        Adaptive avg-case $\ell_\infty$ & No & 0.1 & -0.45 & \textbf{-0.53}\\
        Adaptive avg-case $\ell_\infty$ & No & 0.2 & -0.46 & \textbf{-0.50}\\
        Adaptive avg-case $\ell_\infty$ & No & 0.4 & -0.45 & -0.44\\
        Adaptive avg-case $\ell_\infty$ & No & 0.8 & -0.41 & -0.47\\
                                                         
        Adaptive worst-case $\ell_\infty$ & No & 0.0005 & \textbf{-0.68} & \textbf{-0.63}\\
        Adaptive worst-case $\ell_\infty$ & No & 0.001  & -0.43 & -0.40\\
        Adaptive worst-case $\ell_\infty$ & No & 0.002  & -0.26 & -0.23\\
        Adaptive worst-case $\ell_\infty$ & No & 0.004  & -0.18 & -0.18\\
        
        Adaptive avg-case $\ell_\infty$ & Yes & 0.1 & -0.11 & -0.23\\
        Adaptive avg-case $\ell_\infty$ & Yes & 0.2 & -0.16 & -0.29\\
        Adaptive avg-case $\ell_\infty$ & Yes & 0.4 & -0.31 & -0.42\\
        Adaptive avg-case $\ell_\infty$ & Yes & 0.8 & -0.40 & -0.47\\
                                                         
        Adaptive worst-case $\ell_\infty$ & Yes & 0.0005 & -0.20 & -0.23\\
        Adaptive worst-case $\ell_\infty$ & Yes & 0.001  & -0.22 & -0.26\\
        Adaptive worst-case $\ell_\infty$ & Yes & 0.002  & -0.29 & -0.34\\
        Adaptive worst-case $\ell_\infty$ & Yes & 0.004  & -0.39 & -0.44\\
    \end{tabular}
    
    \caption{A summary of correlation between sharpness and generalization for all experiments on \textbf{CIFAR-10} for ViTs trained from scratch. We boldface entries with $|\tau| > 0.5$ suggesting a reasonably strong correlation. LogitNorm stands for \textit{logit normalization}.}
    \label{tab:summary_cifar10_vit}
\end{table}

\clearpage

\section{Omitted Proofs}
\label{sec:app_omitted_proofs}

\subsection{Asymptotic Analysis of Adaptive Sharpness Measures}
\label{sec:app_asymptotic}
For the convenience of the reader we repeat here quickly the definitions of adaptive sharpness measures.
Let $L_\Scal(\wv) = \frac{1}{|S|} \sum_{(\xv, \yv) \in \Scal} \l_{\xv\yv}(\wv)$ be the loss on a set of \textit{training} points $\Scal$.
For arbitrary weights $\wv$ (i.e., not necessarily a minimum), then the \textit{average-case} and \textit{worst-case $m$-sharpness} is defined as:
\begin{align} %
S_{avg,p}^\rho(\wv, \cv) \triangleq \E_{\substack{\Scal \sim P_m \ \ \ \ \ \ \\ \deltav \sim \mathcal{N}(0, \rho^2 diag(\cv^2))}} \hspace{-8mm} L_\Scal(\wv + \deltav) - L_\Scal(\wv) \qquad
S_{max,p}^\rho(\wv, \cv) \triangleq \E_{\Scal \sim P_m} \maxop_{\norm{\deltav \odot \cv^{-1}}_p \leq \rho} L_\Scal(\wv + \deltav) - L_\Scal(\wv), \nonumber
\end{align}
where $\odot$/$^{-1}$ denotes elementwise multiplication/inversion and $P_m$ is the data distribution that returns $m$ training pairs $(\xv, \yv)$.

If $\cv=|\wv|$ then the perturbation set is $\norm{\delta \odot |\wv|^{-1}}_p \leq \rho$. We first introduce a new variable $\gammav=\deltav \odot |\wv|^{-1}$ and do a Taylor expansion around $w$:
\begin{align*}
L_\Scal(\wv+\deltav) = L_\Scal(\wv+\gammav \odot |\wv|)
= L_\Scal(\wv) + \inner{\nabla L_\Scal(\wv), |\wv| \odot \gammav} + \frac{1}{2}\inner{\gammav \odot |\wv|, \nabla^2 L_\Scal(\wv) \gammav \odot |\wv|} + O(\norm{\gammav}_p^3),
\end{align*}
where $\nabla^2 L_\Scal(\wv)$ denotes the Hessian of $L_\Scal$ at $\wv$.
\begin{proposition}\label{pro:first}
    Let $L_\Scal\in C^3(\R^s)$, $S$ be a finite sample of training points $(x_i,y_i)_{i=1}^n$ and let $P_m$ denote the uniform distribution over subsamples of size $m\leq n$ from $S$. Then we define for $p\geq 1$, $q \in \R$ such that $\frac{1}{p}+\frac{1}{q}=1$, then it holds
    \begin{align*}
    &\lim_{\rho \to 0} S_{max,p}^\rho(\wv, |\wv|) \\
    =& \E_{\Scal \sim P_m}
    \begin{cases} 
    \norm{\nabla L_\Scal(\wv) \odot |\wv|}_q \rho + O(\rho^2) 
    & \textrm{ if } \nabla L_\Scal(\wv) \odot |\wv| \neq 0,\\
    \ \\
    \frac{\rho^2}{2}\max\limits_{\gammav \neq 0} \frac{\inner{\gammav,\Big( \nabla^2 L_\Scal(\wv) \odot (|\wv| |\wv|^T)\Big) \gammav}}{\norm{\gammav}^2_p}+ O(\rho^3) 
    & \textrm{ if } \nabla L_\Scal(\wv) \odot |\wv| =0 \textrm{ and }\\
    & \quad \nabla^2 L_\Scal(\wv) \odot (|\wv| |\wv|^T) \textrm{ not negative definite}\\
    \ \\
    O(\rho^3) 
    & \textrm{ if } \nabla L_\Scal(\wv) \odot |\wv| =0 \textrm{ and }\\
    & \quad \nabla^2 L_\Scal(\wv) \odot (|\wv| |\wv|^T)  \textrm{ is negative definite } 
    \end{cases}                                                     
    \end{align*}
\end{proposition}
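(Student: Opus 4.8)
The plan is to fix a subsample $\Scal$, analyze the inner maximization $\maxop_{\norm{\deltav \odot |\wv|^{-1}}_p \le \rho} L_\Scal(\wv+\deltav) - L_\Scal(\wv)$ pointwise in $\Scal$, and take the expectation $\E_{\Scal \sim P_m}$ only at the very end. Since $P_m$ is uniform over a \emph{finite} family of subsamples, the remainder terms are uniform across $\Scal$ and commute with this finite average, and the three displayed branches are read off sample by sample. Using the substitution $\gammav = \deltav \odot |\wv|^{-1}$ recorded above, the constraint becomes $\norm{\gammav}_p \le \rho$ and the shifted objective is $f_\Scal(\gammav) = \inner{\tilde{\gv}, \gammav} + \tfrac12 \inner{\gammav, \tilde{\vH}\gammav} + R(\gammav)$ with $R(\gammav) = O(\norm{\gammav}_p^3)$, where $\tilde{\gv} \assign \nabla L_\Scal(\wv) \odot |\wv|$ and $\tilde{\vH} \assign \nabla^2 L_\Scal(\wv) \odot (|\wv| |\wv|^\top)$; here I use the identities $\inner{\nabla L_\Scal(\wv), |\wv| \odot \gammav} = \inner{\tilde{\gv}, \gammav}$ and $\inner{\gammav \odot |\wv|, \nabla^2 L_\Scal(\wv)(\gammav \odot |\wv|)} = \inner{\gammav, \tilde{\vH}\gammav}$. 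Because $L_\Scal \in C^3$, Taylor's theorem yields a uniform bound $\abs{R(\gammav)} \le C \norm{\gammav}_p^3$ on a fixed ball around the origin, and this is the only analytic input needed to control the remainder throughout.

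In the \textbf{first case} $\tilde{\gv} \neq 0$, the linear term dominates. For the lower bound I would choose a unit vector $\gammav_0$ with $\norm{\gammav_0}_p = 1$ attaining the dual-norm equality $\inner{\tilde{\gv}, \gammav_0} = \norm{\tilde{\gv}}_q$ (H\"older duality of $\ell_p$), and evaluate $f_\Scal(\rho \gammav_0) = \rho \norm{\tilde{\gv}}_q + O(\rho^2)$. For the upper bound, H\"older gives $\inner{\tilde{\gv}, \gammav} \le \rho \norm{\tilde{\gv}}_q$ for every feasible $\gammav$, while the quadratic term and $R$ are both $O(\rho^2)$ uniformly on the ball; hence $\maxop_{\norm{\gammav}_p \le \rho} f_\Scal(\gammav) = \rho \norm{\tilde{\gv}}_q + O(\rho^2)$, matching the first branch.

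When $\tilde{\gv} = 0$ the quadratic term governs the leading behaviour and the sign structure of $\tilde{\vH}$ decides between the last two branches. Writing $\gammav = t \uv$ with $\norm{\uv}_p = 1$, $t \in [0, \rho]$, and setting $\lambda^\ast = \maxop_{\norm{\uv}_p = 1} \inner{\uv, \tilde{\vH}\uv} = \maxop_{\gammav \neq 0} \inner{\gammav, \tilde{\vH}\gammav}/\norm{\gammav}_p^2$, homogeneity gives $\maxop_{\norm{\gammav}_p \le \rho} \tfrac12\inner{\gammav, \tilde{\vH}\gammav} = \tfrac{\rho^2}{2}\lambda^\ast$ whenever $\lambda^\ast \ge 0$, i.e. whenever $\tilde{\vH}$ is \emph{not} negative definite; the maximizer then sits on the boundary $t = \rho$. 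Combining the matching lower bound $f_\Scal(\rho \uv^\ast) = \tfrac{\rho^2}{2}\lambda^\ast + O(\rho^3)$ (with $\uv^\ast$ attaining $\lambda^\ast$) with the uniform upper bound $f_\Scal(\gammav) \le \tfrac{\rho^2}{2}\lambda^\ast + O(\rho^3)$ gives the second branch. If instead $\tilde{\vH}$ is negative definite, then $\lambda^\ast < 0$, so $\inner{\gammav, \tilde{\vH}\gammav} \le \lambda^\ast \norm{\gammav}_p^2$, and for $\rho < \abs{\lambda^\ast}/(2C)$ the negative $\rho^2$ term dominates the cubic remainder, forcing $f_\Scal(\gammav) \le 0$ on the ball; since $\gammav = 0$ is feasible with $f_\Scal(0) = 0$, the maximum is squeezed to $0$, which is in particular $O(\rho^3)$, giving the third branch.

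The main obstacle is not any single case but the bookkeeping needed to make the estimates genuinely uniform: one must verify that the Taylor remainder bound $\abs{R(\gammav)} \le C\norm{\gammav}_p^3$ holds with a single constant $C$ valid simultaneously for all the finitely many subsamples $\Scal$ and on a common ball, so that the per-sample expansions survive the finite average $\E_{\Scal \sim P_m}$. A secondary delicate point is the boundary between the second and third branches: ``not negative definite'' must be read as $\lambda^\ast \ge 0$, and the degenerate sub-case $\lambda^\ast = 0$ (positive semidefinite with a flat direction) is absorbed consistently by the second branch, whose leading term then vanishes and reproduces an $O(\rho^3)$ estimate in agreement with the third. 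Finally, in the first case one should check that replacing the maximizer of the linear functional by the true maximizer of $f_\Scal$ perturbs the value by only $O(\rho^2)$, which again reduces to the uniform second-order bound on the ball.
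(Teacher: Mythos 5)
Your proposal is correct and follows essentially the same route as the paper's proof: the substitution $\gammav = \deltav \odot |\wv|^{-1}$, the Taylor expansion, H\"older duality for the linear term, the homogeneity/definiteness case analysis for the quadratic term, and the exchange of the limit with the finite average over subsamples. The extra care you take with the uniform remainder constant and the degenerate sub-case $\lambda^\ast = 0$ is a welcome refinement of details the paper leaves implicit, but it does not change the argument.
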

\begin{proof} We get 
    \begin{align*}
    \maxop_{\norm{\gammav}_p \leq \rho} L_\Scal(\wv+\gammav \odot |\wv|) - L_\Scal(\wv)&= 
    \max_{\norm{\gammav}_p\leq \rho}  \inner{\nabla L_\Scal(\wv), |\wv| \odot \gammav} + \frac{1}{2}\inner{\gammav \odot |\wv|,\nabla^2 L_\Scal(\wv) \gammav \odot |\wv|} + O(\norm{\gammav}_p^3)\\
    &=\max_{\norm{\gammav}_p\leq \rho}  \inner{\nabla L_\Scal(\wv) \odot |\wv|, \gammav} + \frac{1}{2}\inner{\gammav ,\Big(\nabla^2 L_\Scal(\wv) \odot (|\wv||\wv|^T)\Big)\gammav} + O(\norm{\gammav}_p^3)
    \end{align*}
    If $\nabla L_\Scal(\wv) \odot |\wv|\neq 0$, then the first order term dominates for $\rho$ sufficiently small and we get
    \[ \max_{\norm{\gammav}_p\leq \rho}  \inner{\nabla L_\Scal(\wv) \odot |\wv|, \gammav} 
    = \max_{\norm{\gammav}_p\leq \rho} \norm{\nabla L_\Scal(\wv) \odot |\wv|}_q \norm{\gammav}_p = \rho  \norm{\nabla L_\Scal(\wv) \odot |\wv|}_q.\]
    Otherwise we have to consider
    \[ \max_{\norm{\gammav}_p\leq \rho} \frac{1}{2}\inner{\gammav ,\Big(\nabla^2 L_\Scal(\wv) \odot (|\wv||\wv|^T)\Big)\gammav}.\]
    If $\nabla^2 L(\wv) \odot (|\wv||\wv|^T)$ is negative definite, then the maximum is zero attained at $\gammav=0$. In the other case, we get
    \[ \max_{\norm{\gammav}_p\leq \rho} \frac{1}{2}\inner{\gammav ,\Big(\nabla^2 L_\Scal(\wv) \odot (|\wv||\wv|^T)\Big)\gammav} = \frac{\rho^2}{2} \max\limits_{\gammav \neq 0} \frac{\inner{\gammav,\Big( \nabla^2 L_\Scal(\wv) \odot (|\wv| |\wv|^T)\Big) \gammav}}{\norm{\gammav}^2_p}.\]
This almost finishes the proof. Finally, it holds
\begin{align*}
 \lim_{\rho \to 0} S_{max,p}^\rho(\wv, |\wv|) &=  \lim_{\rho \to 0}\E_{\Scal \sim P_m}\left[ \maxop_{\norm{\gammav}_p \leq \rho} L_\Scal(\wv+\gammav \odot |\wv|) - L_\Scal(\wv)\right],\\
 &=\E_{\Scal \sim P_m}\left[\lim_{\rho \to 0} \maxop_{\norm{\gammav}_p \leq \rho} L_\Scal(\wv+\gammav \odot |\wv|) - L_\Scal(\wv)\right]
\end{align*}
where for the last step we have used that $\E_{\Scal \sim P_m}$ is the expectation over all possible subsamples of size $m$ and thus boils down to a finite sum for which we can drag the limit inside.
\end{proof}
We note that for $p=2$ it holds $q=2$ and
\[ \max\limits_{\gammav \neq 0} \frac{\inner{\gammav,\Big( \nabla^2 L_\Scal(\wv) \odot (|\wv| |\wv|^T)\Big) \gammav}}{\norm{\gammav}^2_2} = \lambda_{\max}\Big(\nabla^2 L_\Scal(\wv) \odot (|\wv| |\wv|^T)\Big),\]
which is the result used in the main paper.
\begin{proposition}
    Let $L_\Scal\in C^3(\R^s)$, $S$ be a finite sample of training points $(x_i,y_i)_{i=1}^n$ and let $P_m$ denote the uniform distribution over subsamples of size $m\leq n$ from $S$. %
    Then
    \begin{align*}
    \lim_{\rho \to 0} \frac{2}{\rho^2} S_{avg}^\rho(\wv, |\wv|) 
    &=\E_{\Scal \sim P_m} \left[ \tr(\nabla^2 L_\Scal(\wv) \odot |\wv| |\wv|^\top) \right] + O(\rho)
    \end{align*}
\end{proposition}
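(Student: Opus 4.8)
The plan is to compute $S_{avg}^\rho(\wv,|\wv|)$ via a second-order Taylor expansion of $L_\Scal$ around $\wv$ followed by integration against the Gaussian perturbation, exploiting two facts: a mean-zero Gaussian annihilates the linear term for free, and its \emph{diagonal} covariance turns the quadratic term into exactly the Hadamard-rescaled trace. This is why the average-case statement is simpler than Proposition~\ref{pro:first}: there is no case distinction on whether $\nabla L_\Scal(\wv)\odot|\wv|$ vanishes, because averaging removes the first-order contribution regardless of whether $\wv$ is a critical point. Since $P_m$ is the uniform distribution over the finitely many subsamples of size $m$, the outer expectation $\E_{\Scal\sim P_m}$ is a finite sum, so it commutes with both the inner expectation over $\deltav$ and the limit $\rho\to 0$; hence it suffices to establish the expansion for a fixed $\Scal$ and average at the end.

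First I would write $\deltav\sim\mathcal{N}(0,\rho^2\diag(|\wv|^2))$, equivalently $\deltav=\rho\,|\wv|\odot\zv$ with $\zv\sim\mathcal{N}(0,\vI)$, and Taylor expand using $L_\Scal\in C^3$:
\begin{align*}
L_\Scal(\wv+\deltav) = L_\Scal(\wv) + \inner{\nabla L_\Scal(\wv),\deltav} + \tfrac{1}{2}\inner{\deltav,\nabla^2 L_\Scal(\wv)\,\deltav} + O(\norm{\deltav}^3).
\end{align*}
Subtracting $L_\Scal(\wv)$ and taking $\E_{\deltav}$, the linear term vanishes since $\E[\deltav]=0$, and the quadratic term becomes $\tfrac{1}{2}\sum_{i,j}(\nabla^2 L_\Scal(\wv))_{ij}\,\E[\delta_i\delta_j]$. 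Because the covariance is diagonal, $\E[\delta_i\delta_j]$ equals $\rho^2 w_i^2$ when $i=j$ and vanishes otherwise, so only the diagonal survives and gives $\tfrac{\rho^2}{2}\sum_i (\nabla^2 L_\Scal(\wv))_{ii}\,w_i^2 = \tfrac{\rho^2}{2}\tr(\nabla^2 L_\Scal(\wv)\odot|\wv||\wv|^\top)$, where I use that the $(i,i)$ entry of $\nabla^2 L_\Scal(\wv)\odot|\wv||\wv|^\top$ is precisely $(\nabla^2 L_\Scal(\wv))_{ii}\,w_i^2$.

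It then remains to control the remainder: writing $\deltav=\rho\,|\wv|\odot\zv$, the cubic term scales as $\rho^3$ times $\E\norm{|\wv|\odot\zv}^3$, which is a finite third absolute Gaussian moment, so $\E_{\deltav}[O(\norm{\deltav}^3)]=O(\rho^3)$. Dividing by $\rho^2/2$ and invoking the finite-sum commutation yields $\frac{2}{\rho^2}S_{avg}^\rho(\wv,|\wv|)=\E_{\Scal\sim P_m}[\tr(\nabla^2 L_\Scal(\wv)\odot|\wv||\wv|^\top)]+O(\rho)$, and $\rho\to 0$ gives the claim. \textbf{The main obstacle} is making the remainder estimate rigorous: the Gaussian has unbounded support whereas a $C^3$ bound on the third derivative is only local, so a fully careful argument must split the integral into a bounded region, where the uniform $C^3$ bound applies, and the Gaussian tail, where one combines exponential tail decay with a mild growth control on $L_\Scal$ (or simply works on a compact neighborhood and absorbs the exponentially small tail contribution). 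Everything else—the vanishing of the linear term and the identification of the quadratic term with the rescaled trace—is a direct Gaussian second-moment computation.
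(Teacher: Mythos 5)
Your proposal is correct and follows essentially the same route as the paper's proof: the same substitution $\deltav = \rho\,|\wv|\odot\zv$ with $\zv\sim\mathcal{N}(0,\vI)$, the same observation that the mean-zero linear term drops and the diagonal covariance reduces the quadratic term to $\tfrac{\rho^2}{2}\tr(\nabla^2 L_\Scal(\wv)\odot|\wv||\wv|^\top)$, and the same finite-sum argument to commute $\E_{\Scal\sim P_m}$ with the limit. If anything, you are slightly more careful than the paper about justifying the $O(\rho^3)$ remainder against the Gaussian's unbounded support, which is a legitimate point the paper's proof leaves implicit.
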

\begin{proof} Let us consider the loss without the subcript for clarity. Then we consider
    \[  \E_{\deltav \sim \mathcal{N}(0, \rho^2 diag(\cv^2))} L_\Scal(\wv + \deltav) - L_\Scal(\wv) \]
    When plugging in the Taylor expansion of the loss, we see that 
    \begin{align*}  \E_{\deltav \sim \mathcal{N}(0, \rho^2 diag(\cv^2))} &L_\Scal(\wv + \deltav) - L_\Scal(\wv)\\
    =& \E_{\gammav \in \mathcal{N}(0, \rho^2 \vI)} \Big[ \inner{\nabla L_\Scal(\wv), |\wv| \odot \gammav} + \frac{1}{2}\inner{\gammav \odot |\wv|,\nabla^2 L_\Scal(\wv) \gammav \odot |\wv|} + O(\norm{\gammav}_2^3)\Big]\\
    =&\frac{1}{2}\E_{\gammav \in \mathcal{N}(0, \rho^2 \vI)} 
    \Big[ \inner{\gammav \odot |\wv|,\nabla^2 L_\Scal(\wv) \gammav \odot |\wv|} \Big] + O(\rho^3)\\
    =&\frac{1}{2}\E_{\gammav \in \mathcal{N}(0, \rho^2 \vI)} \Big[\inner{\gammav, \big(\nabla^2 L_\Scal(\wv) \odot |\wv| |\wv|^T\big) \gammav}\Big] + O(\rho^3)\\
    =&\frac{\rho^2}{2}\tr(\nabla^2 L_\Scal(\wv) \odot |\wv| |\wv|^\top) + O(\rho^3) 
    \end{align*}
    where we use that the components of $\gammav$ are independent and have zero mean  and thus the first order term vanishes and for the second order term only the diagonal entries remain which are equal to the variance $\rho^2$. Finally, we take the expectation with respect to $P_m$. As in the proof of Proposition \ref{pro:first} we can drag the limit inside as the expectation with respect to $P_m$ corresponds to a finite sum.
\end{proof}

\subsection{Derivations for Diagonal Linear Networks}
\label{sec:app_proofs_dln}

\myparagraph{Hessian for diagonal linear networks.}
Denote $\rv = \vX (\uv \odot \vv) - \yv$, $\vV = \diag(\vv)$, $\vU = \diag(\uv)$, then the Hessian of the loss $\nabla^2 L(\wv)$ for diagonal linear networks is given by:
\begin{align}
L(\wv) =
\begin{bmatrix}
\vV \vX^\top \vX \vV & \vV \vX^\top \vX \vU + \diag(\vX^\top \rv) \\
\vV \vX^\top \vX \vU + \diag(\vX^\top \rv) & \vU \vX^\top \vX \vU 
\end{bmatrix}.
\end{align}
It is easy to verify that the data-dependent terms disappear due to the assumption of whitened data $\vX^\top \vX = \vI$ and zero residuals $\rv$ at a minimum. Thus, we arrive at a much simpler expression for the Hessian:
\begin{align} 
L(\wv) =
\begin{bmatrix}
\diag(\vv \odot \vv) & \diag(\vv \odot \uv)\\ 
\diag(\vv \odot \uv) & \diag(\uv \odot \uv)
\end{bmatrix},
\end{align}

\myparagraph{Maximum eigenvalue for diagonal linear networks.} 
Since the Hessian has a simple block structure, we can rearrange the rows and columns coherently and get a block-diagonal structure as follows
\begin{align}
\begin{bmatrix}
v_1^2   & v_1 u_1 & 0     & \dots & 0 \\ 
v_1 u_1 & u_1^2   & 0     & \dots & 0 \\
0       & 0       & \dots & \dots & 0 \\
\dots & \dots & \dots     & \dots & \dots \\
0 & \dots & 0 & v_d^2     & v_d u_d \\ 
0 & \dots & 0 & v_d u_d   & u_d^2
\end{bmatrix}
\end{align}
where eigenvalues of each $2\times2$ submatrix are $u_i^2 + v_i^2$ and $0$. Thus, $\lm = \maxop_{1 \leq i \leq d} v_i^2 + u_i^2$ by using the property of block-diagonal matrices.

\clearpage

\section{Correlation Between Sharpness and Generalization Gap}
\label{sec:app_gen_gap}
Throughout the paper we focused on correlation between sharpness and \textit{test error}, but it is natural to ask if the picture differs if we consider correlation between sharpness and 
\textit{generalization gap}, i.e., the difference between the test error and training error. We note that in the experiments on CIFAR-10 in Section~\ref{subsec:controlled_setup_cifar10}, since we consider only models with $\leq 1\%$ training error and since the test error is significantly larger than $1\%$, the behavior of generalization gap vs. sharpness has to be almost identical to that of test error vs. sharpness. For other datasets, however, the training error is not necessarily close to $0$, thus in Figure~\ref{fig:gen_gap_IN1k-scratch-main} and Figure~\ref{fig:gen_gap_clip_singlerho}, we additionally plot the \textit{generalization gap} vs. sharpness (and side-by-side the test error vs. sharpness for the sake of convenience) for the ImageNet experiments. We observe only small differences in the correlation values which do not alter the conclusions about the relationship of sharpness and generalization.
\begin{figure*}[h!] \centering \small
    \begin{tabular}{c} 
    \textbf{With logit normalization} \\
    \includegraphics[width=1.0\textwidth]{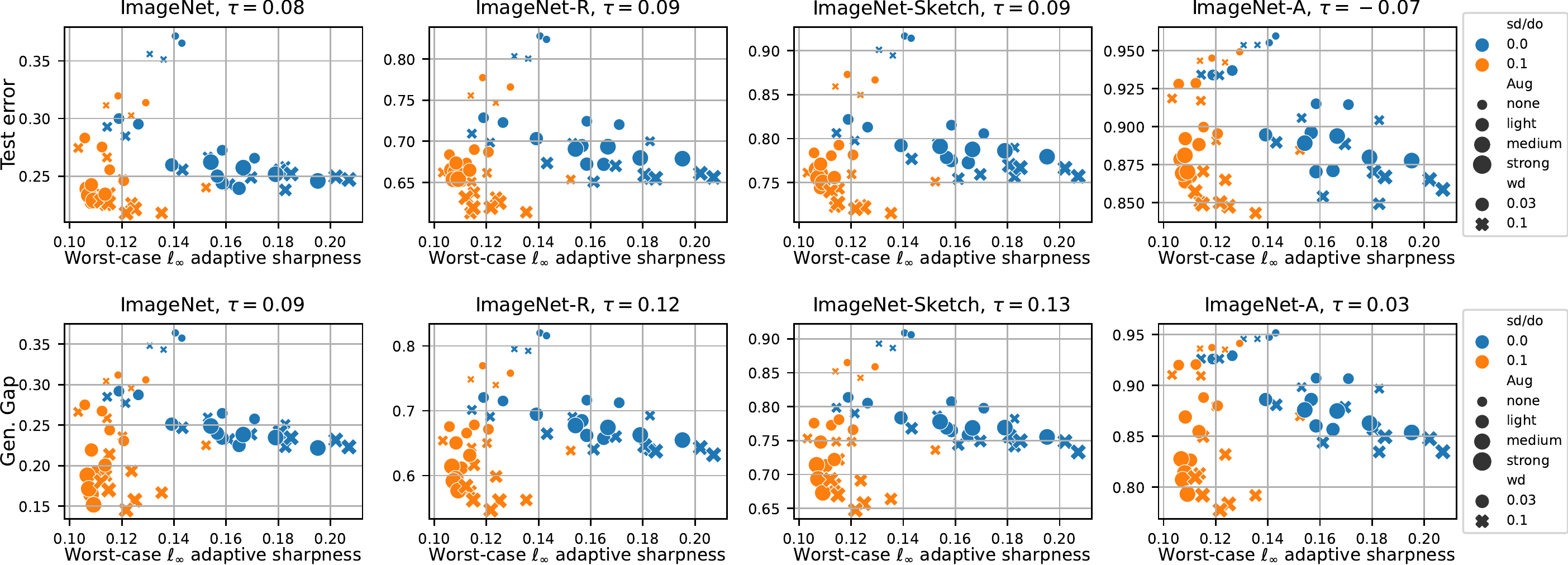} \vspace{1.5mm}\\
    \ \\
    \textbf{Without logit normalization} \\
    \includegraphics[width=1.0\textwidth]{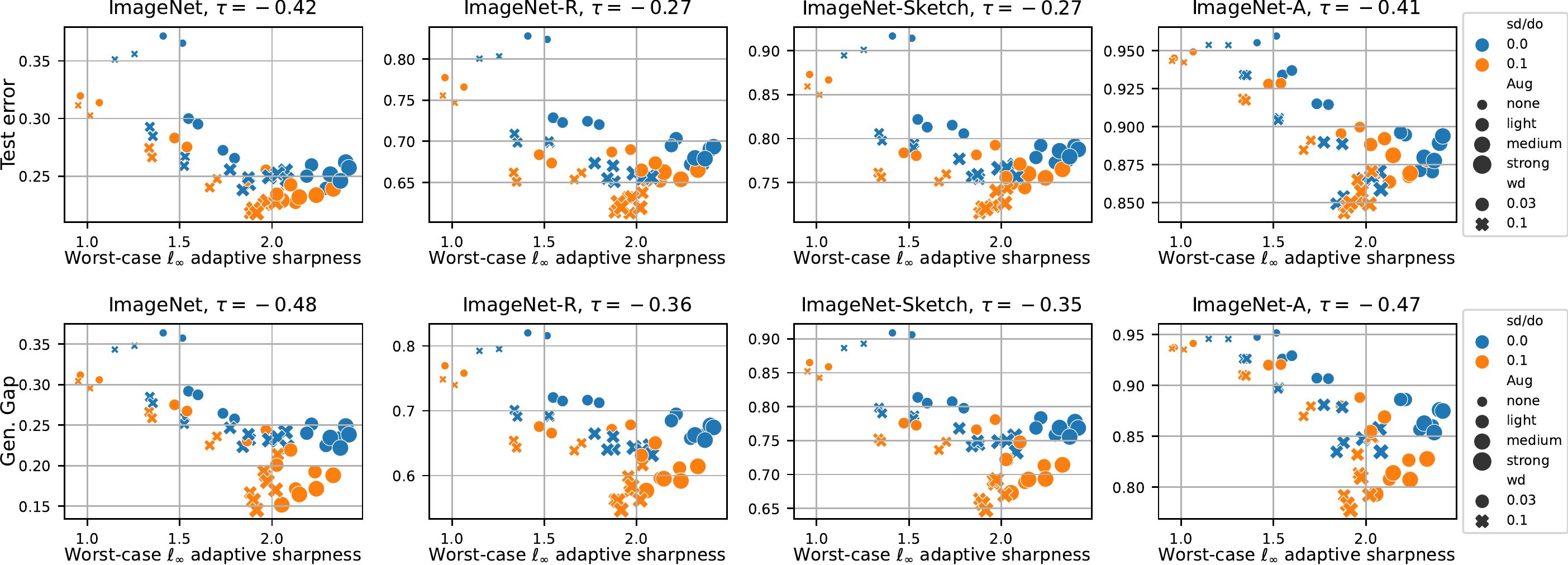}
    \end{tabular}
    \vspace{-1mm}
    \caption{\textbf{ViT-B/16 trained from scratch on ImageNet-1k}. 
    We show side-by-side the test error and \textbf{generalization gap} (\textbf{Gen. Gap}) for 56 models from Steiner et al. (2021) on ImageNet and its OOD variants vs. worst-case $\ell_\infty$ sharpness with (top) or without (bottom) normalization at $\rho=0.002$. The color indicates models trained with stochastic depth (sd) and dropout (do), markers and their size indicate the strength of weight decay (wd) and augmentations (aug), and $\tau$ indicates the rank correlation coefficient.}
    \label{fig:gen_gap_IN1k-scratch-main}
\end{figure*}
\begin{figure*}[h!] \centering \small
    \begin{tabular}{c} 
    \textbf{With logit normalization} \\
    \includegraphics[width=1.0\textwidth]{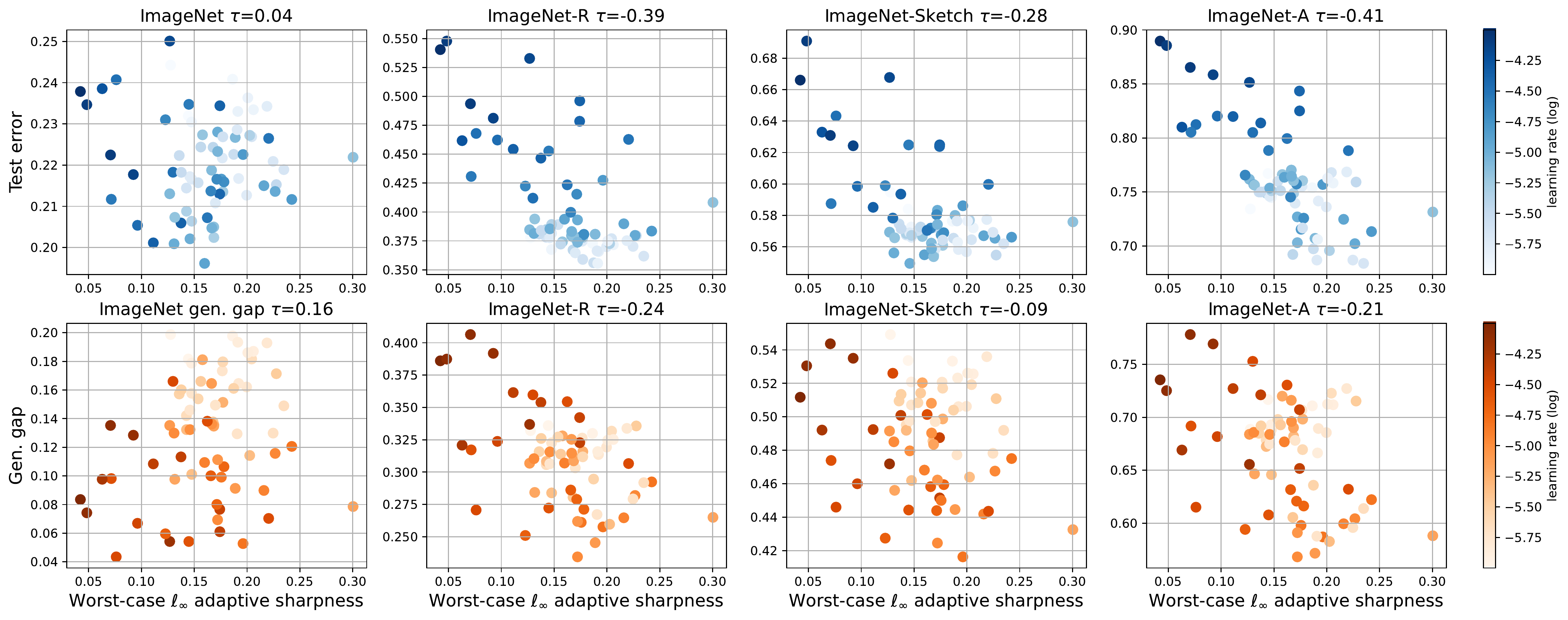} \vspace{1.5mm}\\
    \ \\
    \textbf{Without logit normalization} \\
    \includegraphics[width=1.0\textwidth]{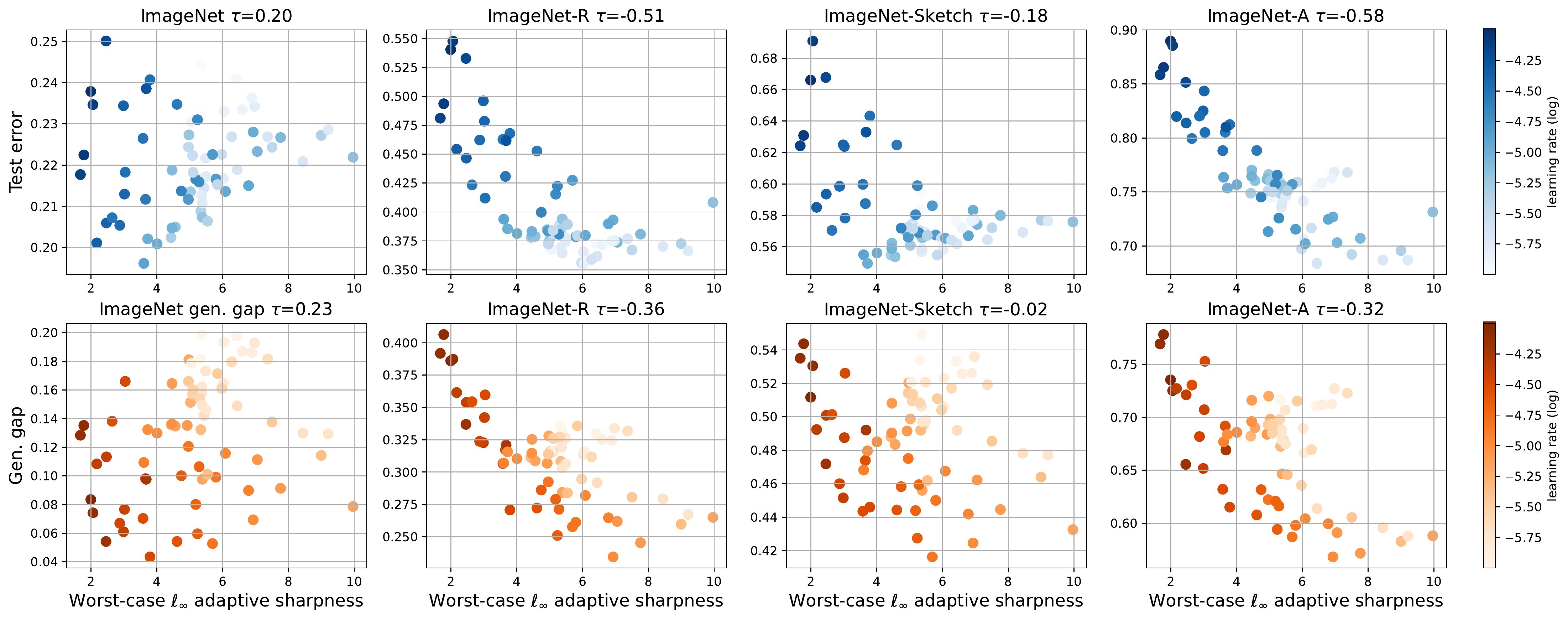}
    \end{tabular}
    \vspace{-1mm}
    \caption{\textbf{Fine-tuning CLIP ViT-B/32 on ImageNet-1k.} We show side-by-side the test error and \textbf{generalization gap} (\textbf{Gen. gap}) for 72 models from Wortsman et al. (2022) on ImageNet and its OOD variants vs. worst-case $\ell_\infty$ sharpness with (top) or without (bottom) normalization at $\rho=0.002$. Darker color indicates larger learning rate used for fine-tuning.}
    \label{fig:gen_gap_clip_singlerho}
\end{figure*}

\clearpage

\section{ImageNet-1k Models Trained from Scratch from \cite{steiner2021train}: Extra Details and Figures}
\label{sec:app_IN-1k-extrafigures}

\myparagraph{Experimental details.}
As explained in the main paper, the ViT-B/16-224 weights were trained on ImageNet-1k for 300 epochs with different hyperparameter settings, and subsequently fine-tuned on the same dataset for 20.000 steps with 2 different learning rates (0.01 and 0.03). The pretraining hyperparameters include 7 augmentation types (\textit{none, light0, light1, medium0, medium1, strong0, strong1}), which we group into (\textit{none, light, medium, strong}) in the plots. Weight decay was either 0.1 or 0.03, and dropout and stochastic depth were either both set to 0 or both set to 0.1. We evaluated the resulting 56 configurations. The model weights can be obtained from \url{https://github.com/google-research/vision_transformer}.

\myparagraph{Sharpness evaluation.} For sharpness evaluation we use 2048 data points from the training set split in 8 batches: we compute sharpness on each of them and report the average. For worst-case sharpness we use Auto-PGD for 20 steps (for each batch) with random uniform initialization in the feasible set, while for average-case sharpness we sample 100 different weights perturbations for every batch. We use the same sharpness evaluation for all ImageNet-1k and MNLI models.
For convenience we restate the algorithm of Auto-PGD in Algorithm~\ref{alg:apgd}: it follows the original version presented in \citet{croce2020reliable} while using the network weights $\wv$ as optimization variables instead of the input image components. In Alg.~\ref{alg:apgd} we denote $f$ the target objective function (cross-entropy loss on the batch of images in our experiments), $S$ the feasible set of perturbations and $P_S$ the projection onto it. Also, $\eta$ and $W$ are fixed hyperparameters (we keep the original values), and the two conditions in Line 13 can be found in \citet{croce2020reliable}.

\begin{algorithm}[h]
    \caption{Auto-PGD}\label{alg:apgd}
    \begin{algorithmic}[1]
        \STATE {\bfseries Input:} objective function $f$, perturbation set $S$, $\wv^{(0)}$, $\eta$, $N_\textrm{iter}$, $W=\{w_0, \ldots, w_n\}$ %
        \STATE {\bfseries Output:} $\wv_\textrm{max}$, $f_\textrm{max}$
        \STATE $\iter{\wv}{1} \gets P_\mathcal{S}\left(\wv^{(0)} + \eta \nabla f(\wv^{(0)})\right)$
        \STATE $f_\textrm{max}\gets \max\{f(\iter{\wv}{0}), f(\iter{\wv}{1})\}$
        \STATE $\wv_\textrm{max} \gets \iter{\wv}{0}$ \textbf{if} $f_\textrm{max}\equiv f(\iter{\wv}{0})$ \textbf{else} $\wv_\textrm{max} \gets \iter{\wv}{1}$ %
        \FOR{$k=1$ {\bfseries to}  $N_\textrm{iter}- 1$}
        \STATE $\iter{\zv}{k+1} \gets P_\mathcal{S}\left(\iter{\wv}{k} + \eta\nabla f(\iter{\wv}{k})\right)$
        \STATE $\iter{\wv}{k+1} \gets  P_\mathcal{S} \left(\iter{\wv}{k} + \alpha(\iter{\zv}{k+1}- \iter{\wv}{k})  + (1-\alpha) (\iter{\wv}{k} - \iter{\wv}{k-1})\right)$
        \IF{$f(\iter{\wv}{k+1}) > f_\textrm{max}$}
        \STATE $\wv_\textrm{max} \gets \iter{\wv}{k+1}$ and $f_\textrm{max}\gets f(\iter{\wv}{k+1})$
        \ENDIF
        \IF{$k\in W$}
        \IF{Condition 1 {\bfseries or} Condition 2}
        \STATE $\eta \gets \eta / 2$ and  $\iter{\wv}{k+1} \gets \wv_\textrm{max}$
        \ENDIF
        \ENDIF
        \ENDFOR
    \end{algorithmic}
\end{algorithm}

\myparagraph{Extra figures.} For each sharpness definition we show for three values of $\rho$ the correlation between test error on ImageNet (in-distribution) and on the various distribution shifts.  In particular, we use worst-case $\ell_\infty$ adaptive sharpness with  (Fig.~\ref{fig:i1k_worst-adaptive-logit}) and without (Fig.~\ref{fig:i1k_worst-adaptive-normal}) logit normalization, and average-case adaptive sharpness with  (Fig.~\ref{i1k_avg-adpative-logit}) and without (Fig.~\ref{i1k_avg-adpative-normal}) logit normalization. For all figures the color shows stochastic depth / dropout, the marker size corresponds to augmentation strength, and the marker type to  weight decay. In addition to the OOD-datasets from the main paper, we here report the results for ImageNet-V2 \citep{recht2019imagenet} and ObjectNet \citep{barbu2019objectnet}. ImageNet-V2 consists in a new test set for ImageNet models and is sampled from the same image distribution as the existing validation set: then, the performance of the classifiers on it are highly correlated to that on ImageNet validation set, and ImageNet-V2 cannot be considered a distribution shift in the same sense as the other datasets. In general, we observe that sharpness variants are not predictive of the performance on ImageNet and the OOD datasets, typically only separating models by stochastic depth / dropout, but not ranking them according to generalization properties, and often even yielding a negative correlation with OOD test error. The only case where low sharpness indicates low test-error is for logit-normalized average-case adaptive sharpness on ImageNet and ImageNet-v2. For the remaining OOD datasets, however, there are always models with low sharpness and larger test error.

\begin{figure}[p] \centering \small
    \begin{tabular}{c}
        \textbf{Worst-case $\ell_\infty$ adaptive sharpness with logit normalization}\\
        \includegraphics[width=.8\columnwidth]{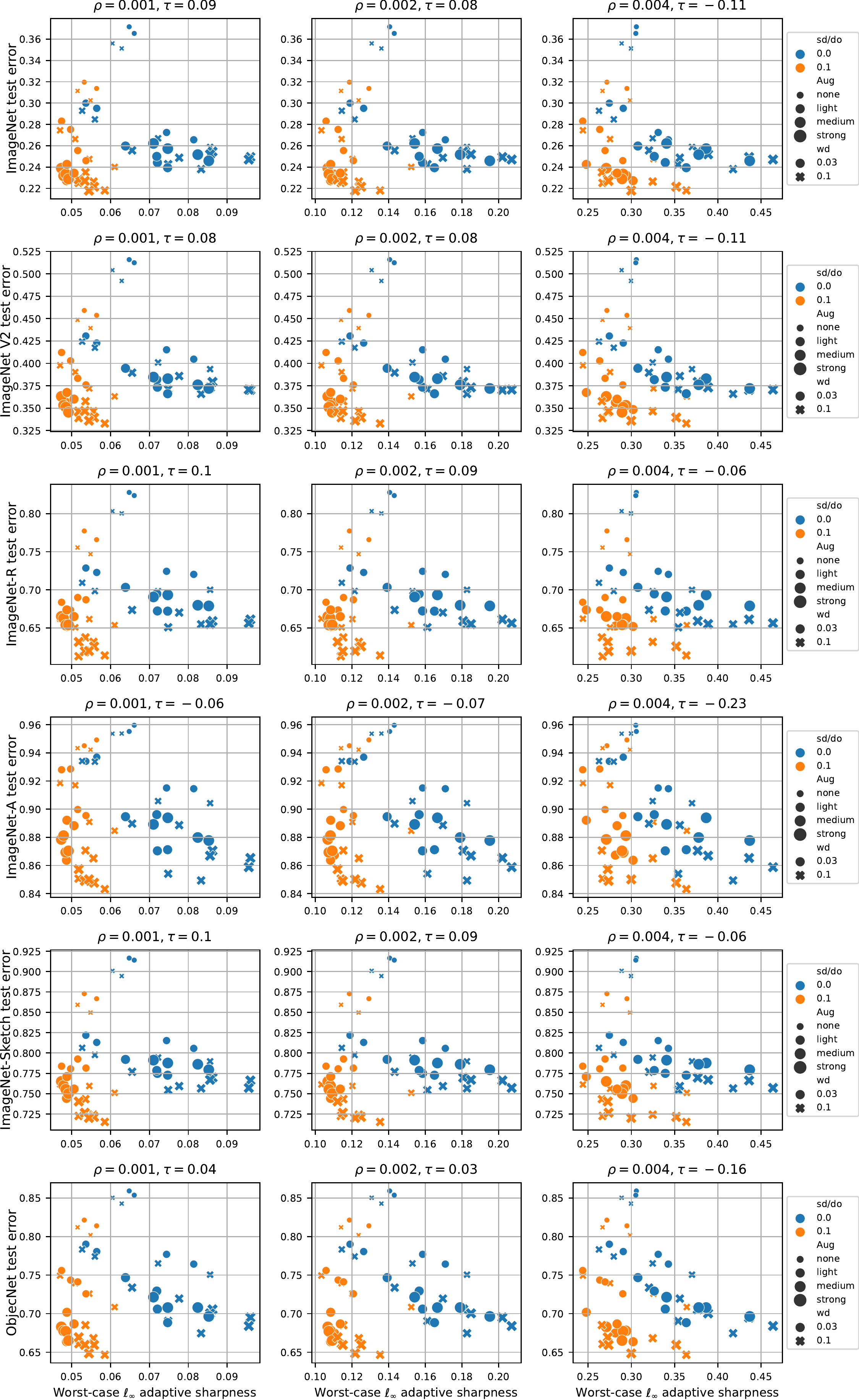}
    \end{tabular}
    \caption{Correlation of sharpness with generalization on ImageNet for different $\rho$ and  for different distribution shifts.}\label{fig:i1k_worst-adaptive-logit}
\end{figure}

\begin{figure}[p] \centering \small
    \begin{tabular}{c}
        \textbf{Worst-case $\ell_\infty$ adaptive sharpness without logit normalization}\\
        \includegraphics[width=.8\columnwidth]{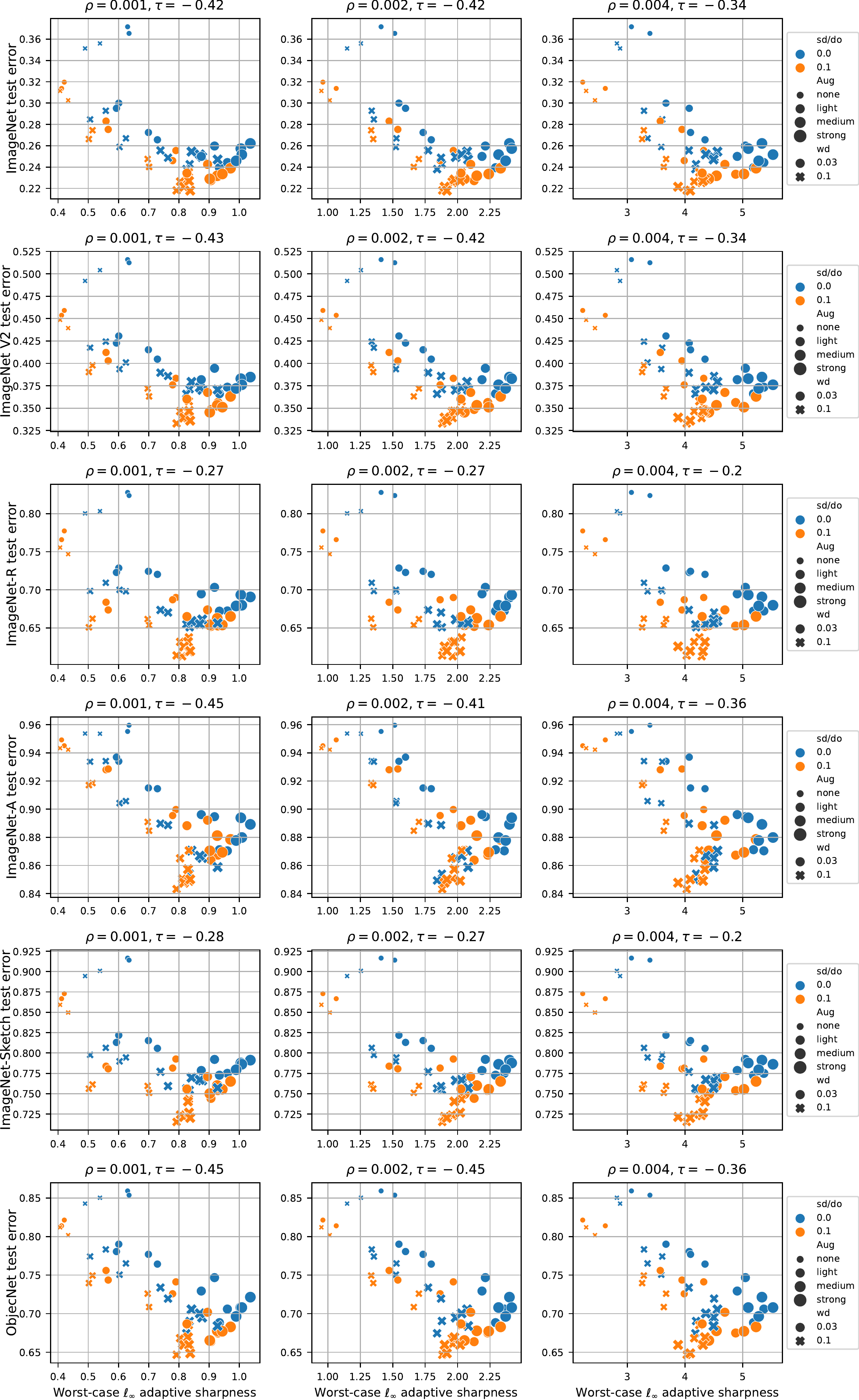}
    \end{tabular}
    \caption{Correlation of sharpness with generalization on ImageNet for different $\rho$ and for different distribution shifts.}\label{fig:i1k_worst-adaptive-normal}
\end{figure}

\begin{figure}[p] \centering \small
    \begin{tabular}{c}
        \textbf{Average-case adaptive sharpness with logit normalization}\\
        \includegraphics[width=.8\columnwidth]{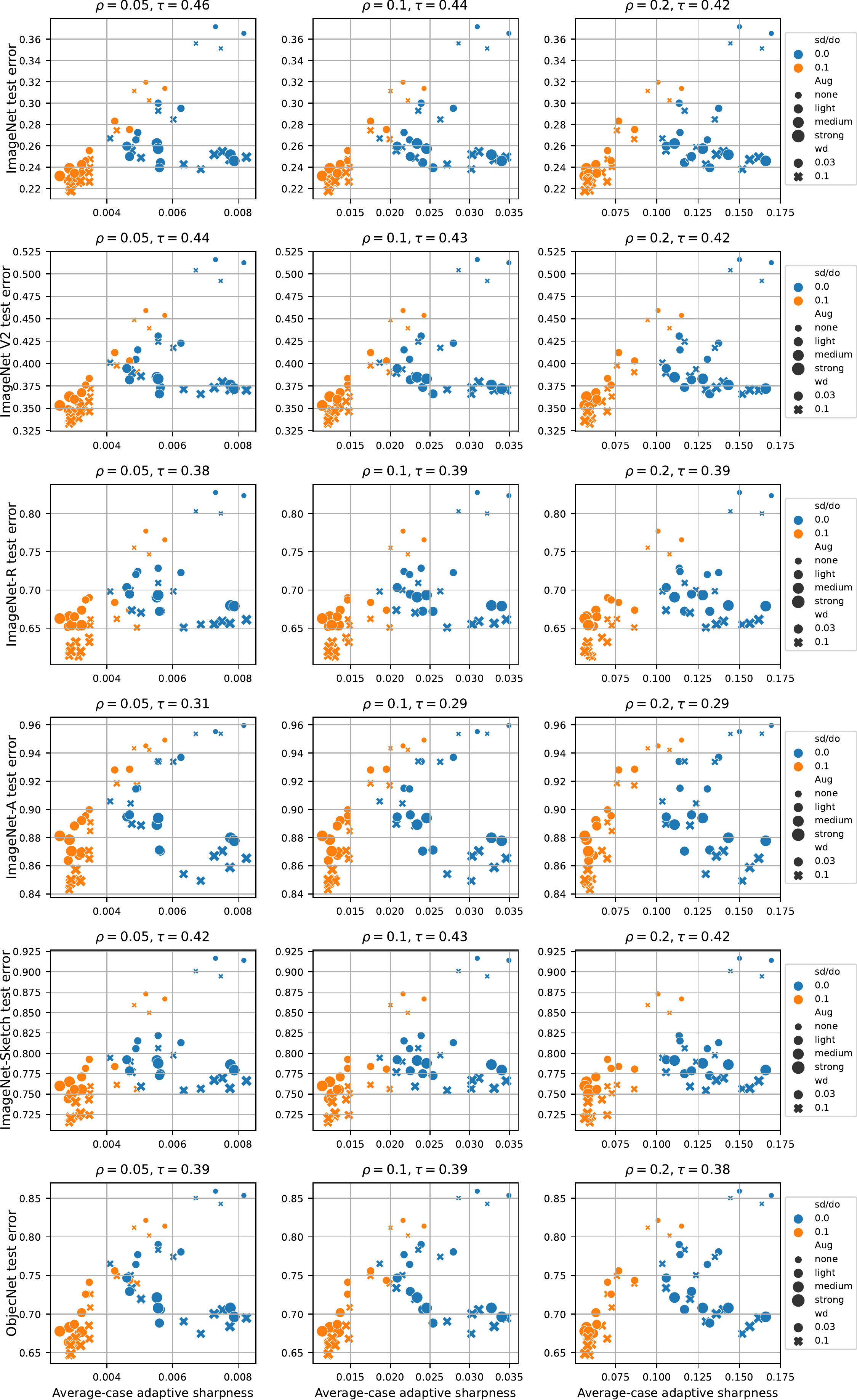}
    \end{tabular}
    \caption{Correlation of sharpness with generalization on ImageNet for different $\rho$ and for different distribution shifts.}\label{i1k_avg-adpative-logit}
\end{figure}

\begin{figure}[p] \centering \small
    \begin{tabular}{c}
        \textbf{Average-case adaptive sharpness without logit normalization}\\
        \includegraphics[width=.8\columnwidth]{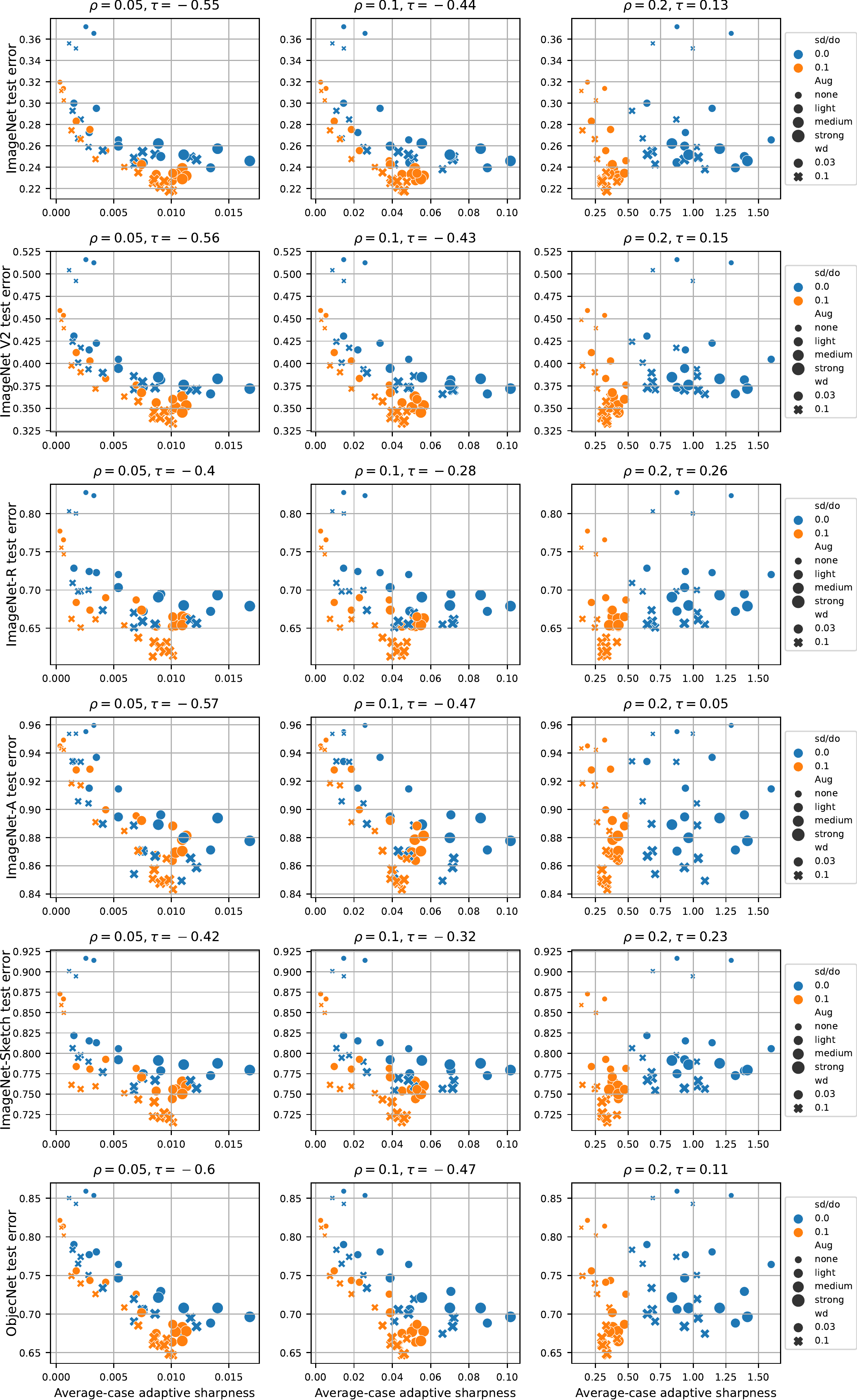}
    \end{tabular}
    \caption{Correlation of sharpness with generalization on ImageNet for different $\rho$ and for different distribution shifts.}\label{i1k_avg-adpative-normal}
\end{figure}

\clearpage

\section{Fine-tuning of ImageNet-1k Models Pretrained on ImageNet-21k from \citet{steiner2021train}: Extra Figures and Details}
\label{sec:app_IN21k-IN1k-extrafigures}
\textbf{Experimental details.} All hyperparameter settings are identical to those explained in Appendix~\ref{sec:app_IN-1k-extrafigures}, only the pretraining dataset is ImageNet-21k instead of ImageNet-1k. Since two of the models showed close to 100\% test error, we did not evaluate them, resulting in 54 instead of 56 models.

\textbf{Extra figures.} Like in Appendix~\ref{sec:app_IN-1k-extrafigures} we show each sharpness definition for three values of $\rho$ and its the correlation to test error on ImageNet (in-distribution) and on the various distribution shifts. The observations are very similar to those on ImageNet-1k pretraining: sharpness variants are not predictive of the performance on ImageNet and the distribution shift datasets, typically only separating models by stochastic depth / dropout, and often even yielding a negative correlation with OOD test error. 

\begin{figure}[p] \centering \small
    \begin{tabular}{c}
        \textbf{Worst-case $\ell_\infty$ adaptive sharpness with logit normalization}\\
        \includegraphics[width=.8\columnwidth]{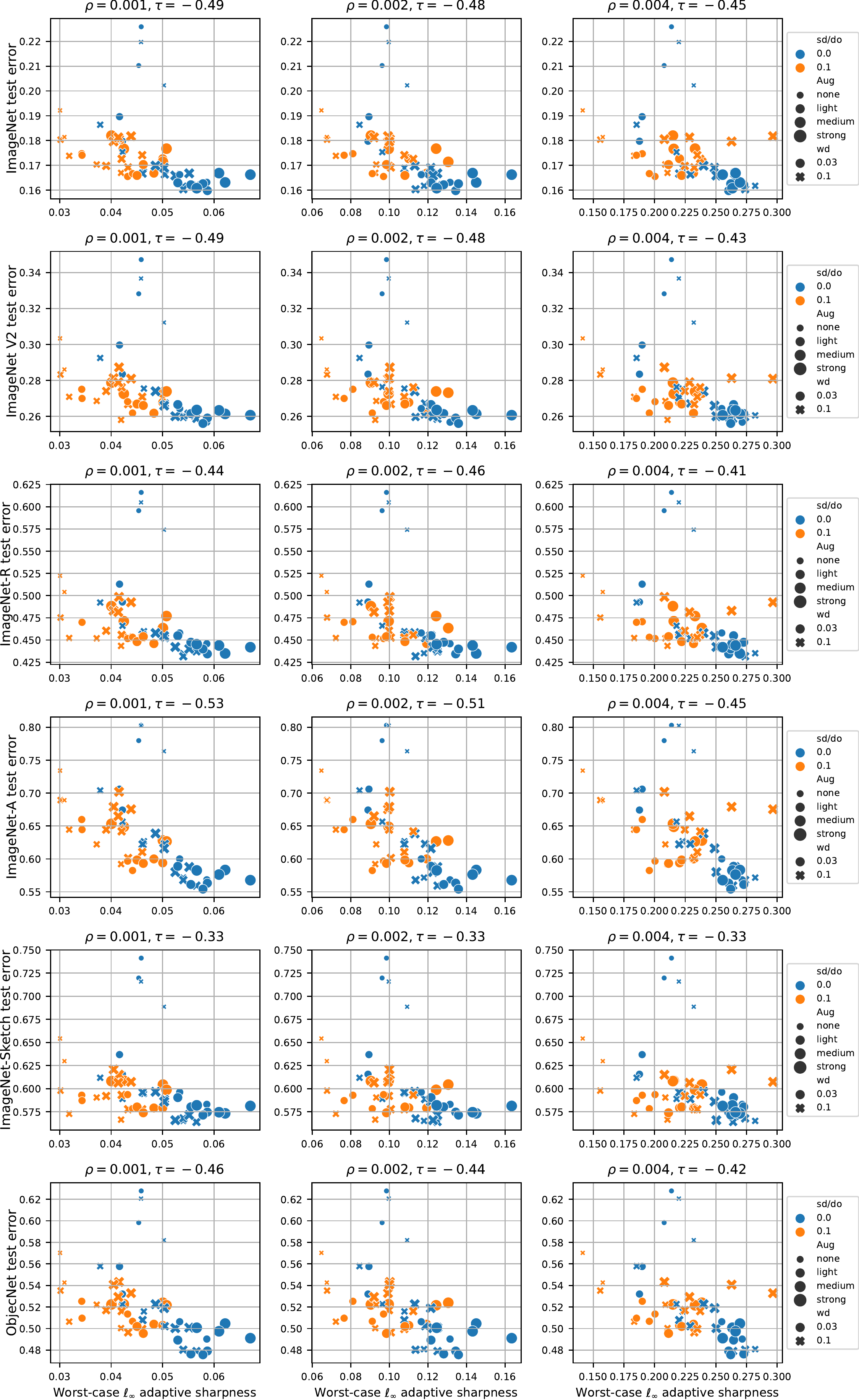}
    \end{tabular}
    \caption{Correlation of sharpness with generalization on ImageNet for different $\rho$ and for different distribution shifts.}\label{i21k_worst-adaptive-logit}
\end{figure}

\begin{figure}[p] \centering \small
    \begin{tabular}{c}
        \textbf{Worst-case $\ell_\infty$ adaptive sharpness without logit normalization}\\
        \includegraphics[width=.8\columnwidth]{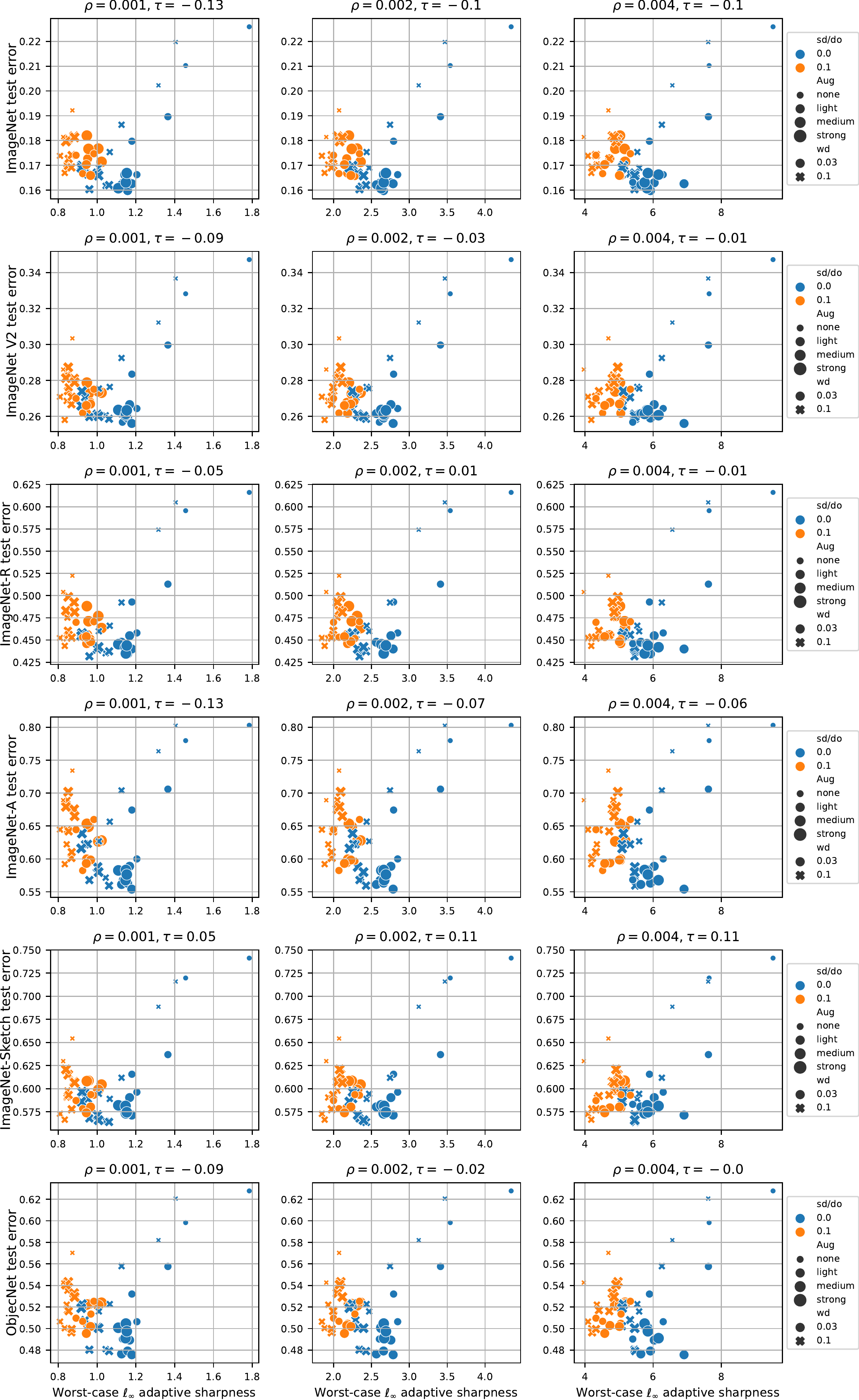}
    \end{tabular}
    \caption{Correlation of sharpness with generalization on ImageNet for different $\rho$ and for different distribution shifts.}\label{i21k_worst-adaptive-normal}
\end{figure}

\begin{figure}[p] \centering \small
    \begin{tabular}{c}
        \textbf{Average-case adaptive sharpness with logit normalization}\\
        \includegraphics[width=.8\columnwidth]{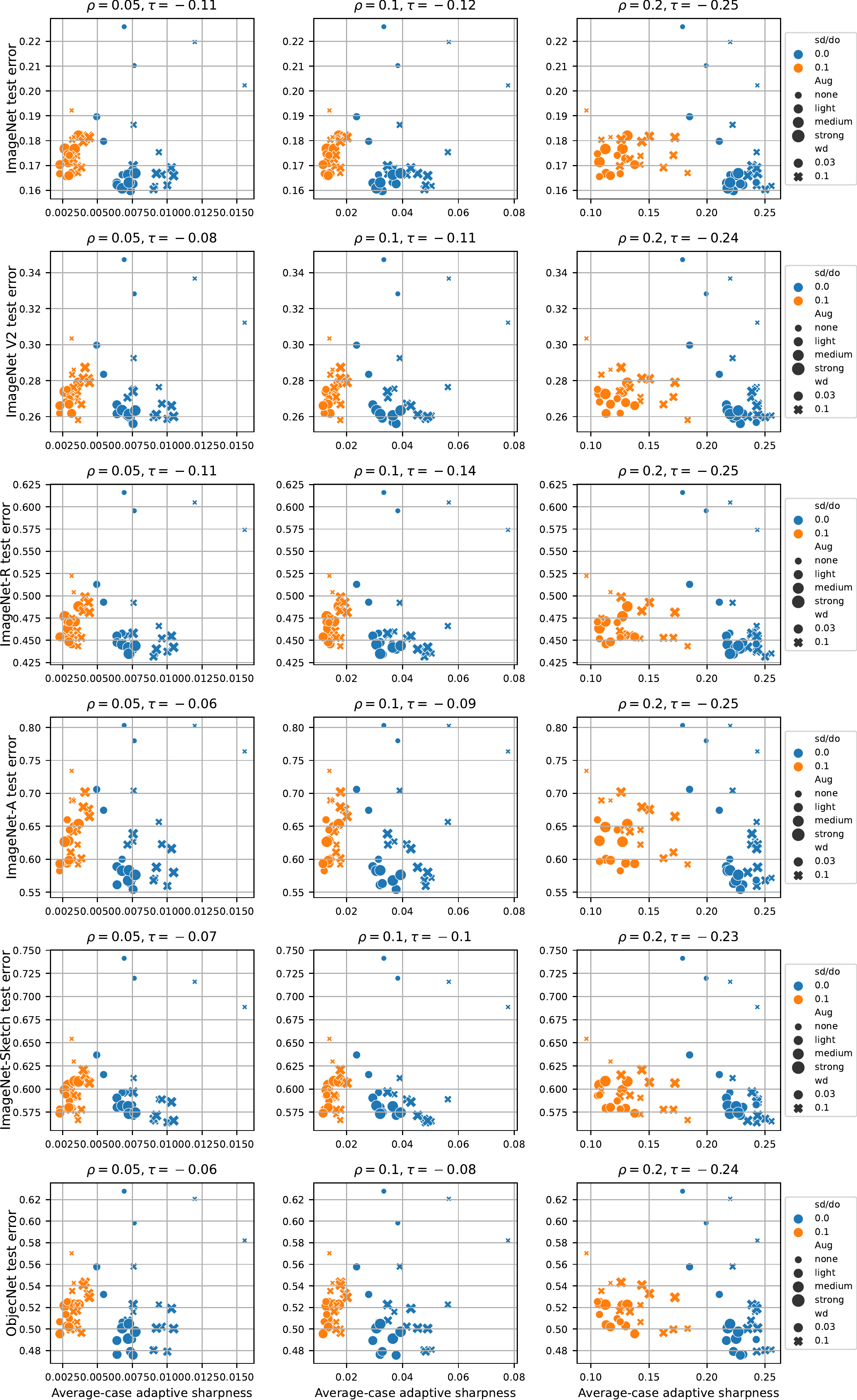}
    \end{tabular}
    \caption{Correlation of sharpness with generalization on ImageNet for different $\rho$ and for different distribution shifts.}\label{i21k_avg-adpative-logit}
\end{figure}

\begin{figure}[p] \centering \small
    \begin{tabular}{c}
        \textbf{Average-case adaptive sharpness without logit normalization}\\
        \includegraphics[width=.8\columnwidth]{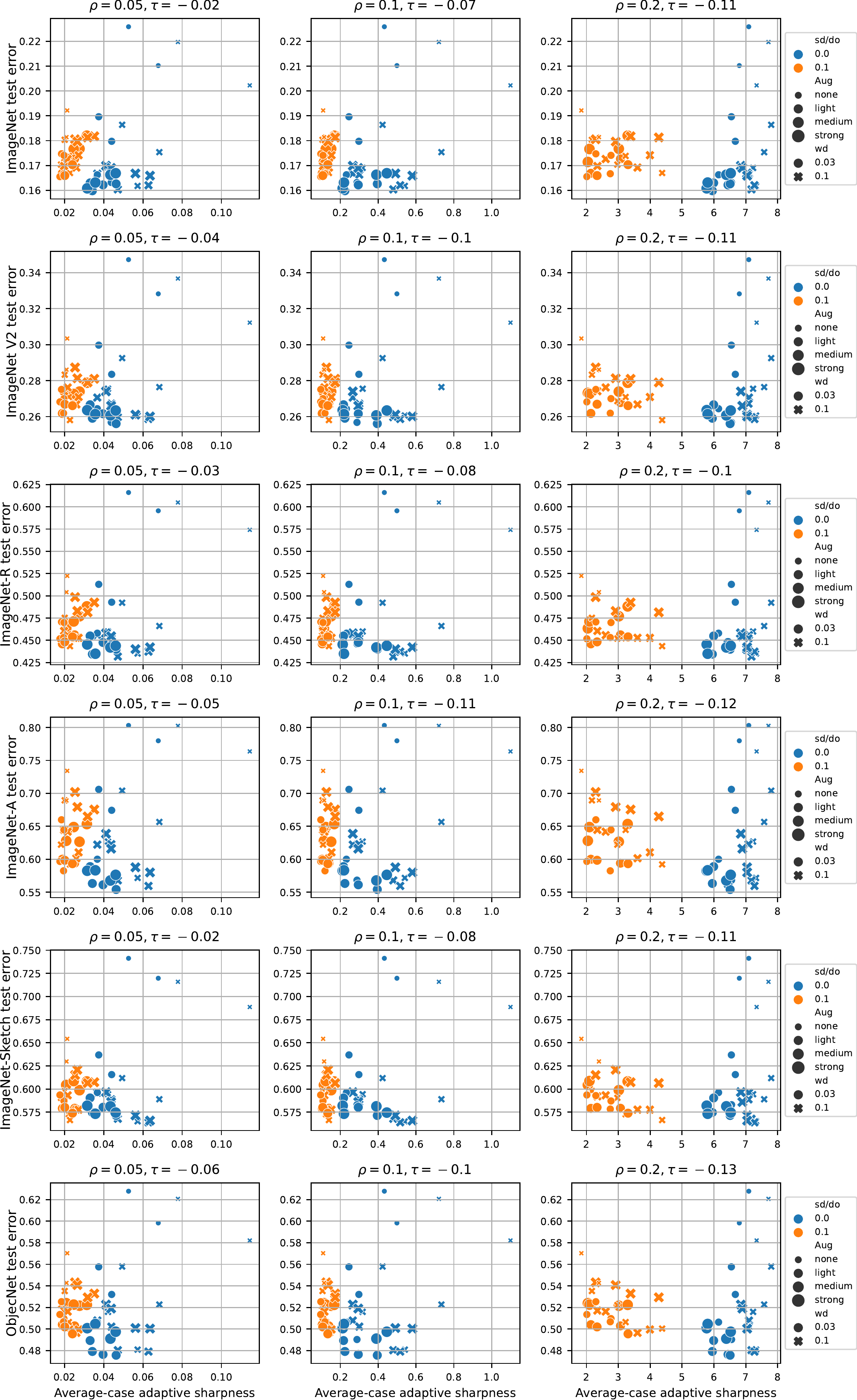}
    \end{tabular}
    \caption{Correlation of sharpness with generalization on ImageNet for different $\rho$ and for different distribution shifts.}\label{i21k_avg-adpative-normal}
\end{figure}

\clearpage

\section{ImageNet Models both Pretrained on ImageNet-1k and ImageNet-21k from \citet{steiner2021train}}
\label{sec:app_bothIN21k-IN1k-extrafigures}
For completeness, we here show for two sharpness definitions the models pretrained on ImageNet-21k and ImageNet-1k together. We find the better-generalizing models pretrained on ImageNet-21k to have significantly higher worst-case sharpness, and roughly equal or higher logit-normalized average-case adaptive sharpness, underlining that the models generalization properties resulting from different pretraining datasets are not captured.

\begin{figure}[p] \centering \small
    \begin{tabular}{c}
        \textbf{Worst-case $\ell_\infty$ adaptive sharpness without logit normalization}\\
        \includegraphics[width=.8\columnwidth]{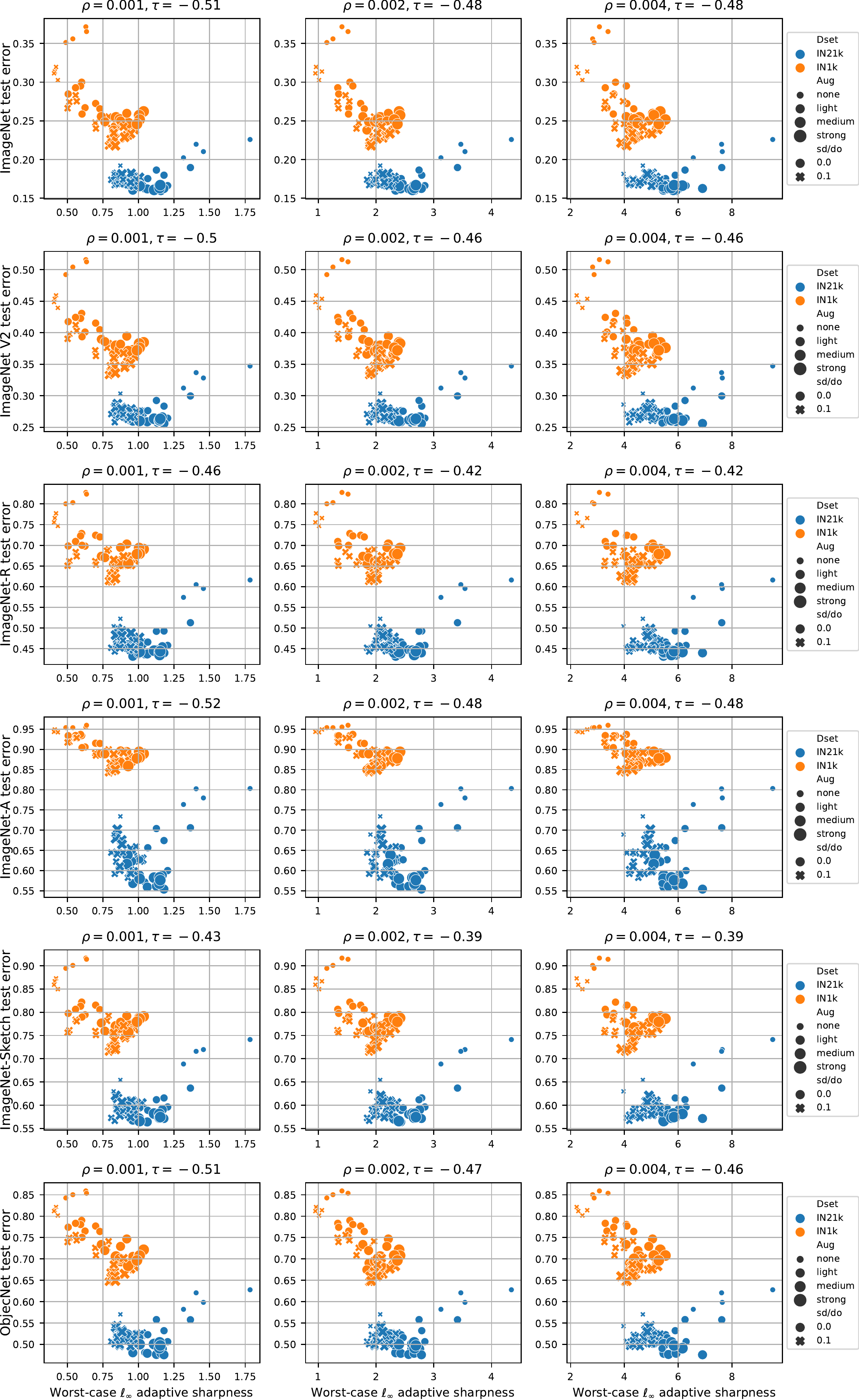}
    \end{tabular}
    \caption{Correlation of sharpness with generalization on ImageNet-1k for different $\rho$ and for different distribution shifts.}
\end{figure}

\begin{figure}[p] \centering \small
    \begin{tabular}{c}
        \textbf{Average-case adaptive sharpness with logit normalization}\\
        \includegraphics[width=.8\columnwidth]{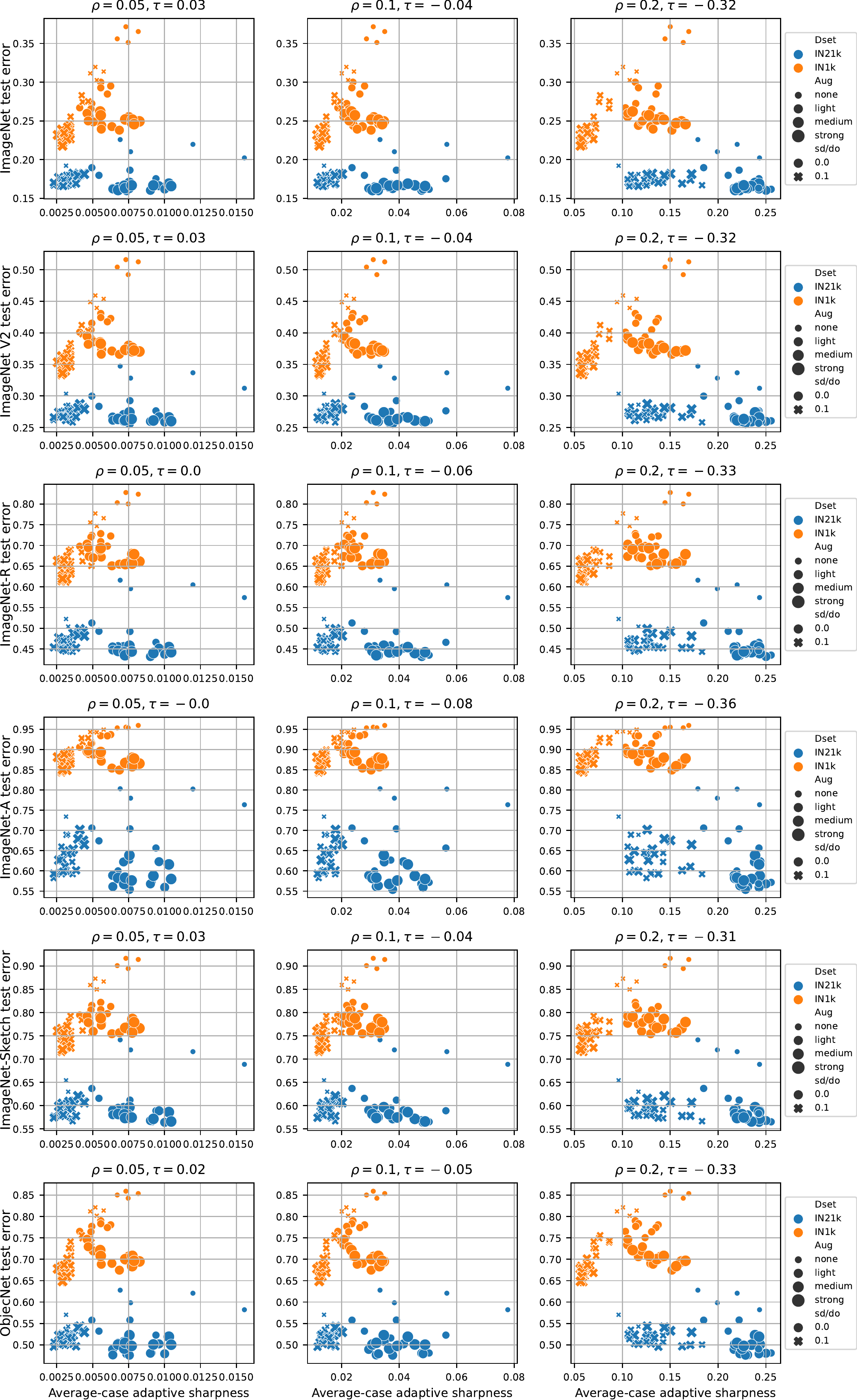}
    \end{tabular}
    \caption{Correlation of sharpness with generalization on ImageNet-1k for different $\rho$ and for different distribution shifts.}\label{both-avg-logit}
\end{figure}

\clearpage

\section{Fine-tuning CLIP Models on ImageNet: Extra Details and Figures}
\label{sec:app_finetuning_clip_on_imagenet}

\begin{figure}[h] \centering\small
    \tabcolsep=1.1pt
    \begin{tabular}{c c cc}
        \multicolumn{2}{c}{\textbf{With logit normalization}} &\multicolumn{2}{c}{\textbf{Without logit normalization}}\\
        \includegraphics[width=.25\columnwidth]{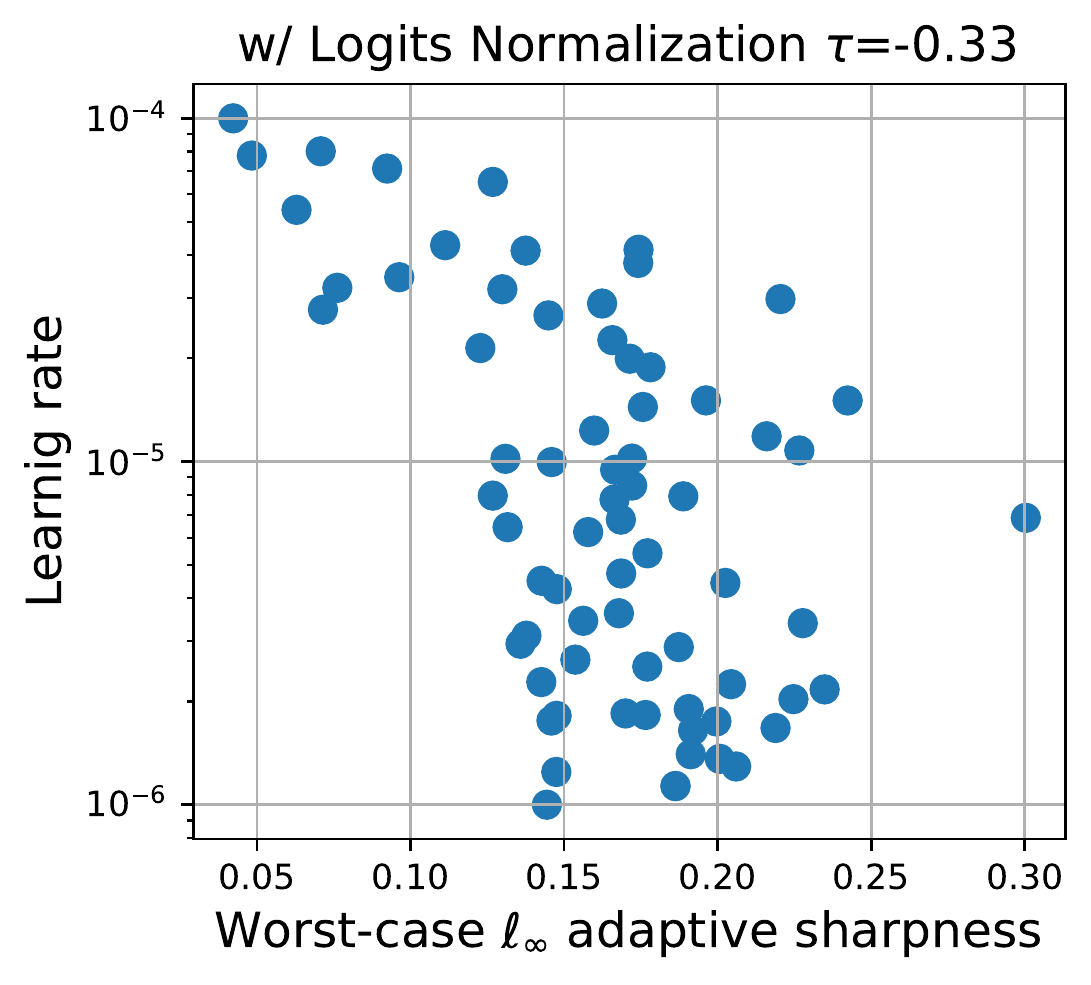}&\includegraphics[width=.25\columnwidth]{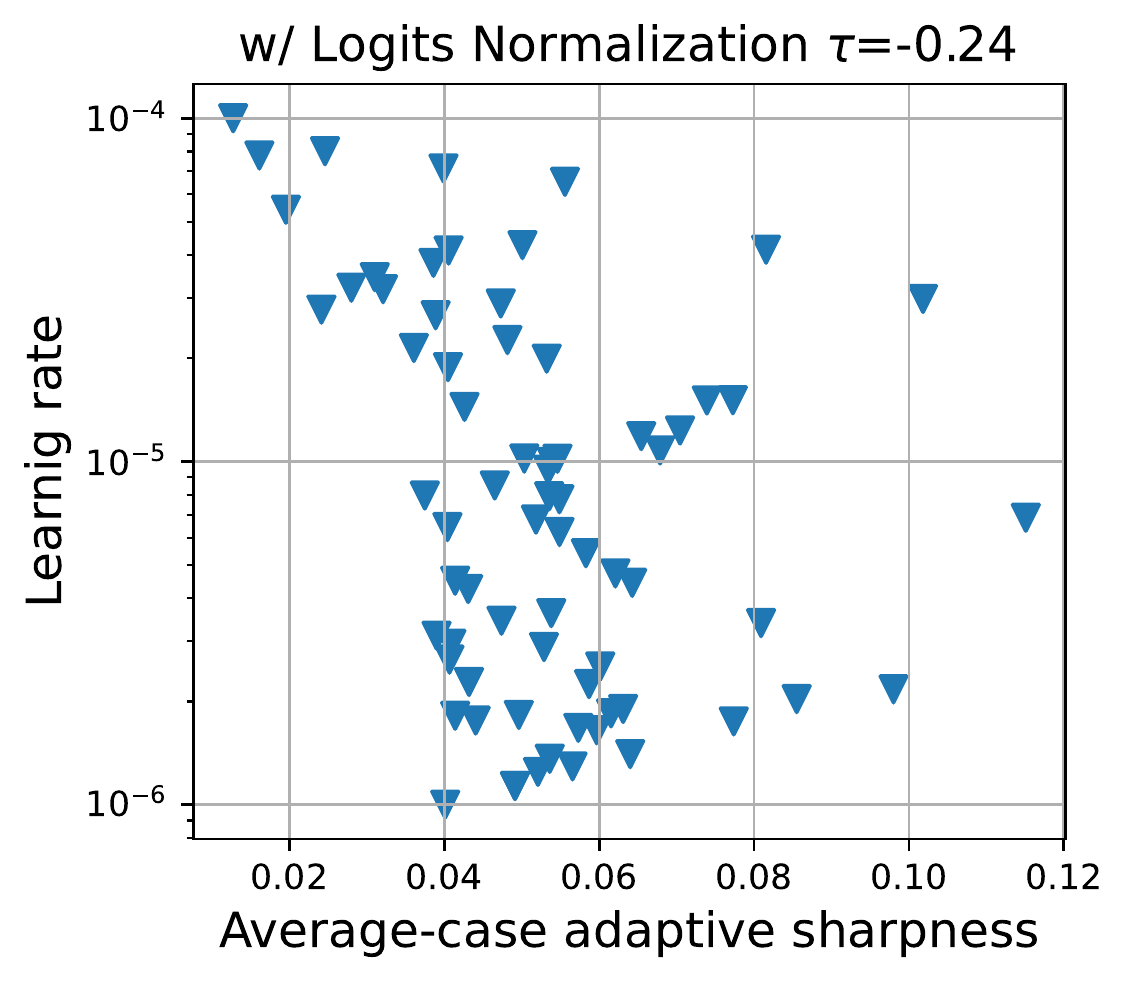}
        &
        \includegraphics[width=.25\columnwidth]{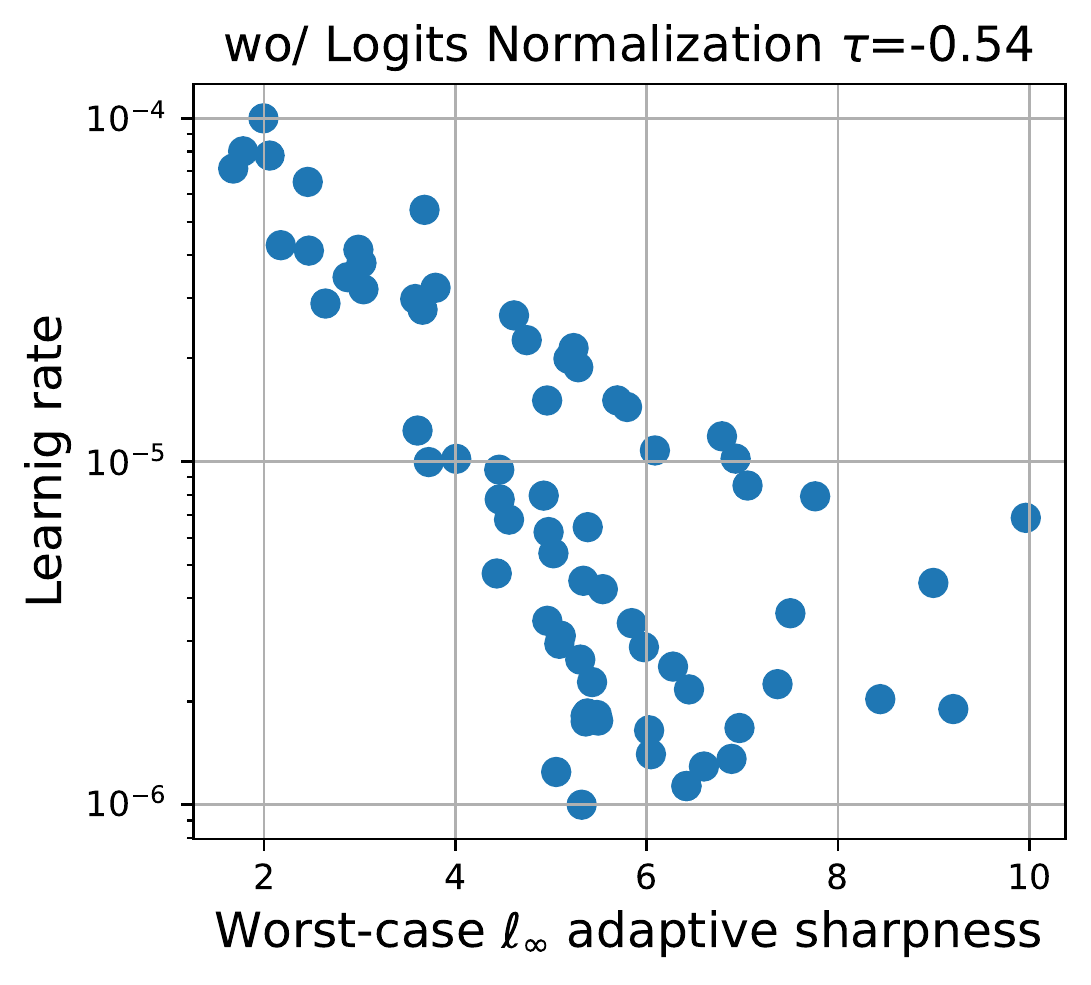}&\includegraphics[width=.25\columnwidth]{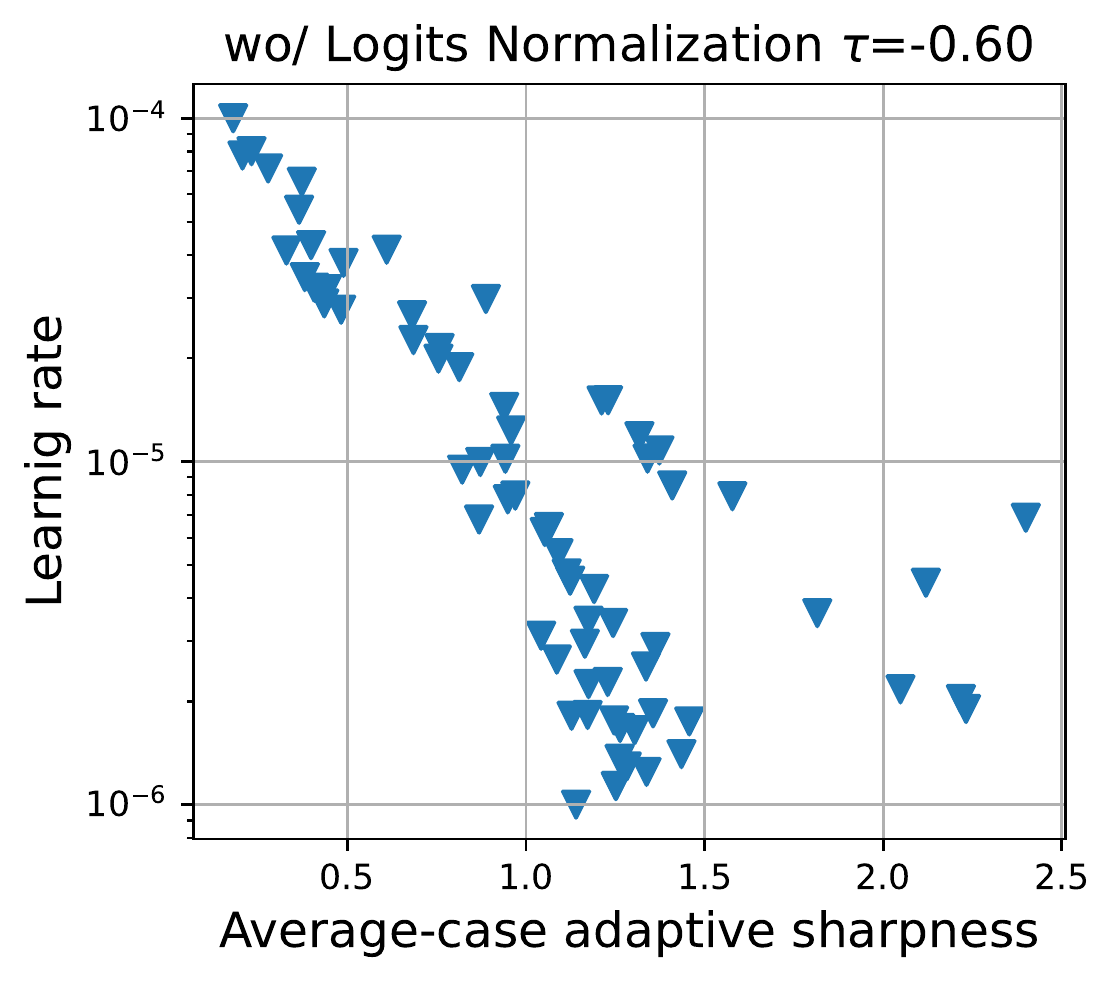}
        
    \end{tabular}
    \caption{\textbf{Fine-tuning CLIP ViT-B/32 on ImageNet-1k.} Sharpness negatively correlates with the size of learning rate for fine-tuning, both with (left) and without (right) using logit normalization. For worst-case sharpness $\rho=0.002$ is used, for average-case sharpness $\rho=0.1$.} 
    \label{fig:imagenet_clip_lr_sharpness}
    
\end{figure}

\begin{figure}[p] \centering \small
    \begin{tabular}{c}
        \textbf{Worst-case $\ell_\infty$ adaptive sharpness with logit normalization}\\
        \includegraphics[width=.8\columnwidth]{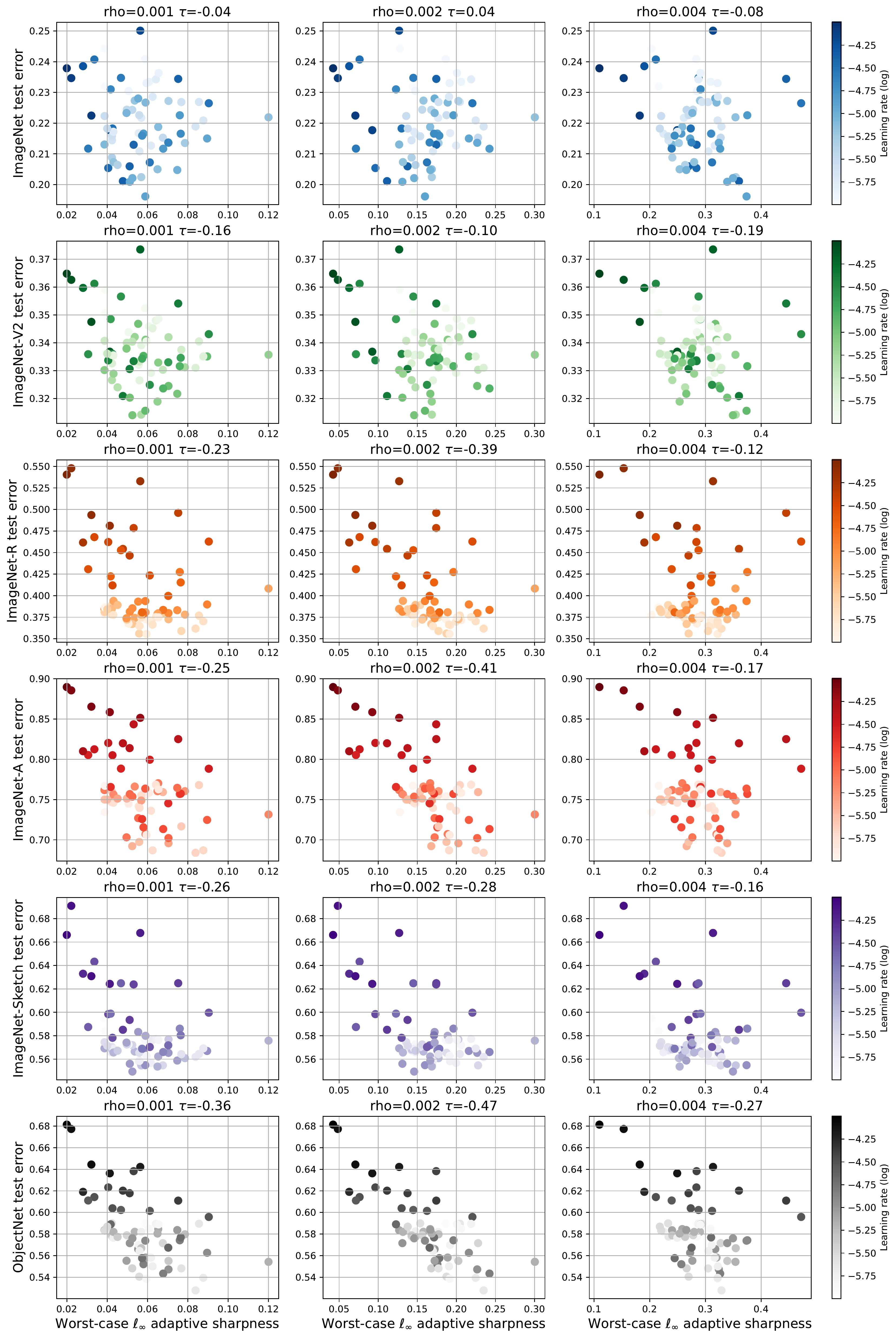}
    \end{tabular}
    \caption{Correlation of sharpness with varying $\rho$ with generalization on ImageNet for different distribution shifts.}\label{fig:imagenet_allshifts_worstcase_sh_logitsnorm}
\end{figure}

\begin{figure}[p] \centering \small
    \begin{tabular}{c}
        \textbf{Worst-case $\ell_\infty$ adaptive sharpness without logit normalization}\\
        \includegraphics[width=.8\columnwidth]{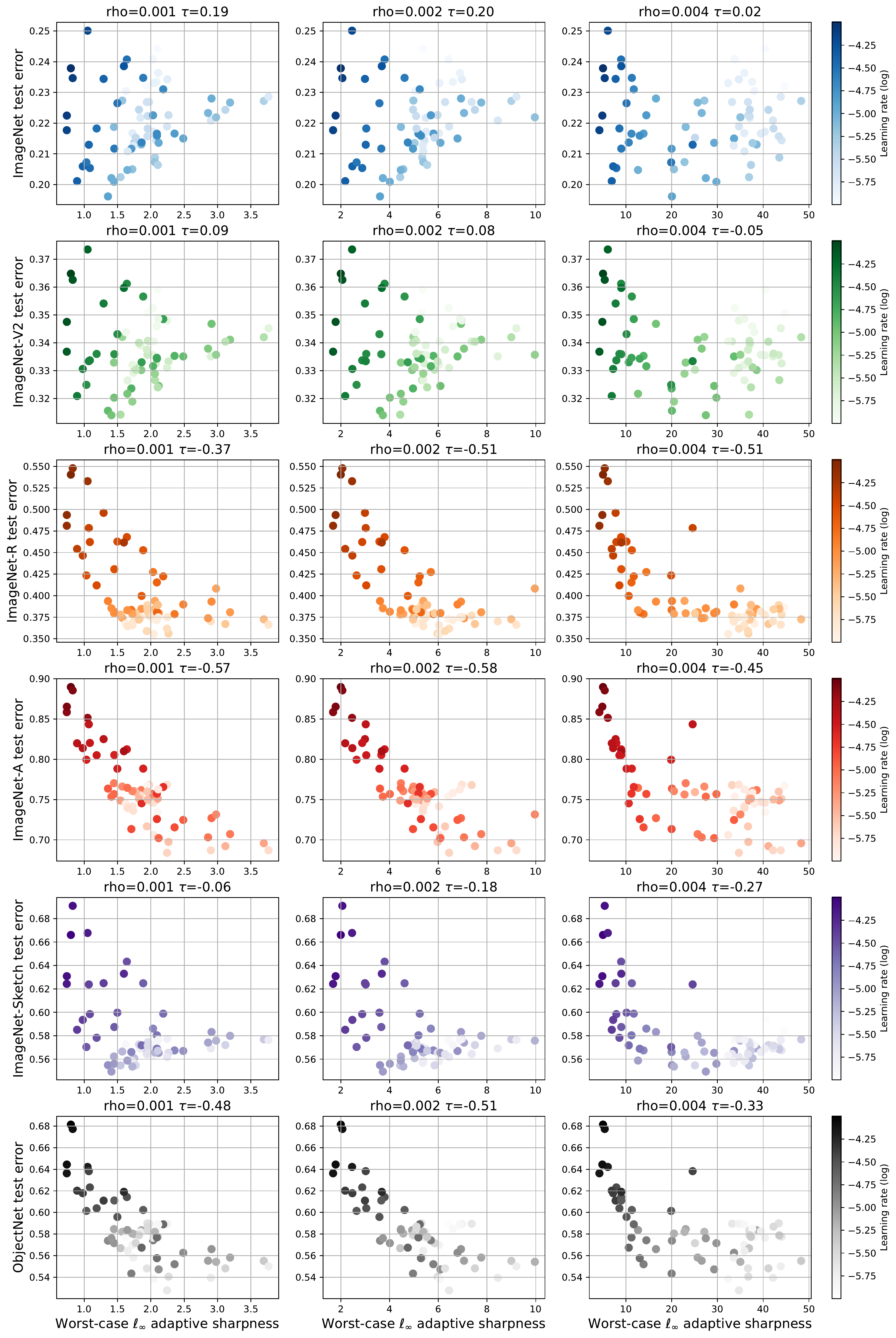}
    \end{tabular}
    \caption{Correlation of sharpness with varying $\rho$ with generalization on ImageNet for different distribution shifts.}\label{fig:imagenet_allshifts_worstcase_sh}
\end{figure}

\begin{figure}[p] \centering \small
    \begin{tabular}{c}
        \textbf{Average-case adaptive sharpness with logit normalization}\\
        \includegraphics[width=.8\columnwidth]{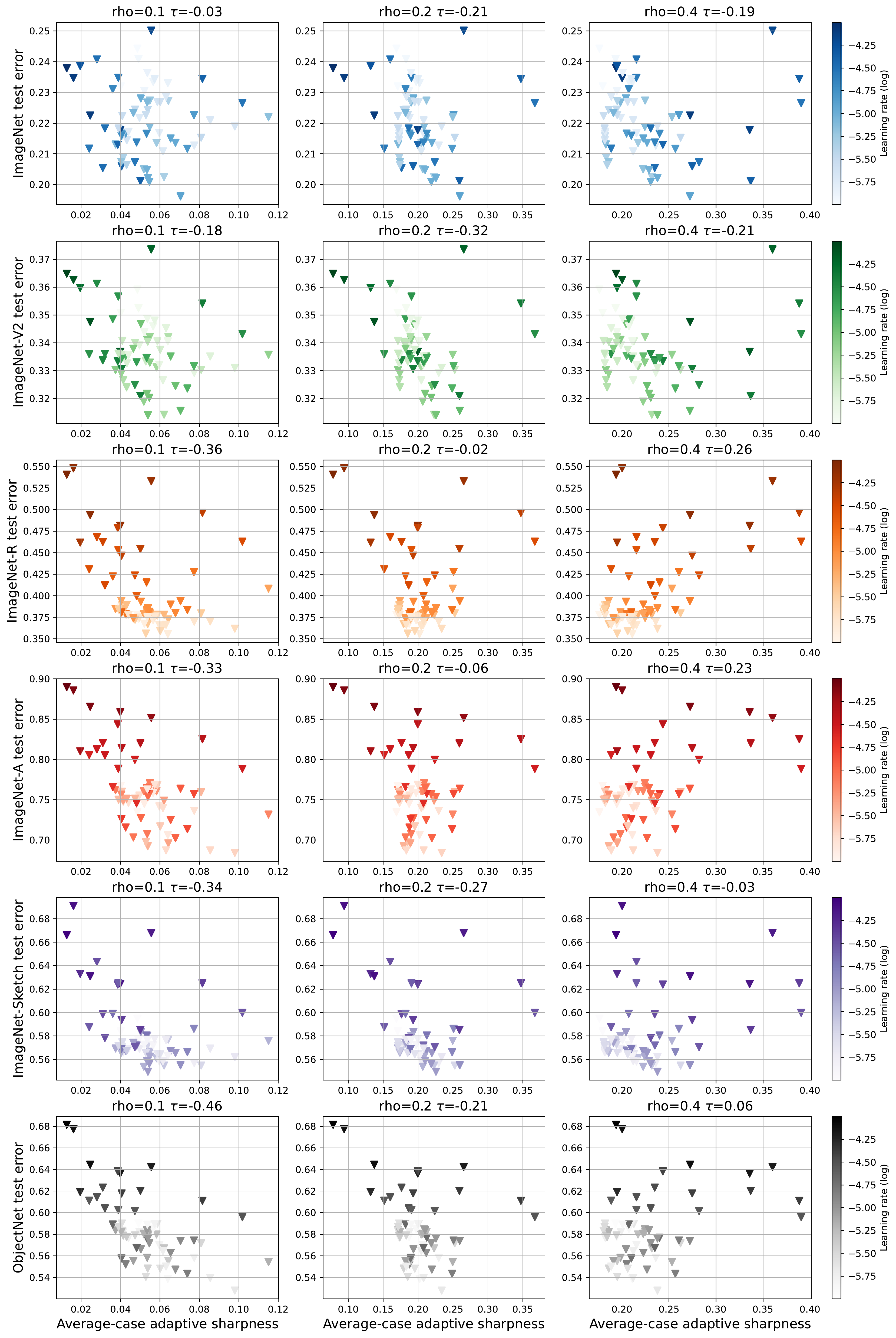}
    \end{tabular}
    \caption{Correlation of sharpness with varying $\rho$ with generalization on ImageNet for different distribution shifts.} \label{fig:imagenet_allshifts_averagecase_sh_logitsnorm}
\end{figure}

\begin{figure}[p] \centering \small
    \begin{tabular}{c}
        \textbf{Average-case adaptive sharpness without logit normalization}\\
        \includegraphics[width=.8\columnwidth]{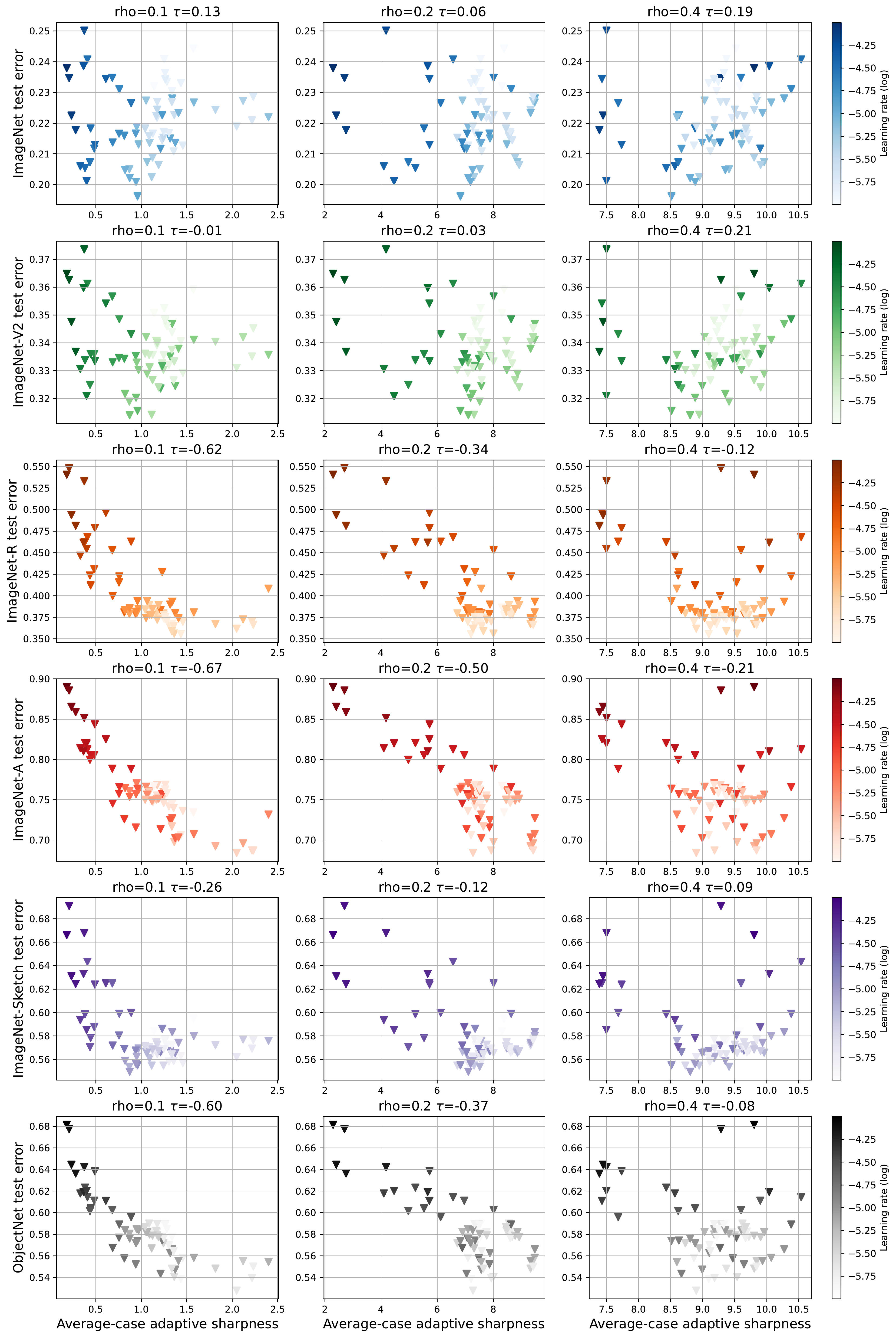}
    \end{tabular}
    \caption{Correlation of sharpness with varying $\rho$ with generalization on ImageNet for different distribution shifts.}\label{fig:imagenet_allshifts_averagecase_sh}
\end{figure}

\myparagraph{Experimental details.} We take advantage of the models fine-tuned by \citet{wortsman2022model} from a pre-trained CLIP ViT-B/32, with randomly sampled training hyperparameters (see \textit{random search} setup in \citet{wortsman2022model}), for which the evaluation of ImageNet validation set and distribution shifts are provided.

\myparagraph{Extra figures.} For each sharpness definition we show for three values of $\rho$ the correlation between test error on ImageNet (in-distribution) and on the various distribution shifts.  In particular, we use worst-case $\ell_\infty$ adaptive sharpness with  (Fig.~\ref{fig:imagenet_allshifts_worstcase_sh_logitsnorm}) and without (Fig.~\ref{fig:imagenet_allshifts_worstcase_sh}) logit normalization, and average-case adaptive sharpness with  (Fig.~\ref{fig:imagenet_allshifts_averagecase_sh_logitsnorm}) and without (Fig.~\ref{fig:imagenet_allshifts_averagecase_sh}) logit normalization. For all figures we represent with colors represent the size of the learning rate used for fine-tuning (darker color means larger learning rate). In addition to the datasets shown in Sec.~\ref{sec:clip_fine-tuning}, we here report the results for ImageNet-V2 \citep{recht2019imagenet} and ObjectNet \citep{barbu2019objectnet}. ImageNet-V2 consists in a new test set for ImageNet models and is sampled from the same image distribution as the existing validation set: then, the performance of the classifiers on it are highly correlated to that on ImageNet validation set, and ImageNet-V2 cannot be considered a distribution shift in the same sense as the other datasets. In general, we observe that sharpness variants are not predictive of the performance on ImageNet and ImageNet-V2. Moreover, there is in most cases a negative correlation with test error in presence of distribution shifts. We hypothesize that this is related to the influence that the learning rate has on sharpness (see Fig.~\ref{fig:imagenet_clip_lr_sharpness}), i.e. lower values lead to sharper models.

\clearpage

\section{Fine-tuning on MNLI: Extra Details and Figures}\label{sec:app_mnli}

\begin{figure}[p] \centering \small
    \begin{tabular}{c}
        \textbf{Worst-case $\ell_\infty$ adaptive sharpness with logit normalization}\\
        \includegraphics[width=.8\columnwidth]{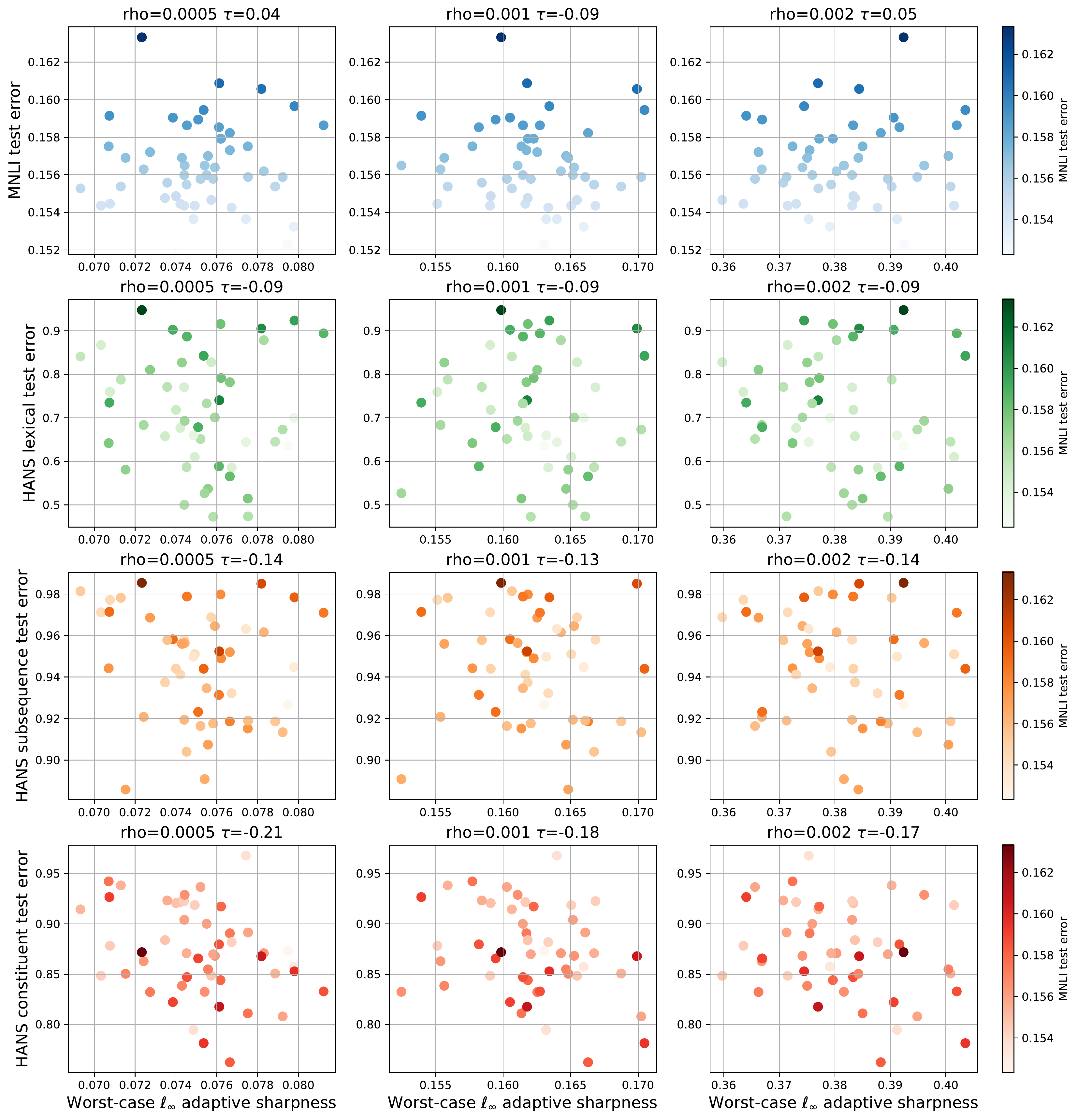}
    \end{tabular}
    \caption{Correlation of sharpness with varying $\rho$ with generalization on MNLI for different distribution shifts.}
    \label{fig:mnli_allshifts_worstcase_sh_logitsnorm}
\end{figure}

\begin{figure}[p] \centering \small
    \begin{tabular}{c}
        \textbf{Worst-case $\ell_\infty$ adaptive sharpness without logit normalization}\\
        \includegraphics[width=.8\columnwidth]{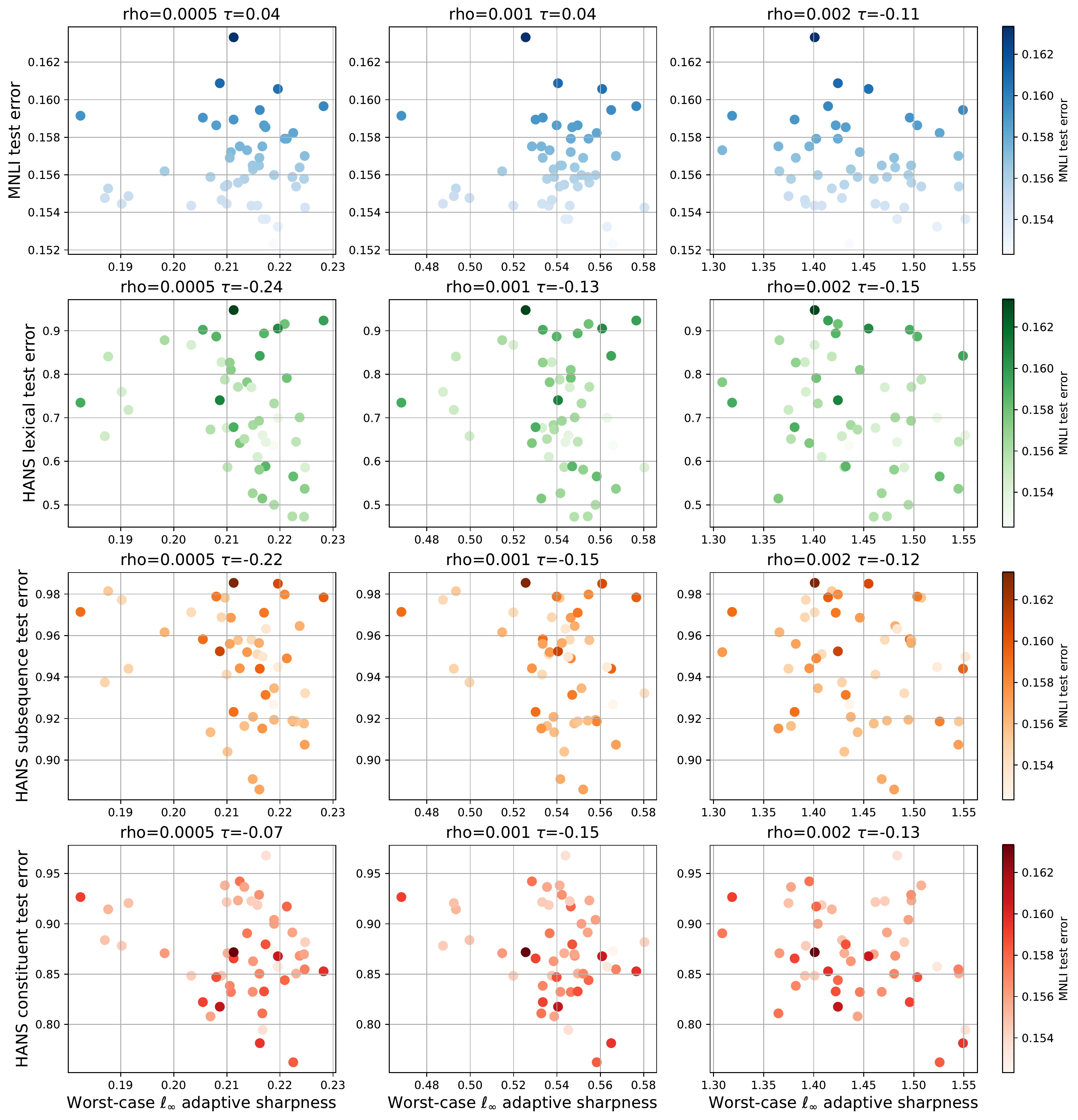}
    \end{tabular}
    \caption{Correlation of sharpness with varying $\rho$ with generalization on MNLI for different distribution shifts.}\label{fig:mnli_allshifts_worstcase_sh}
\end{figure}

\begin{figure}[p] \centering \small
    \begin{tabular}{c}
        \textbf{Average-case adaptive sharpness with logit normalization}\\
        \includegraphics[width=.8\columnwidth]{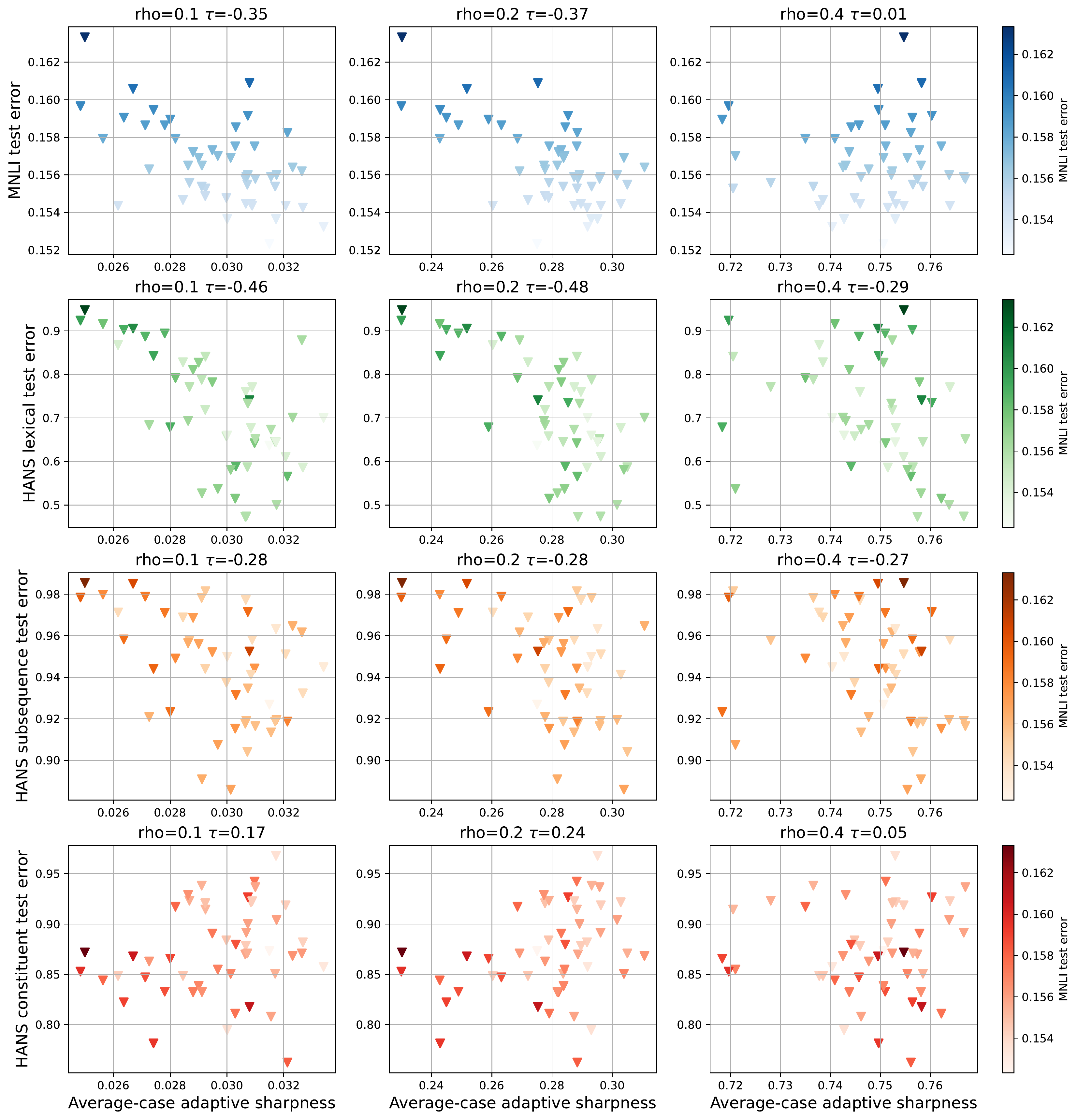}
    \end{tabular}
    \caption{Correlation of sharpness with varying $\rho$ with generalization on MNLI for different distribution shifts.}\label{fig:mnli_allshifts_averagecase_sh_logitsnorm}
\end{figure}

\begin{figure}[p] \centering \small
    \begin{tabular}{c}
        \textbf{Average-case adaptive sharpness without logit normalization}\\
        \includegraphics[width=.8\columnwidth]{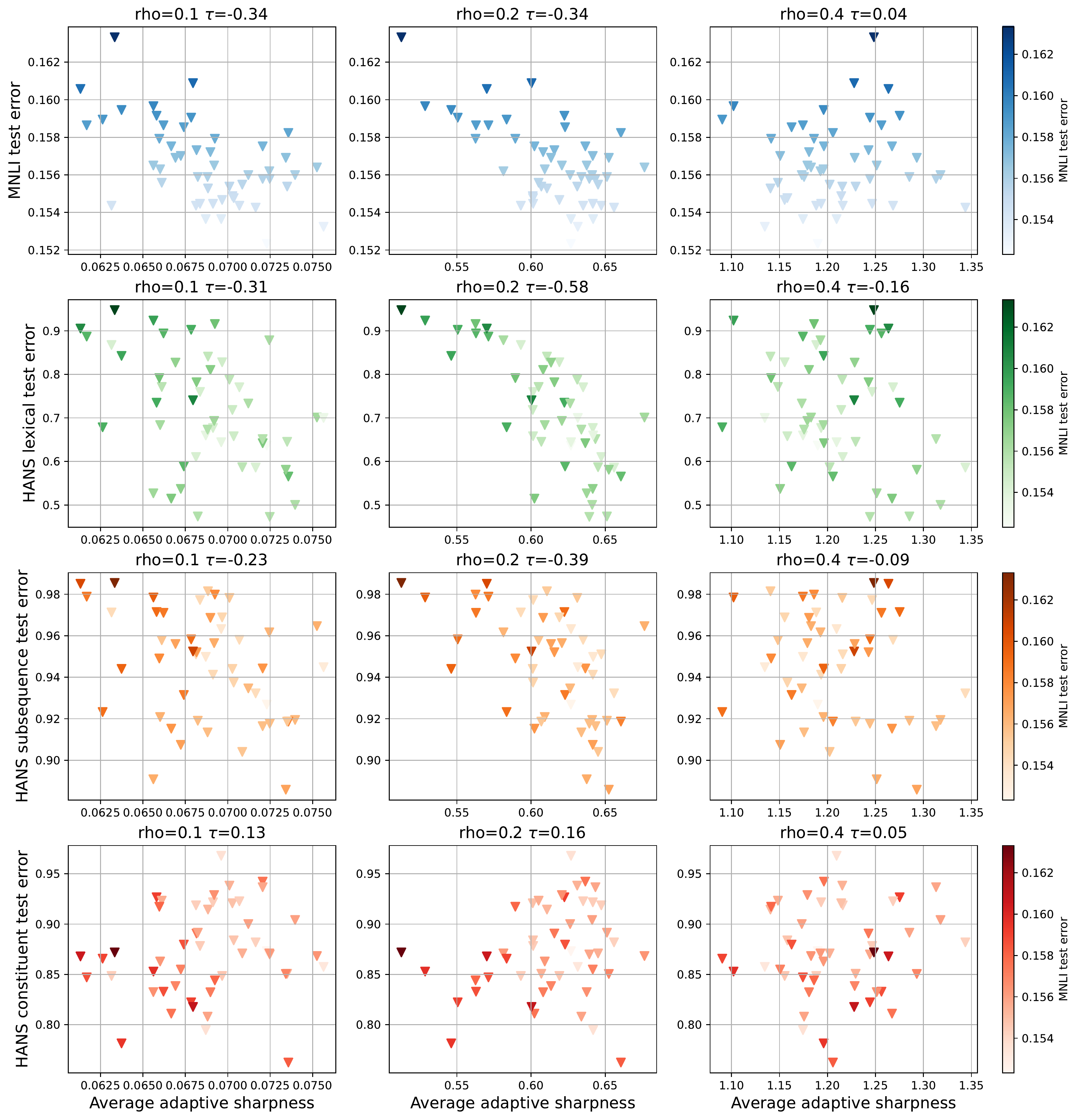}
    \end{tabular}
    \caption{Correlation of sharpness with varying $\rho$ with generalization on MNLI for different distribution shifts.}\label{fig:mnli_allshifts_averagecase_sh}
\end{figure}

\myparagraph{Experimental details.} The models from \citet{mccoy-etal-2020-berts} we use are BERT fine-tuned with initialization weights of \texttt{bert-case-uncased}. The in-distribution test error is computed on the MNLI \texttt{matched} development set, that is a classification task with three classes. As out-of-distribution datasets we use three categories of HANS considered ``Inconsistent with heuristic'' (see \citet{mccoy-etal-2020-berts}: \textit{Lexical overlap}, on which the classifiers show the largest variance in test error, \textit{Subsequence} and \textit{Constituent}. In this case, there are only two possible classes.

\myparagraph{Extra figures.} For each sharpness definition we show for three values of $\rho$ the correlation between test error on MNLI (in-distribution) and on various HANS subsets (out-of-distribution). In particular, we use worst-case $\ell_\infty$ adaptive sharpness with  (Fig.~\ref{fig:mnli_allshifts_worstcase_sh_logitsnorm}) and without (Fig.~\ref{fig:mnli_allshifts_worstcase_sh}) logit normalization, and average-case adaptive sharpness with  (Fig.~\ref{fig:mnli_allshifts_averagecase_sh_logitsnorm}) and without (Fig.~\ref{fig:mnli_allshifts_averagecase_sh}) logit normalization. For all figures we represent with darker colors the models with higher test error on MNLI. In general, all sharpness variants we consider are not predictive of the generalization performance of the model, and in some cases (e.g. Fig.~\ref{fig:mnli_allshifts_averagecase_sh}) there is rather a weak negative correlation between sharpness and test error on out-of-distribution tasks.

\clearpage

\section{Training from Scratch on CIFAR-10: Extra Details and Figures}
\label{sec:app_cifar10_extra_figures}

\myparagraph{Extra details.} 
We train $200$ ResNet-18 and $200$ ViT models for $200$ epochs using SGD with momentum and linearly decreasing learning rates after a linear warm-up for the first $40\%$ iterations. We found that adding such warm-up to SGD allows us to bridge the gap between SGD and Adam training for ViTs. We use the SimpleViT architecture from the \texttt{vit-pytorch} library which is a modification of the standard ViT \citep{dosovitskiy2020image} with a fixed positional embedding and global average pooling instead of the CLS embedding. We use a ViT model with $4\times4$ patches, depth of $6$ blocks, with $16$ heads,  embedding size $512$, and MLP dimension of $1024$. We sample the learning rate from the log-uniform distribution in the range $[0.005, 0.5]$ for ViTs and $[0.05, 5.0]$ for ResNets. We sample uniformly $\rho \in \{0, 0.05, 0.1\}$ of SAM \citep{foret2021sharpnessaware}, with probability $50\%$ mixup ($\alpha=0.5$) \citep{zhang2017mixup}, and with probability $50\%$ standard augmentations combined with RandAugment (with parameters $N=2$, $M=14$) \citep{cubuk2019randaugment}. We use $2\times$ repeated augmentations to reduce the augmentation variance from RandAugment \citep{fort2021drawing}. For CIFAR-10 models, we only show sharpness for well-trained models that have $\leq 1\%$ training error. We note that this selection criterion leaves more ResNets than ViTs on the figures below.

\myparagraph{Sharpness evaluation.}
For sharpness evaluation we use 1024 data points from the training set split in 8 batches: we compute sharpness on each of them and report the average. For worst-case sharpness we use Auto-PGD for 20 steps (for each batch) with random uniform / Gaussian initialization in the feasible set depending on the $\ell_\infty$ vs. $\ell_2$ norm of sharpness. For average-case sharpness, we sample 100 different weights perturbations for every batch.

\myparagraph{Extra figures.}
We present additional figures in Sec.~\ref{sec:app_cifar10_role_data} on the role of data used to evaluate sharpness, in Sec.~\ref{sec:app_cifar10_role_n_iter} on the role of the number of iterations in Auto-PGD to estimate sharpness, in Sec.~\ref{sec:app_cifar10_role_m} on the role of $m$ in $m$-sharpness, and in Sec.~\ref{sec:app_cifar10_role_defns_radii} on the influence of different sharpness definitions and radii on correlation with generalization.

\subsection{The Role of Data Used for Sharpness Evaluation}
\label{sec:app_cifar10_role_data}
We emphasize that for all experiments, we evaluate sharpness on the original training set (CIFAR-10, ImageNet or MNLI) \textit{without augmentations}. However, one may wonder how sensitive this choice is compared to evaluation on the \textit{augmented} training set, particularly in presence of strong data augmentations such as RandAugment \citep{cubuk2019randaugment} used for training of some models. To test this, in Fig.~\ref{fig:cifar10_resnet_diff_data}, we compare adaptive average-case sharpness computed on the original training set and on augmented training set of CIFAR-10 for ResNets-18. We find that the overall trend is nearly the same for small $\rho$ and differs more strongly for larger $\rho$ where the overall correlation with generalization becomes significantly negative ($-0.74$ for the largest $\rho$) on augmented data. In addition, a side-by-side comparison of sharpness on standard vs. augmented training shows that the relationship between them does not deviate too much from a linear trend, especially when considering separately models trained with and without augmentations.
\begin{figure*}[h]
    \centering
    \textbf{Adaptive average-case $\ell_\infty$ (uniform perturbations) sharpness (normalized) for ResNets-18 on \textit{original} training data}
    \includegraphics[width=1.0\textwidth]{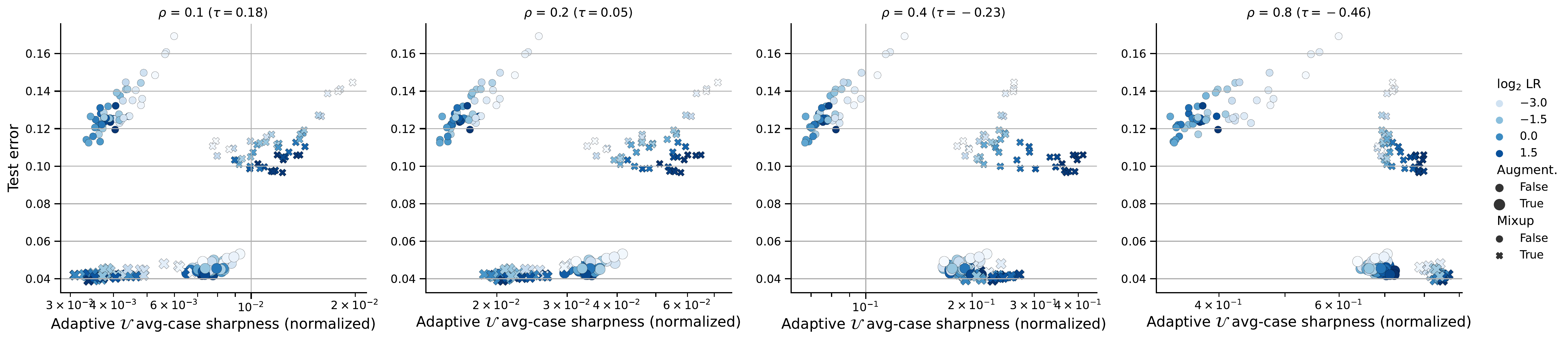}
    \textbf{Adaptive average-case $\ell_\infty$ (uniform perturbations) sharpness (normalized) for ResNets-18 on \textit{augmented} training data}
    \includegraphics[width=1.0\textwidth]{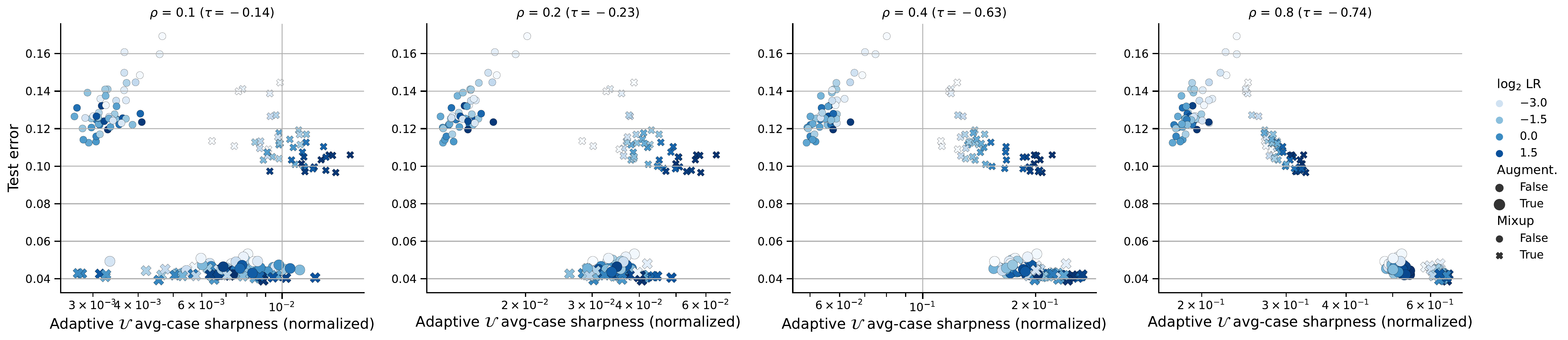}
    
    \vspace{3mm}
    \textbf{A side-by-side comparison of sharpness on original vs. augmented training data}\\
    \includegraphics[width=0.245\textwidth]{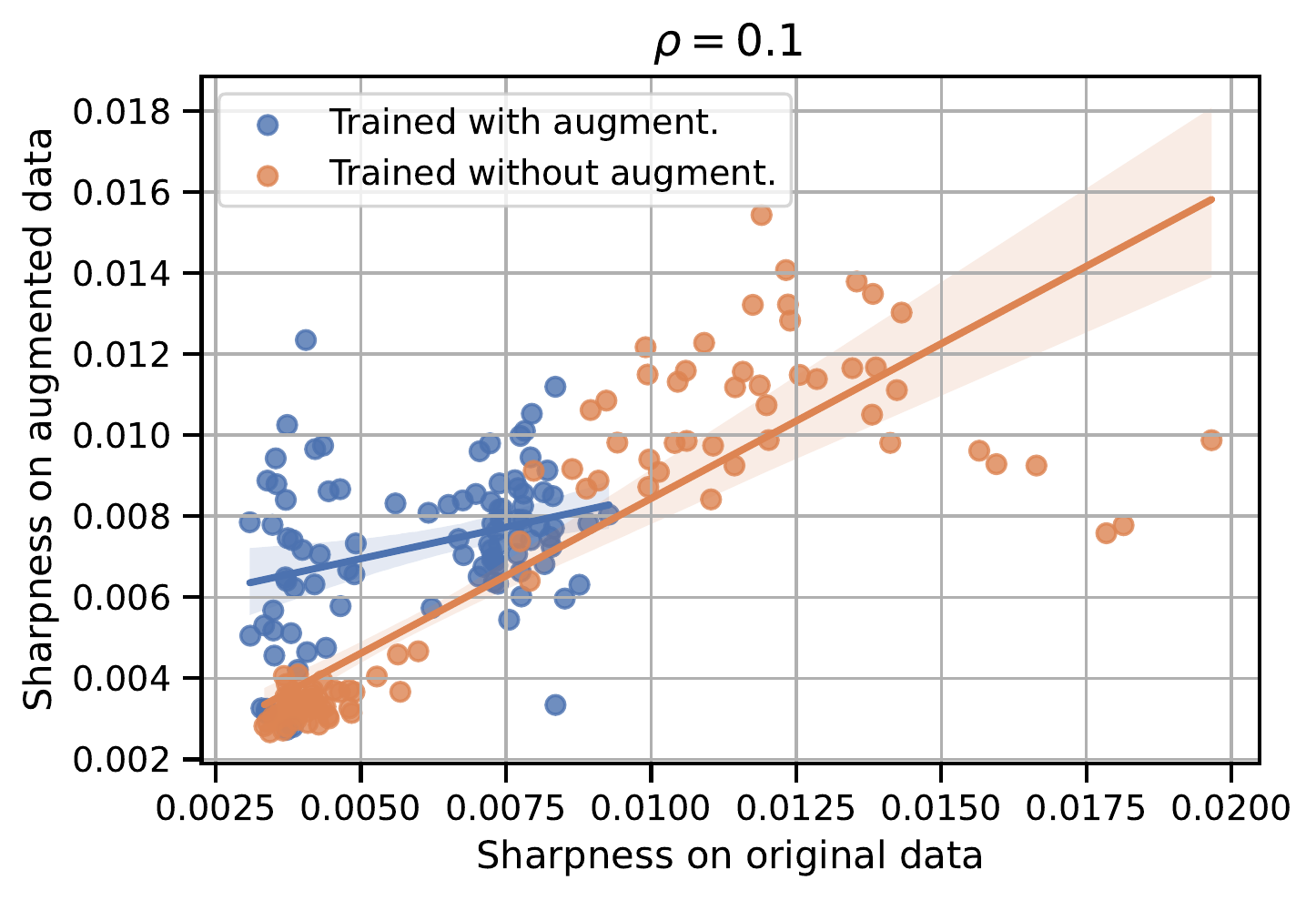}
    \includegraphics[width=0.245\textwidth]{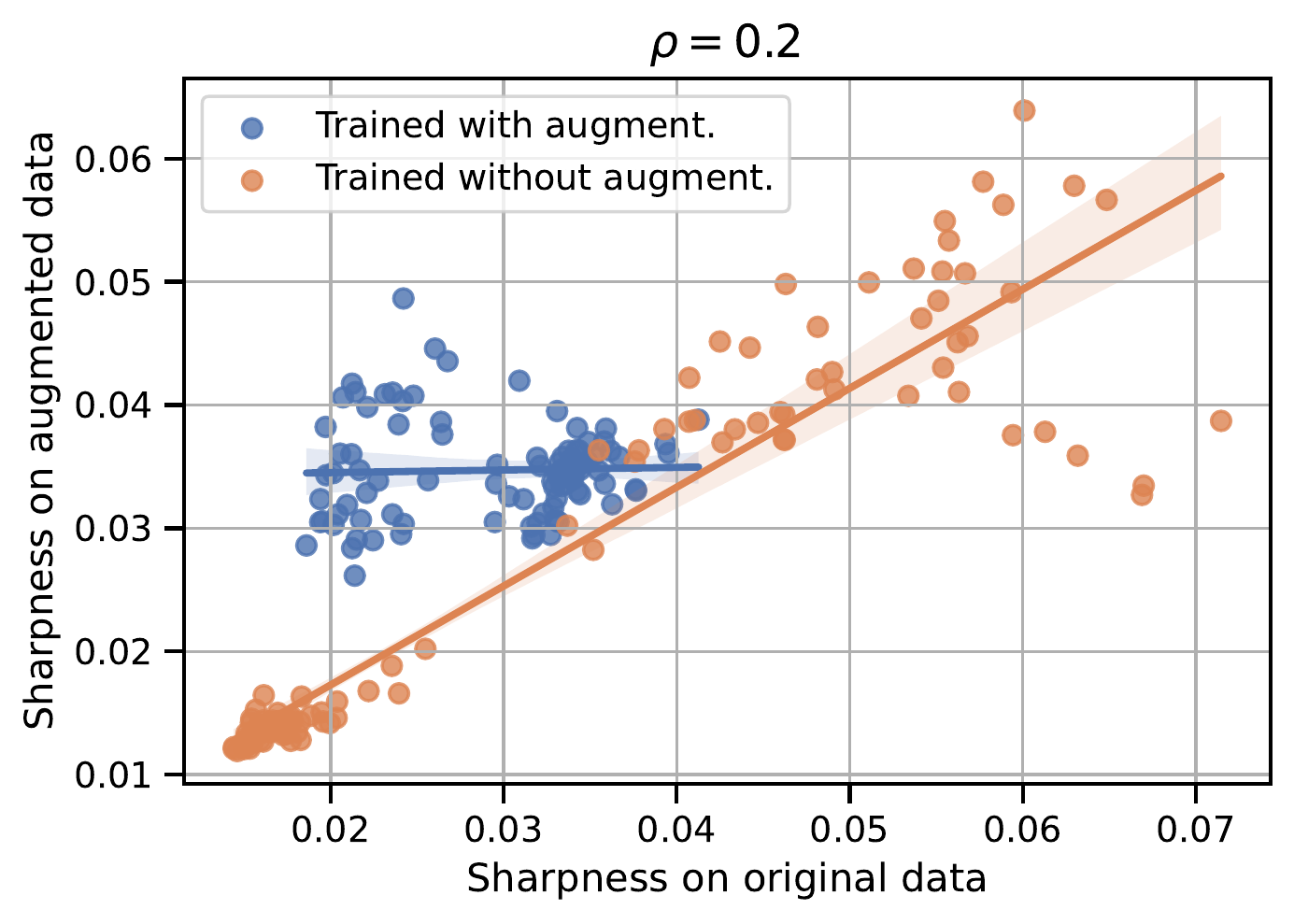}
    \includegraphics[width=0.245\textwidth]{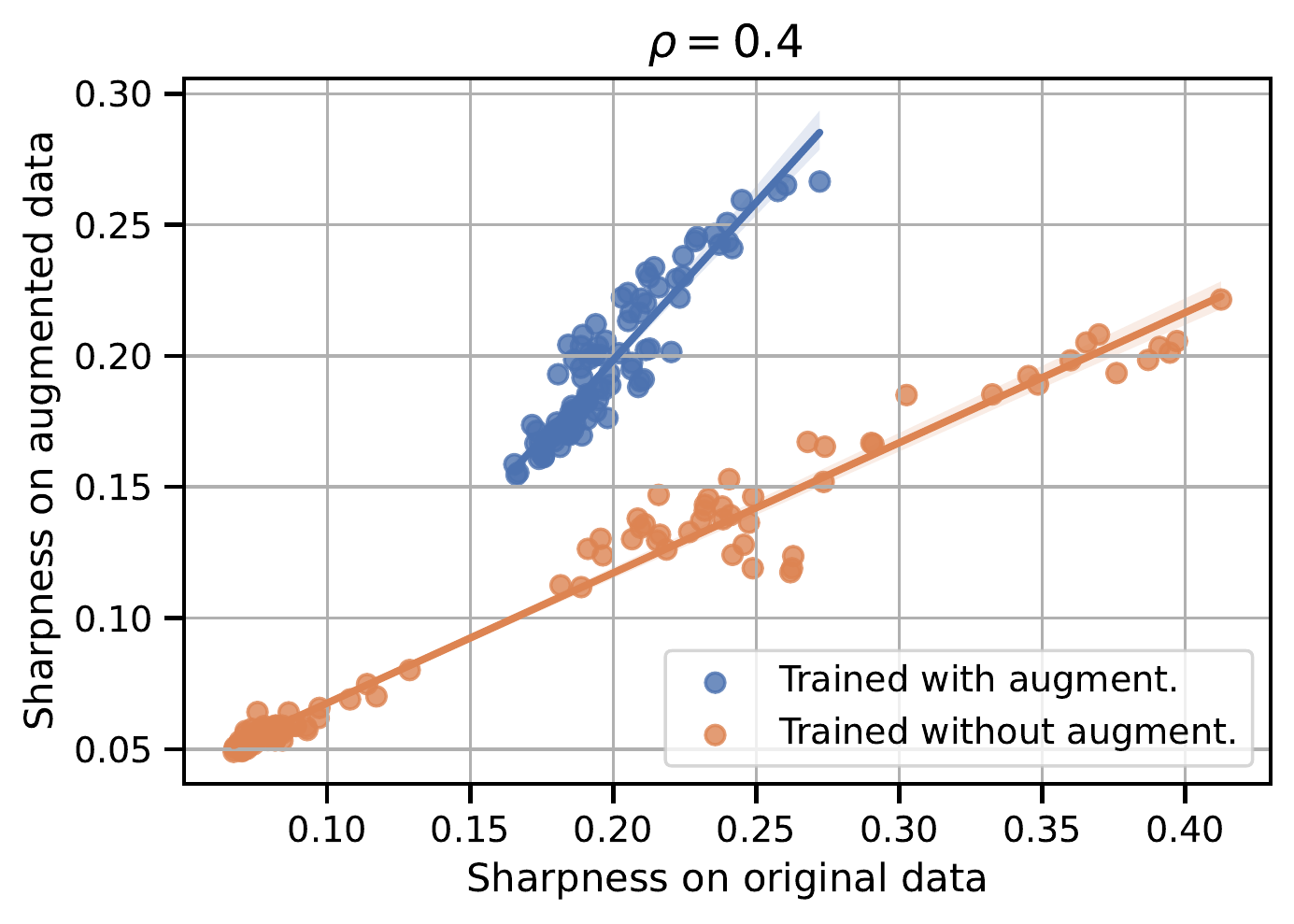}
    \includegraphics[width=0.245\textwidth]{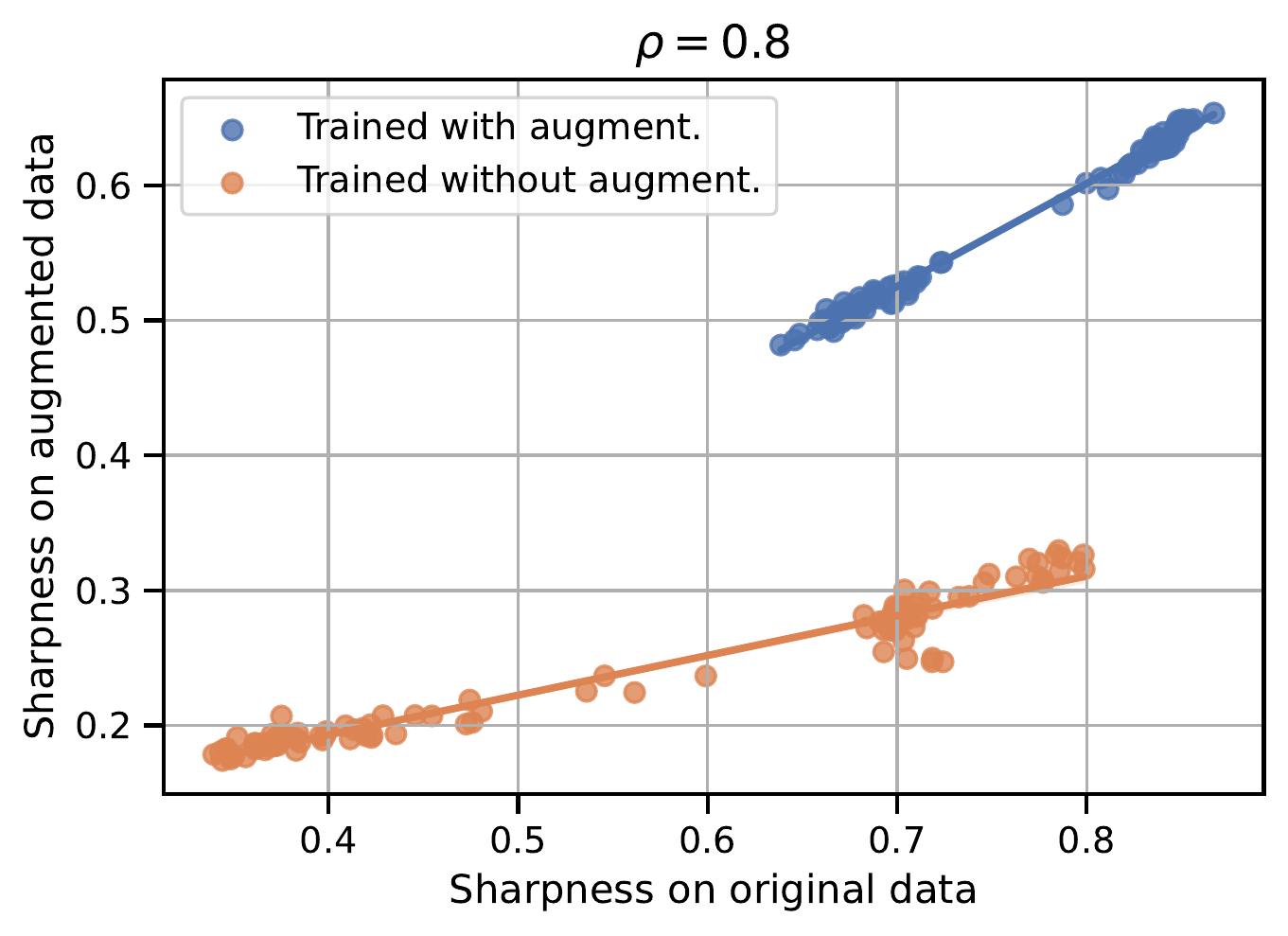}
    \caption{Adaptive average-case $\ell_\infty$ (uniform perturbations) sharpness (normalized) for ResNets-18 on \textit{original} vs. \textit{augmented} CIFAR-10 training data for ResNets-18 for different radii $\rho$.}
    \label{fig:cifar10_resnet_diff_data}
\end{figure*}

\clearpage

\subsection{The Role of the Number of Iterations in Auto-PGD}
\label{sec:app_cifar10_role_n_iter}
Here we aim to justify the choice of 20 iterations of Auto-PGD in our experiments. In Fig.~\ref{fig:cifar10_resnet_diff_n_iter}, we present results for adaptive worst-case $\ell_\infty$ sharpness (normalized) for ResNets-18 on CIFAR-10 for 20, 50, 100, and 200 iterations. We can see that the sharpness values are not visibly affected by increasing the number of iterations and the overall trend stays exactly the same.
\begin{figure*}[h]
    \centering
    \textbf{Adaptive worst-case $\ell_\infty$ sharpness (normalized) for ResNets-18 (20 iterations)}
    \includegraphics[width=0.95\textwidth]{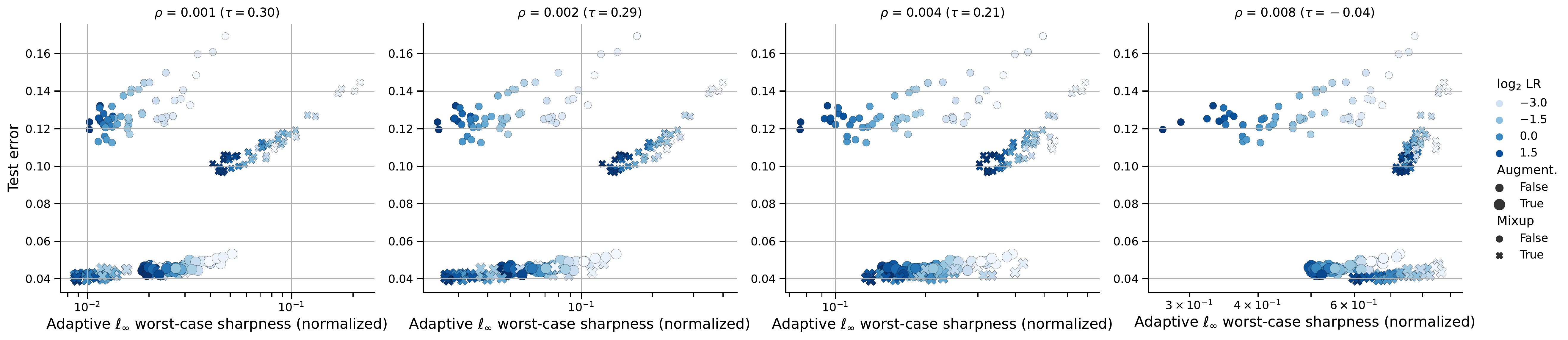}
    \textbf{Adaptive worst-case $\ell_\infty$ sharpness (normalized) for ResNets-18 (50 iterations)}
    \includegraphics[width=0.95\textwidth]{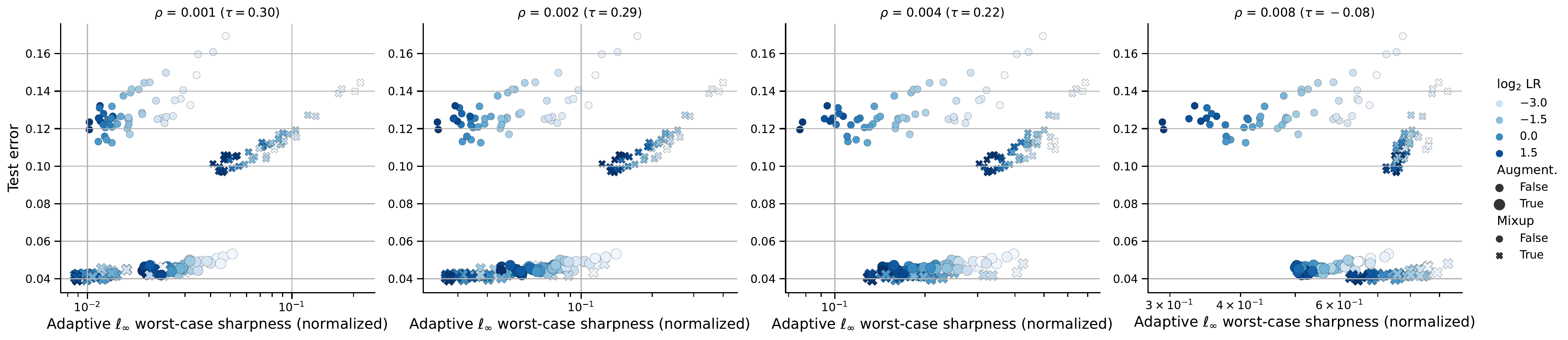}
    \textbf{Adaptive worst-case $\ell_\infty$ sharpness (normalized) for ResNets-18 (100 iterations)}
    \includegraphics[width=0.95\textwidth]{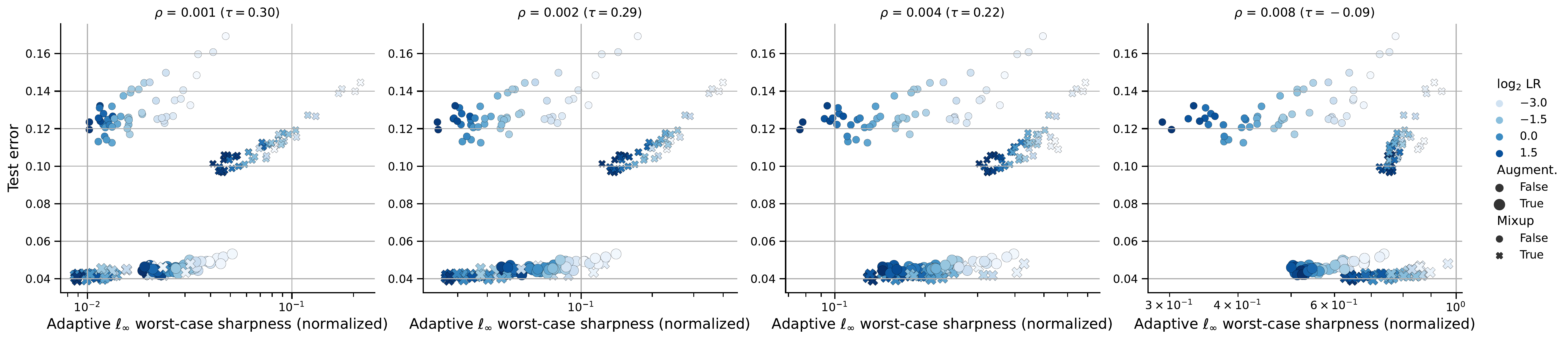}
    \textbf{Adaptive worst-case $\ell_\infty$ sharpness (normalized) for ResNets-18 (200 iterations)}
    \includegraphics[width=0.95\textwidth]{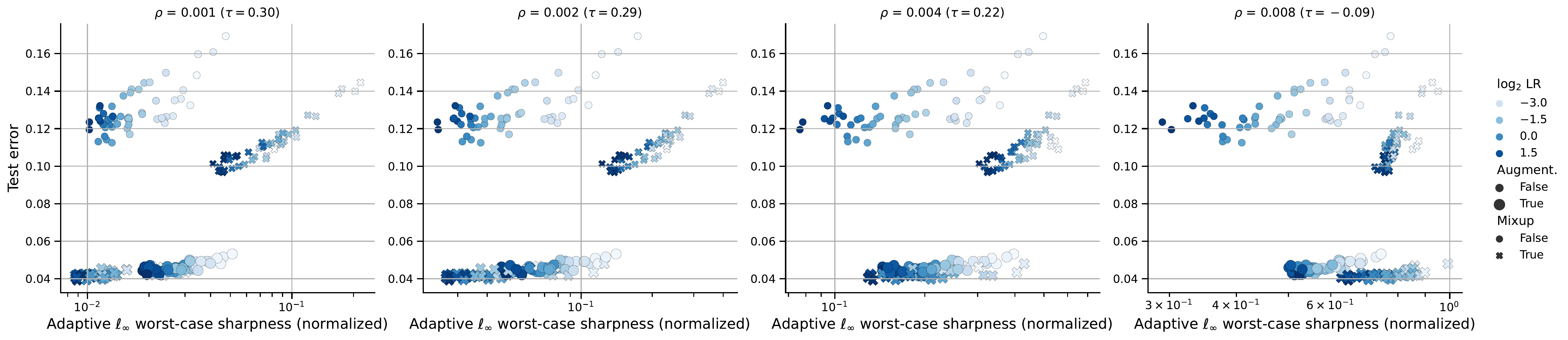}
    \caption{Adaptive worst-case $\ell_\infty$ sharpness (normalized) \textbf{for different number of iterations in Auto-PGD} vs. test error on CIFAR-10 for ResNets-18 for different radii $\rho$.}
    \label{fig:cifar10_resnet_diff_n_iter}
\end{figure*}
\clearpage

\subsection{The Role of m in m-Sharpness}
\label{sec:app_cifar10_role_m}
\citet{foret2021sharpnessaware} suggested that a lower $m$ in $m$-sharpness, i.e., the batch size used for maximizing sharpness, can lead to a higher correlation with generalization in some settings. We note that we have already used a small $m$ for all our experiments ($m=128$ on CIFAR-10 and $m=256$ on ImageNet and MNLI), but here we check additionally whether even smaller $m$ change the trend. Fig.~\ref{fig:cifar10_resnet_diff_m} shows the results sharpness for adaptive worst-case $\ell_\infty$ sharpness (normalized) for ResNets-18 and ViTs on CIFAR-10 for $m \in \{16, 32, 64, 128\}$. We can see that different $m$ only slightly affects the sharpness values and the overall trend stays unaffected.
\begin{figure*}[h]
    \centering
    \textbf{Adaptive worst-case $\ell_\infty$ sharpness (normalized) for ResNets-18 ($\bold{m=16}$)}
    \includegraphics[width=0.95\textwidth]{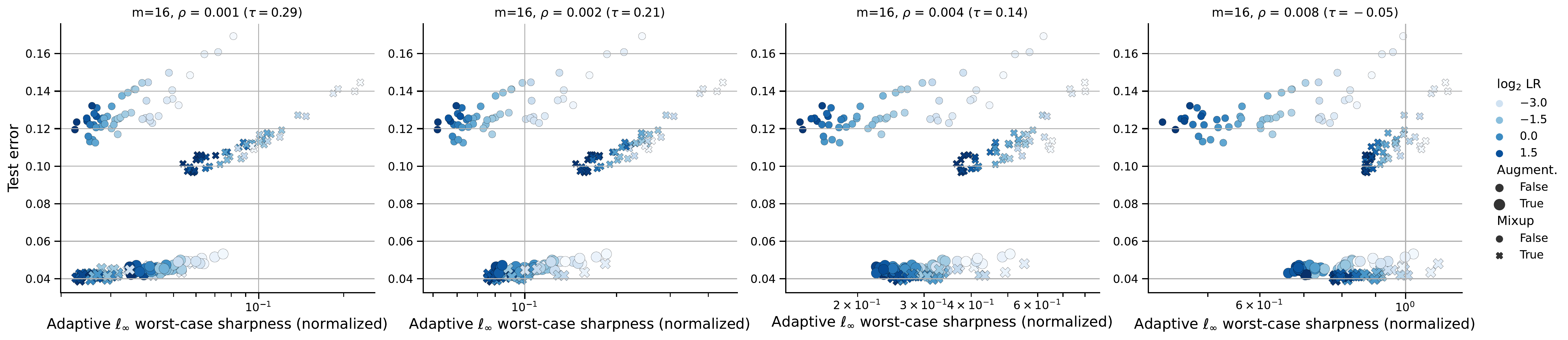}
    \textbf{Adaptive worst-case $\ell_\infty$ sharpness (normalized) for ResNets-18 ($\bold{m=32}$)}
    \includegraphics[width=0.95\textwidth]{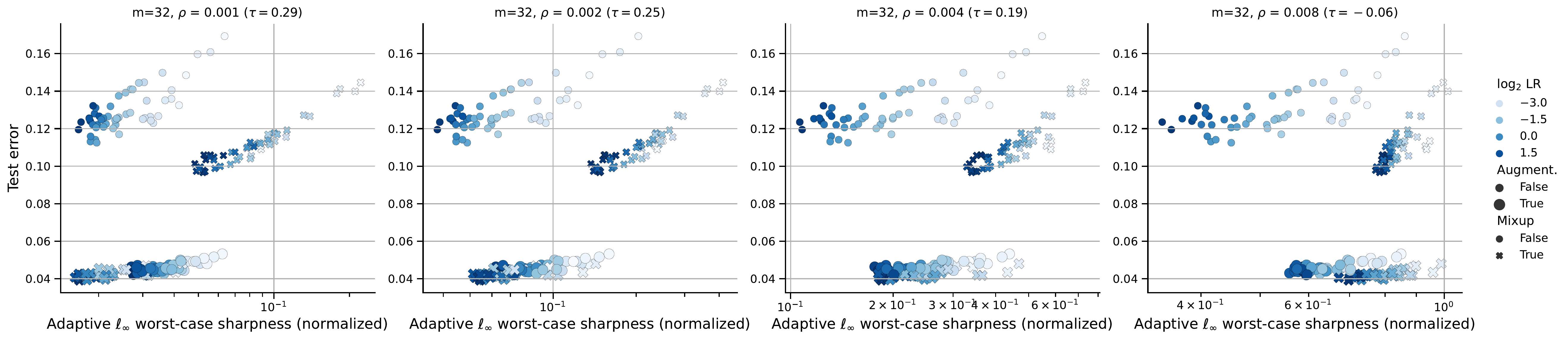}
    \textbf{Adaptive worst-case $\ell_\infty$ sharpness (normalized) for ResNets-18 ($\bold{m=64}$)}
    \includegraphics[width=0.95\textwidth]{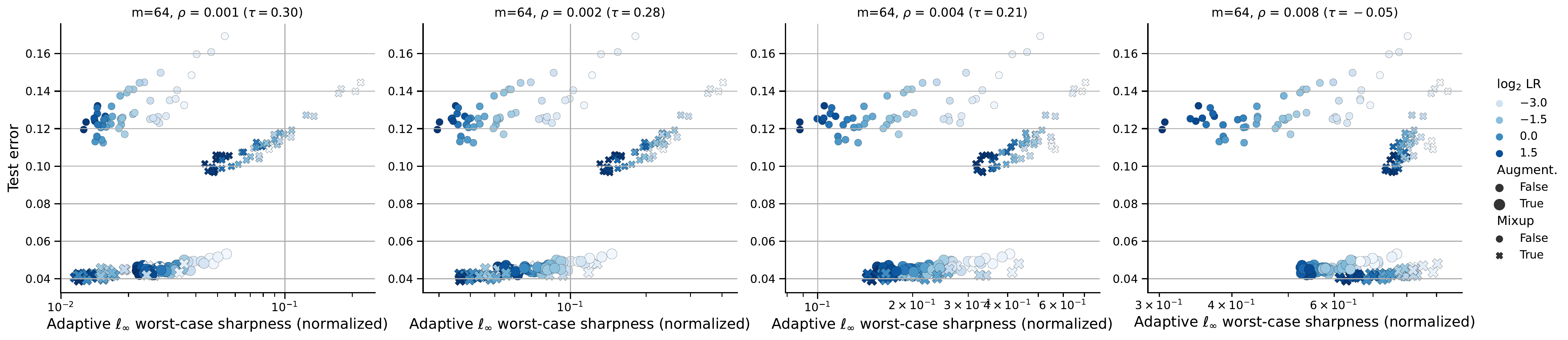}
    \textbf{Adaptive worst-case $\ell_\infty$ sharpness (normalized) for ResNets-18 ($\bold{m=128}$)}
    \includegraphics[width=0.95\textwidth]{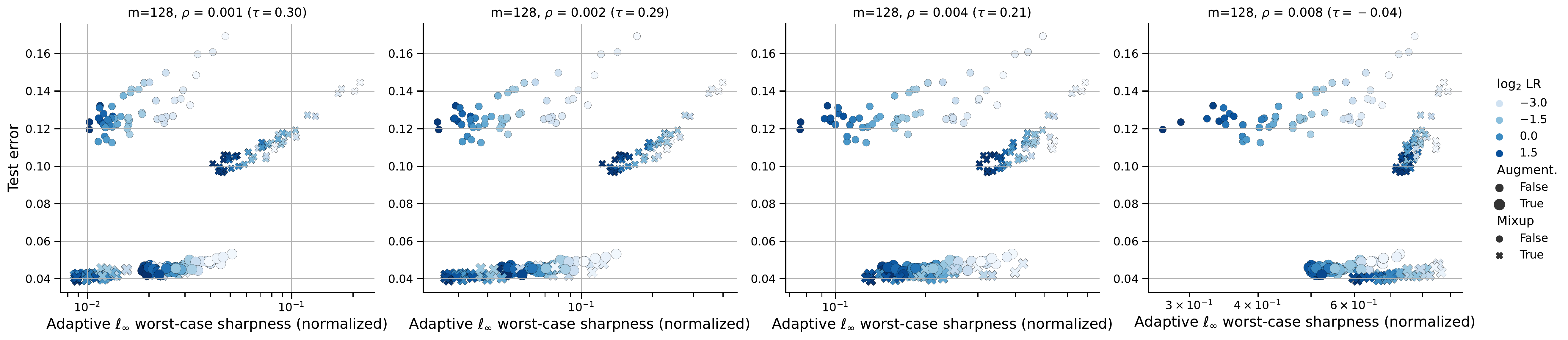}
    \caption{Adaptive worst-case $\ell_\infty$ sharpness (normalized) \textbf{for different $\boldsymbol{m}$} in $m$-sharpness vs. test error on CIFAR-10 for ResNets-18 for different radii $\rho$.}
    \label{fig:cifar10_resnet_diff_m}
\end{figure*}

\begin{figure*}[h]
    \centering
    \textbf{Adaptive worst-case $\ell_\infty$ sharpness (normalized) for ViTs ($\bold{m=16}$)}
    \includegraphics[width=0.95\textwidth]{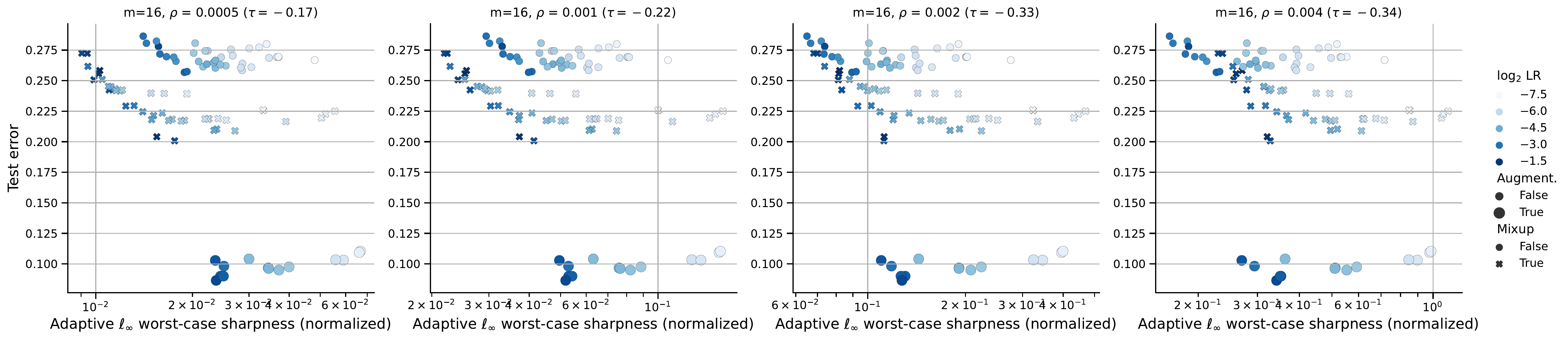}
    \textbf{Adaptive worst-case $\ell_\infty$ sharpness (normalized) for ViTs ($\bold{m=32}$)}
    \includegraphics[width=0.95\textwidth]{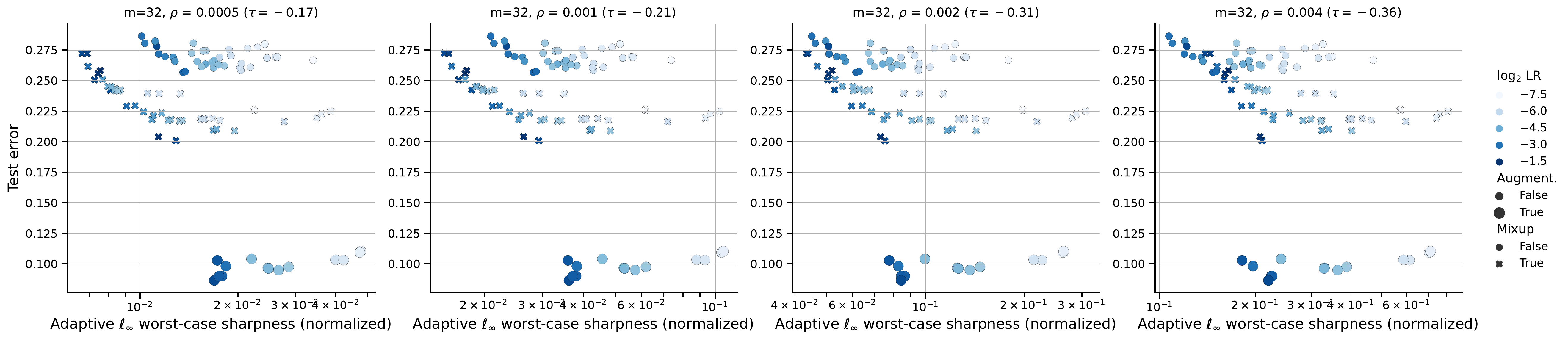}
    \textbf{Adaptive worst-case $\ell_\infty$ sharpness (normalized) for ViTs ($\bold{m=64}$)}
    \includegraphics[width=0.95\textwidth]{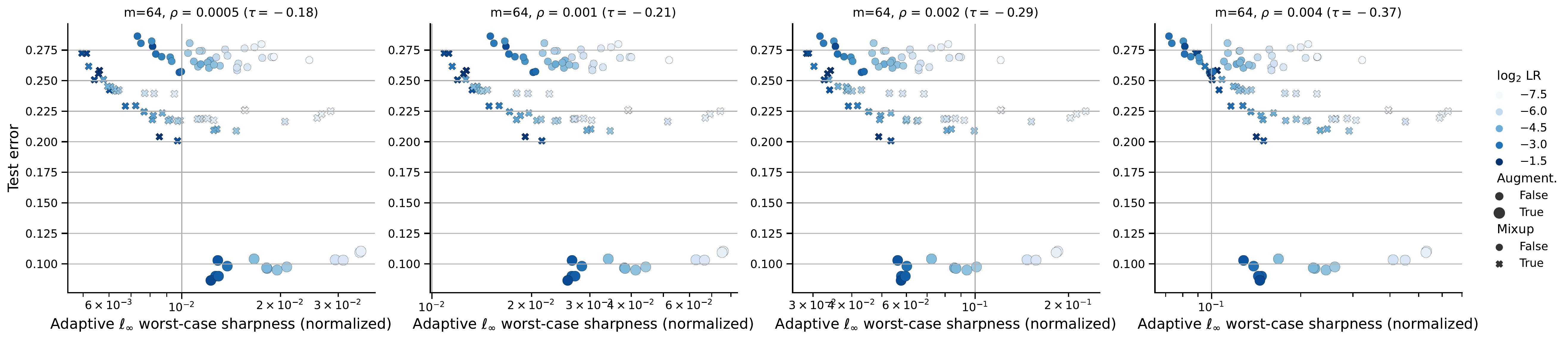}
    \textbf{Adaptive worst-case $\ell_\infty$ sharpness (normalized) for ViTs ($\bold{m=128}$)}
    \includegraphics[width=0.95\textwidth]{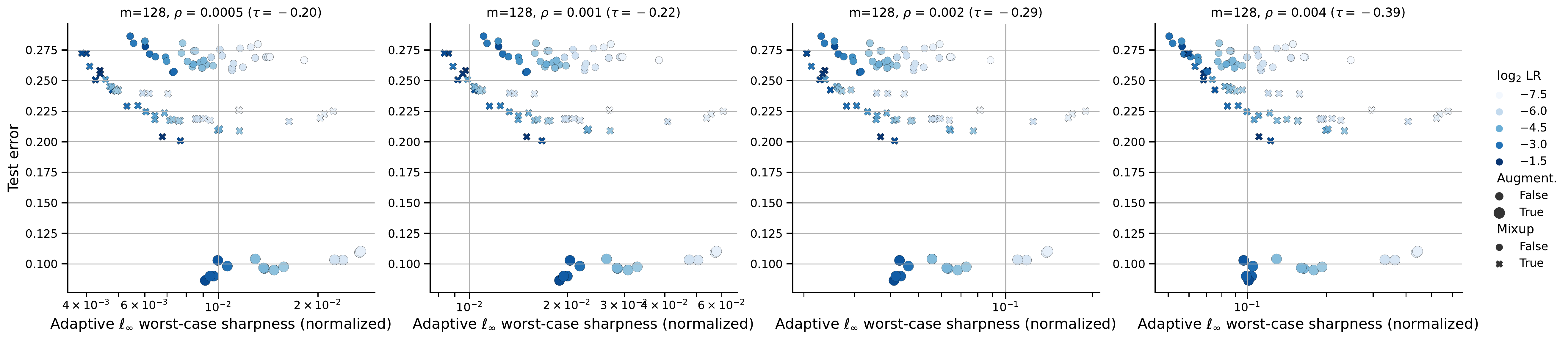}
    \caption{Adaptive worst-case $\ell_\infty$ sharpness (normalized) \textbf{for different $\boldsymbol{m}$} in $m$-sharpness vs. test error on CIFAR-10 for ViTs for different radii $\rho$.}
    \label{fig:cifar10_vit_diff_m}
\end{figure*}

\clearpage

\subsection{The Role of Different Sharpness Definitions and Radii}
\label{sec:app_cifar10_role_defns_radii} 
Here we present results for $12$ different sharpness definitions: 
\begin{itemize}
    \item standard average-case $\ell_2$ (Gaussian perturbations) sharpness without logit normalization,
    \item standard worst-case $\ell_2$ sharpness without logit normalization,
    \item adaptive average-case $\ell_2$ (Gaussian perturbations) sharpness without logit normalization,
    \item adaptive worst-case $\ell_2$ sharpness without logit normalization,
    \item adaptive average-case $\ell_2$ (Gaussian perturbations) sharpness with logit normalization,
    \item adaptive worst-case $\ell_2$ sharpness with logit normalization,
    \item standard average-case $\ell_\infty$ (uniform perturbations) sharpness without logit normalization,
    \item standard worst-case $\ell_\infty$ sharpness without logit normalization,
    \item adaptive average-case $\ell_\infty$ (uniform perturbations) sharpness without logit normalization,
    \item adaptive worst-case $\ell_\infty$ sharpness without logit normalization (\textit{shown in the main part for a single $\rho$}),
    \item adaptive average-case $\ell_\infty$ (uniform perturbations) sharpness with logit normalization,
    \item adaptive worst-case $\ell_\infty$ sharpness with logit normalization (\textit{shown in the main part for a single $\rho$}).
\end{itemize}
We evaluate a wide range of radii for each sharpness definition to make sure that we do not miss the right scale of sharpness. We present results first for ResNets and then for ViTs.

\myparagraph{Observations for ResNets.}
For ResNets, we observe that many sharpness definitions can successfully capture correlation with standard generalization within each subgroup defined by the values of \texttt{augment} $\times$ \texttt{mixup}. In particular, on average, \textit{adaptive} sharpness shows a better correlation with generalization within each subgroup, and the best correlation within each subgroup is achieved by $\ell_\infty$ adaptive worst-case sharpness with logit normalization for a small $\rho$. In many cases, the correlation of sharpness with OOD generalization on CIFAR-10-C is noticeably lower compared to the correlation of sharpness with standard generalization. Overall, we see that there is no coherent global trend of correlation with generalization that would apply to all models at once.
We also observe that for some sharpness definitions, the flattest models generalize best (for adaptive worst-case $\ell_2$ sharpness with normalization for the smallest $\rho$ and for adaptive worst-case $\ell_\infty$ sharpness without normalization for the largest $\rho$) but this appears to be unsystematic and there exist nearly equally flat solutions that generalize much worse.

\myparagraph{Observations for ViTs.}
For ViTs, in contrast to ResNets, we do not observe a consistent correlation with generalization even within subgroups. The only exception is the subgroup of points with augmentations where multiple definitions of sharpness tend to correlate with generalization and capture the effect of larger learning rate. We think it is likely due to the fact that with heavy augmentations optimizing the training objective to smaller values is helpful for generalization, while without augmentations all runs have converged within $200$ epochs and the learning rate plays no visible role for generalization there. Globally, when taken over all models, the correlation with standard generalization is close to $0$ and tends to slightly decrease when we measure OOD generalization on CIFAR-10-C. Finally, we note that there are no cases where the flattest ViT models achieve the best generalization. Thus, even our weak hypothesis about the role of sharpness is not confirmed here.

\begin{figure*}[t]
    \centering
    \textbf{Standard average-case $\ell_2$ (Gaussian perturbations) sharpness (unnormalized) for ResNets-18}
    \includegraphics[width=1.0\textwidth]{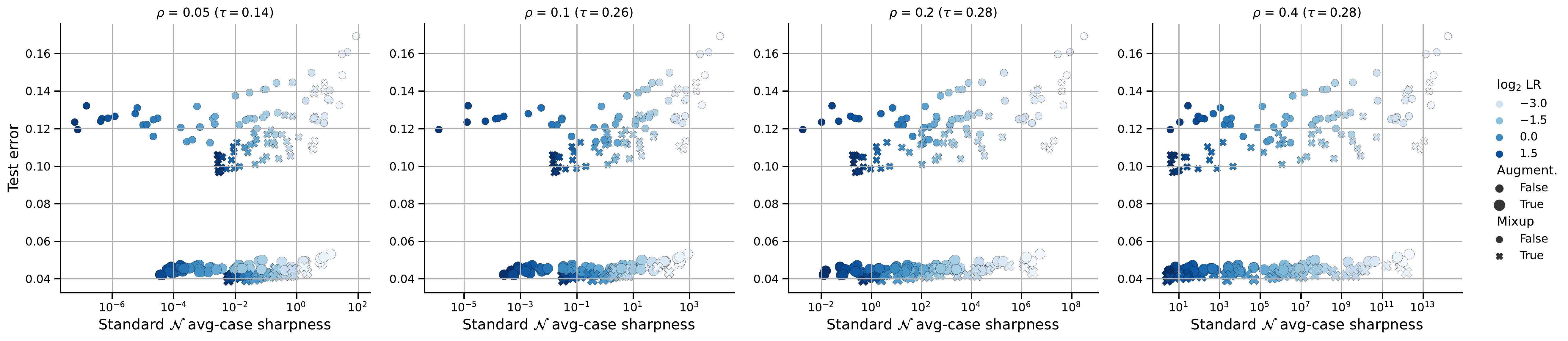}
    \includegraphics[width=1.0\textwidth]{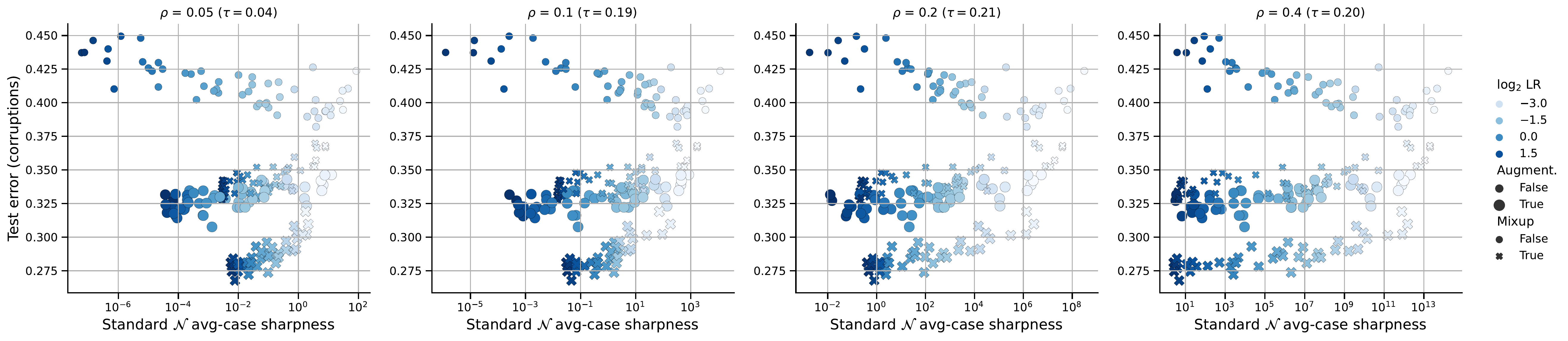}
    \textbf{Standard worst-case $\ell_2$ sharpness (unnormalized) for ResNets-18}
    \includegraphics[width=1.0\textwidth]{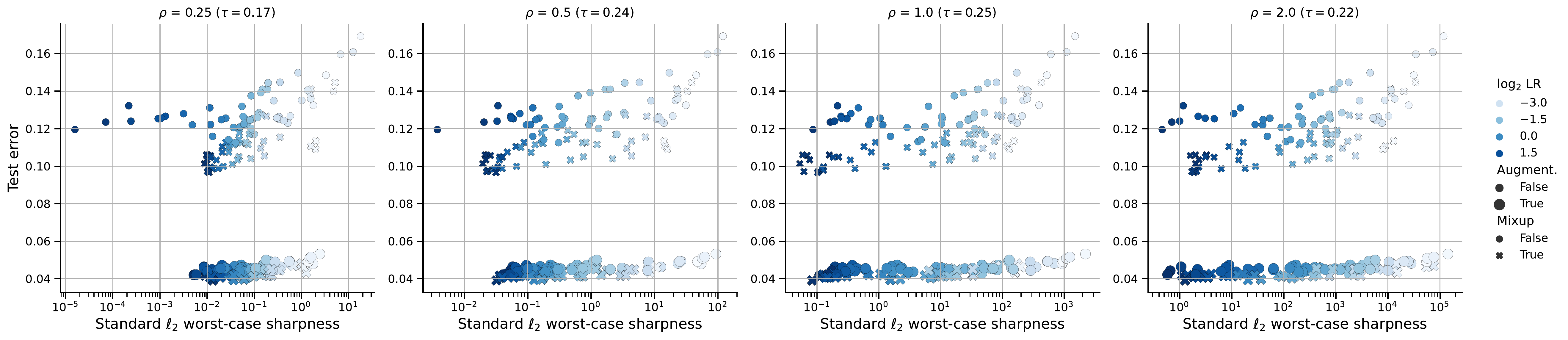}
    \includegraphics[width=1.0\textwidth]{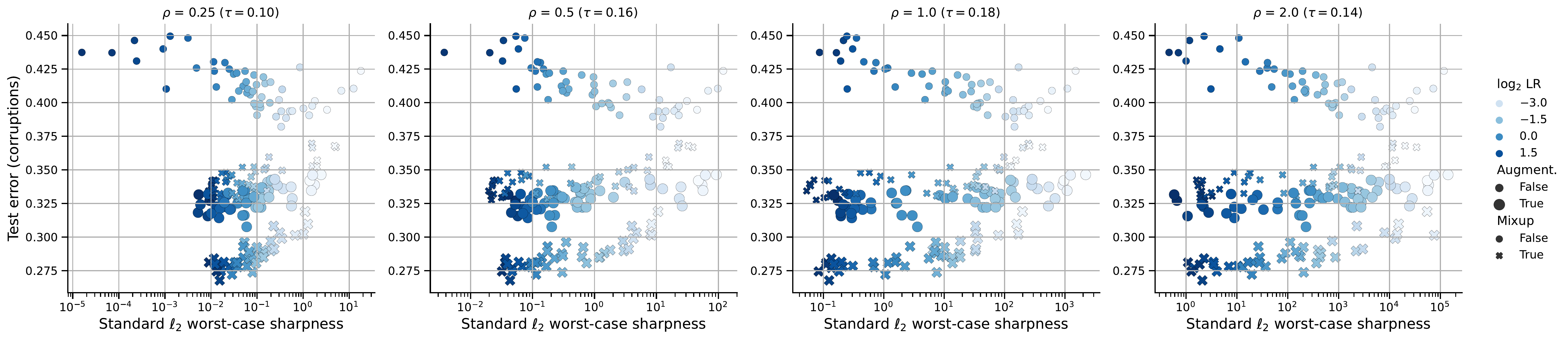}
    \caption{Average and worst-case $\ell_2$ standard sharpness definitions (unnormalized) vs. test error and OOD test error (common corruptions) on CIFAR-10 for ResNets-18 for different radii $\rho$.}
    \label{fig:cifar10_resnet_standard_l2_sharpness}
\end{figure*}
\begin{figure*}[t!]
    \centering
    \textbf{Adaptive average-case $\ell_2$ (Gaussian perturbations) sharpness (unnormalized) for ResNets-18}
    \includegraphics[width=1.0\textwidth]{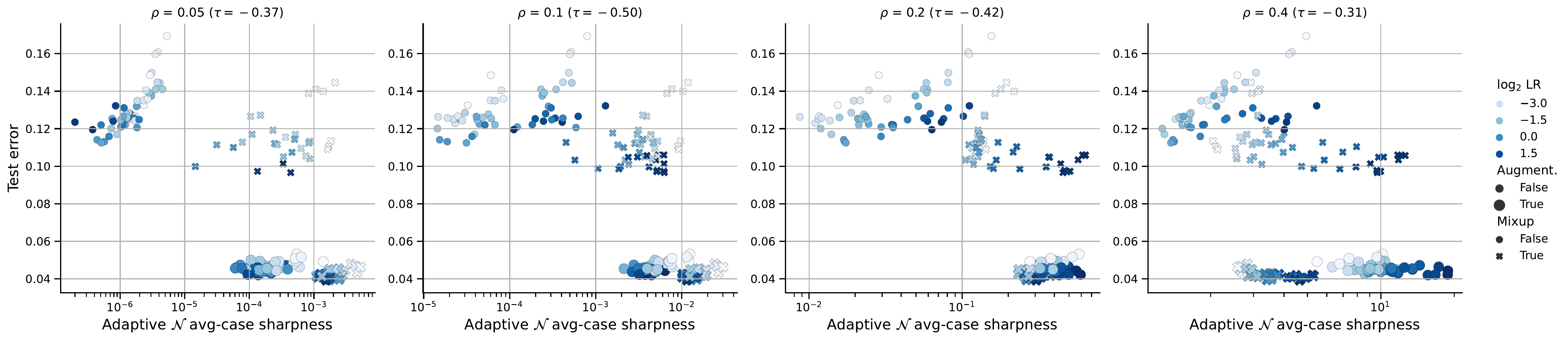}
    \includegraphics[width=1.0\textwidth]{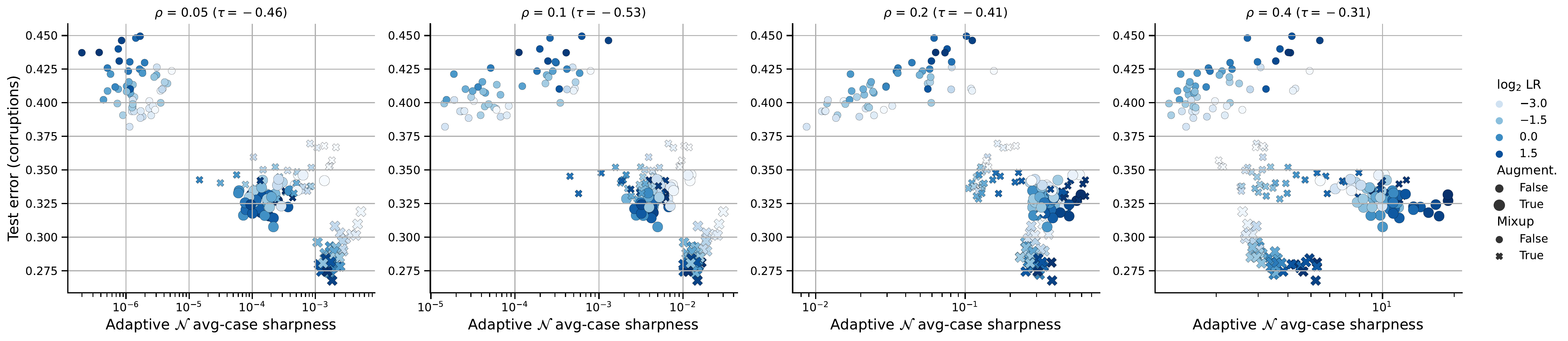}
    \textbf{Adaptive worst-case $\ell_2$ sharpness (unnormalized) for ResNets-18}
    \includegraphics[width=1.0\textwidth]{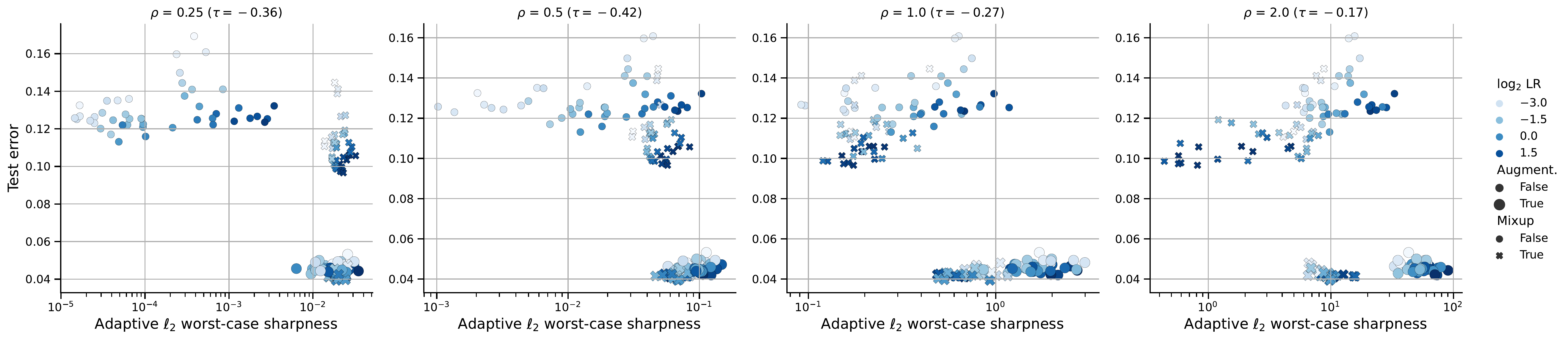}
    \includegraphics[width=1.0\textwidth]{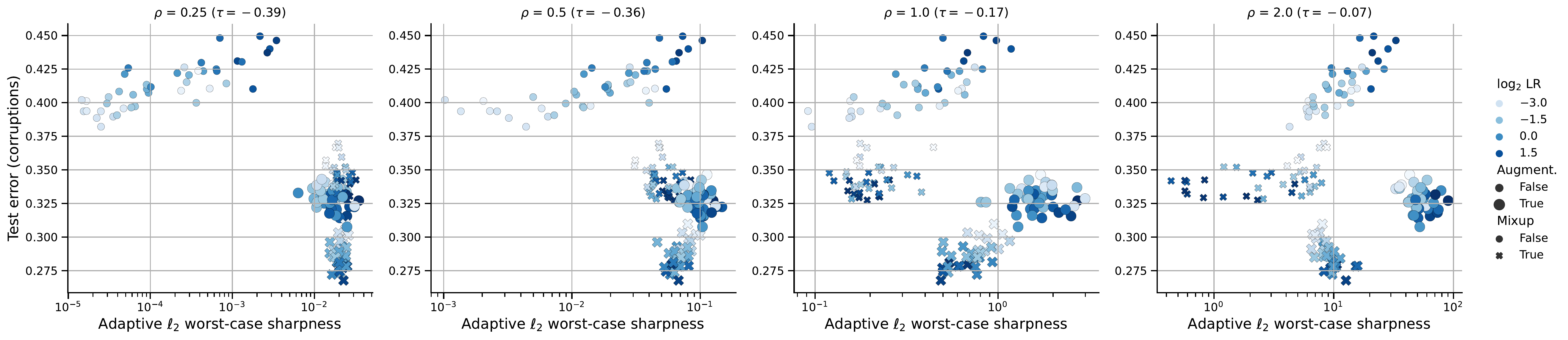}
    \caption{Average and worst-case $\ell_2$ adaptive sharpness definitions (unnormalized) vs. test error and OOD test error (common corruptions) on CIFAR-10 for ResNets-18 for different radii $\rho$.}
    \label{fig:cifar10_resnet_adaptive_l2_sharpness}
\end{figure*}
\begin{figure*}[t!]
    \centering
    \textbf{Adaptive average-case $\ell_2$ (Gaussian perturbations) sharpness (normalized) for ResNets-18}
    \includegraphics[width=1.0\textwidth]{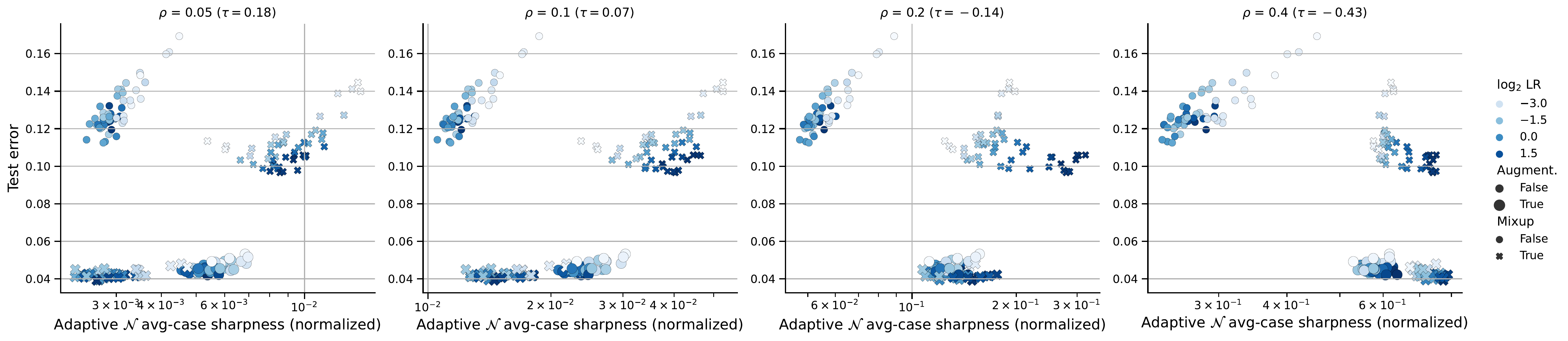}
    \includegraphics[width=1.0\textwidth]{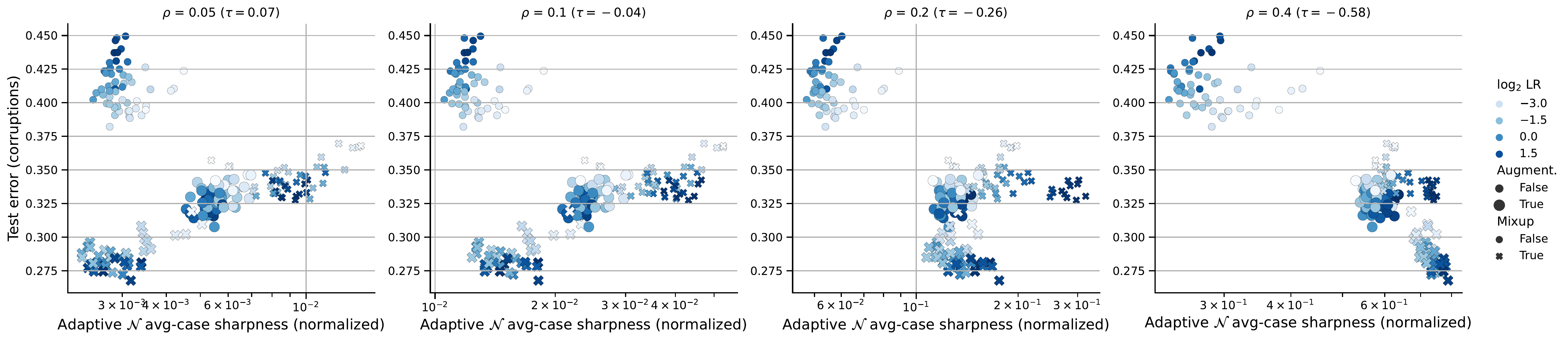}
    \textbf{Adaptive worst-case $\ell_2$ sharpness (normalized) for ResNets-18}
    \includegraphics[width=1.0\textwidth]{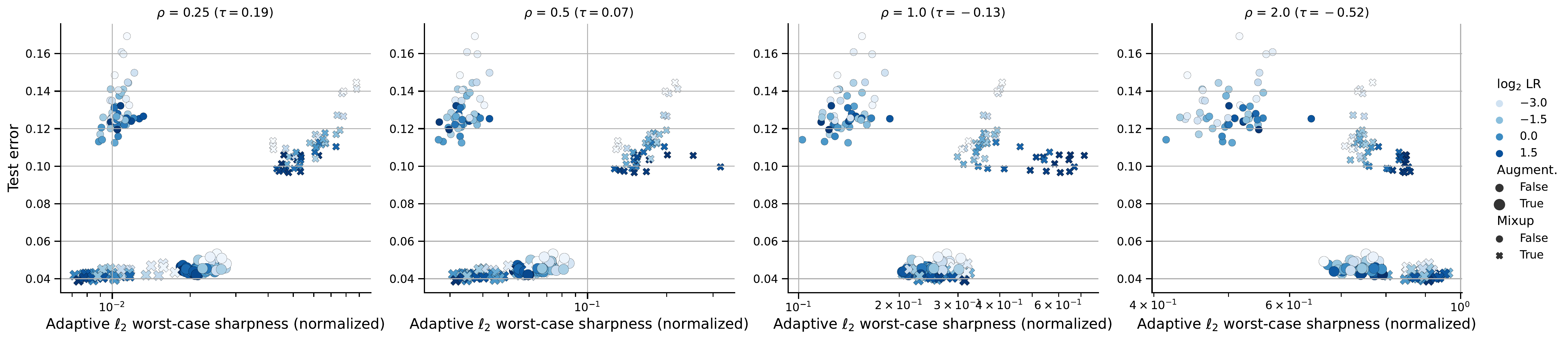}
    \includegraphics[width=1.0\textwidth]{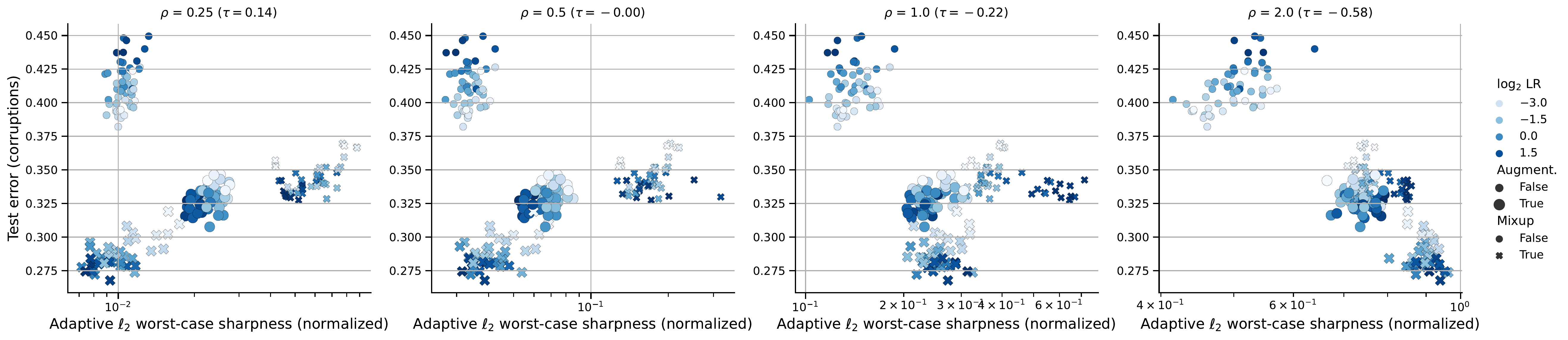}
    \caption{Average and worst-case $\ell_2$ adaptive sharpness definitions (normalized) vs. test error and OOD test error (common corruptions) on CIFAR-10 for ResNets-18 for different radii $\rho$.}
    \label{fig:cifar10_resnet_adaptive_l2_sharpness_normalized}
\end{figure*}

\begin{figure*}[t!]
    \centering
    \textbf{Standard average-case $\ell_\infty$ (uniform perturbations) sharpness (unnormalized) for ResNets-18}
    \includegraphics[width=1.0\textwidth]{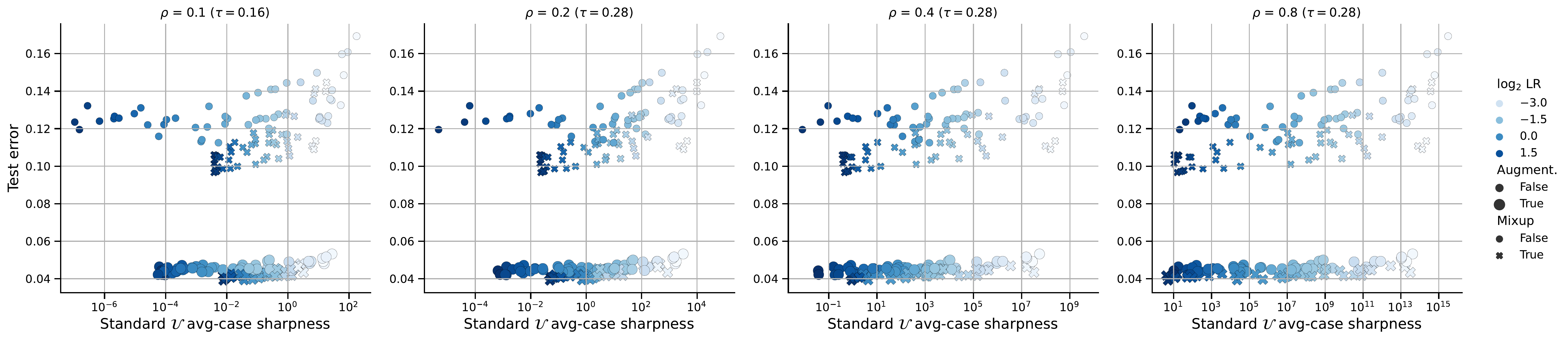}
    \includegraphics[width=1.0\textwidth]{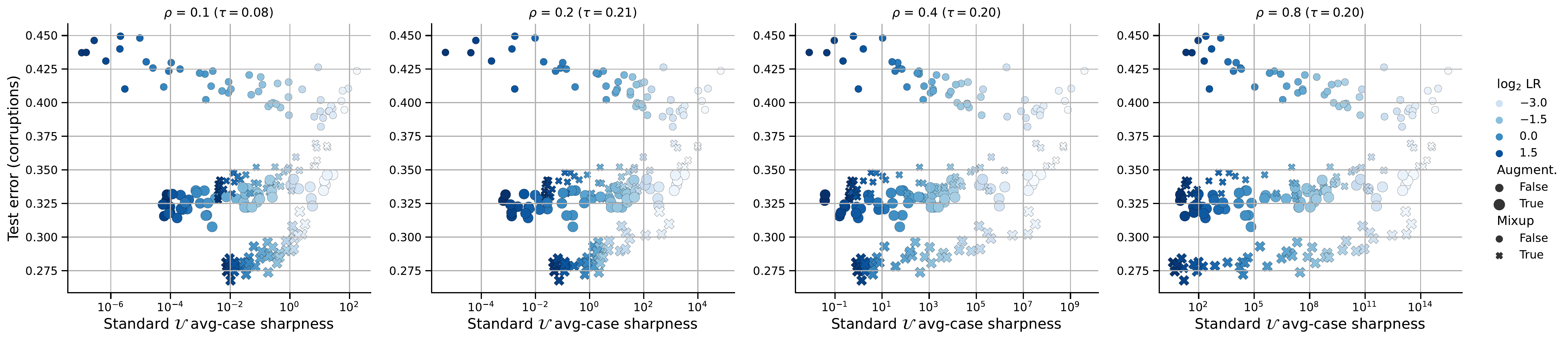}
    \textbf{Standard worst-case $\ell_\infty$ sharpness (unnormalized) for ResNets-18}
    \includegraphics[width=1.0\textwidth]{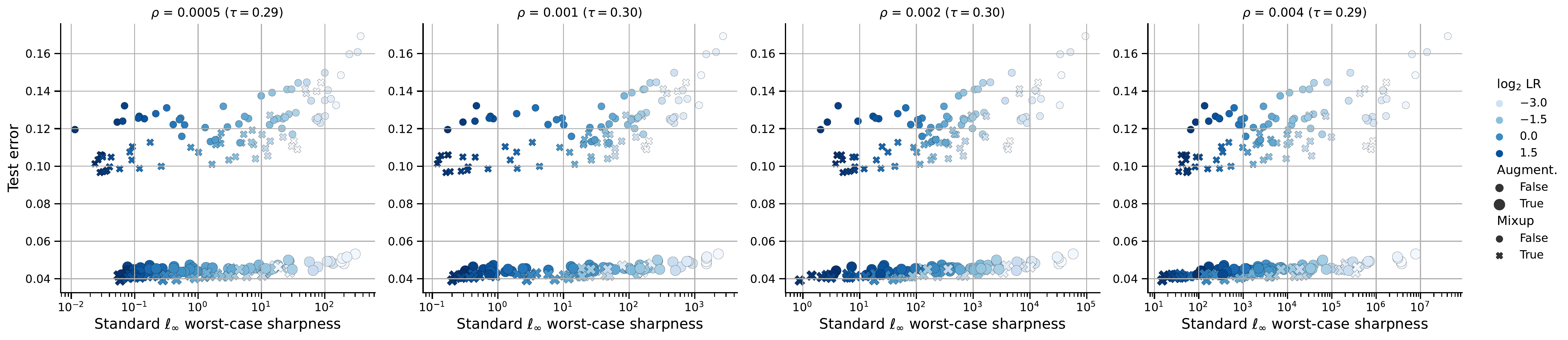}
    \includegraphics[width=1.0\textwidth]{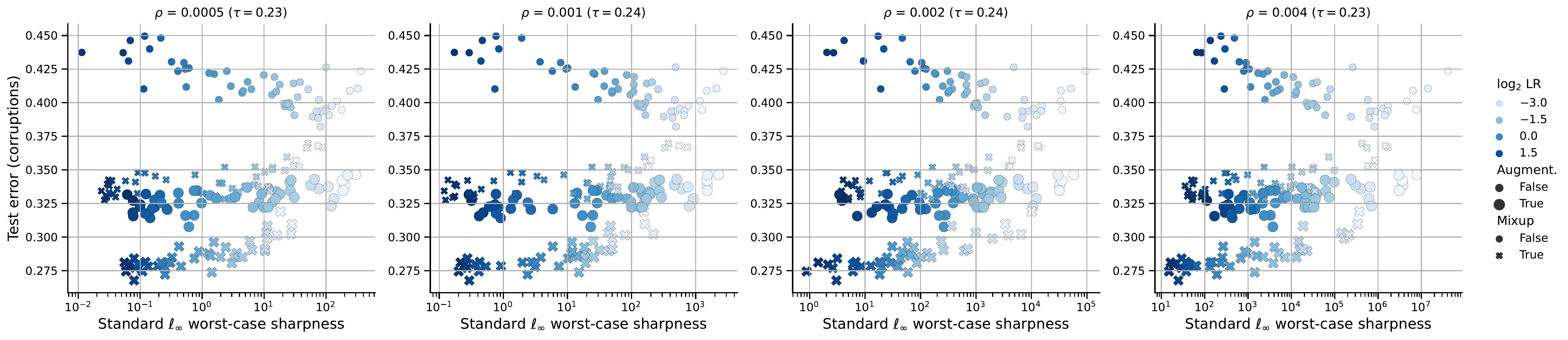}
    \caption{Average and worst-case $\ell_\infty$ standard sharpness definitions (unnormalized) vs. test error and OOD test error (common corruptions) on CIFAR-10 for ResNets-18 for different radii $\rho$.}
    \label{fig:cifar10_resnet_standard_linf_sharpness}
\end{figure*}
\begin{figure*}[t!]
    \centering
    \textbf{Adaptive average-case $\ell_\infty$ (uniform perturbations) sharpness (unnormalized) for ResNets-18}
    \includegraphics[width=1.0\textwidth]{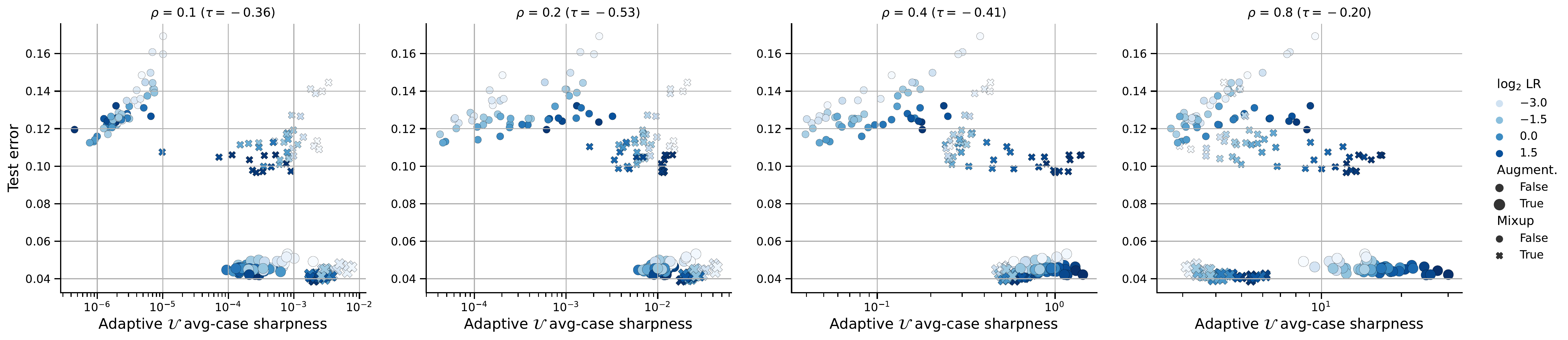}
    \includegraphics[width=1.0\textwidth]{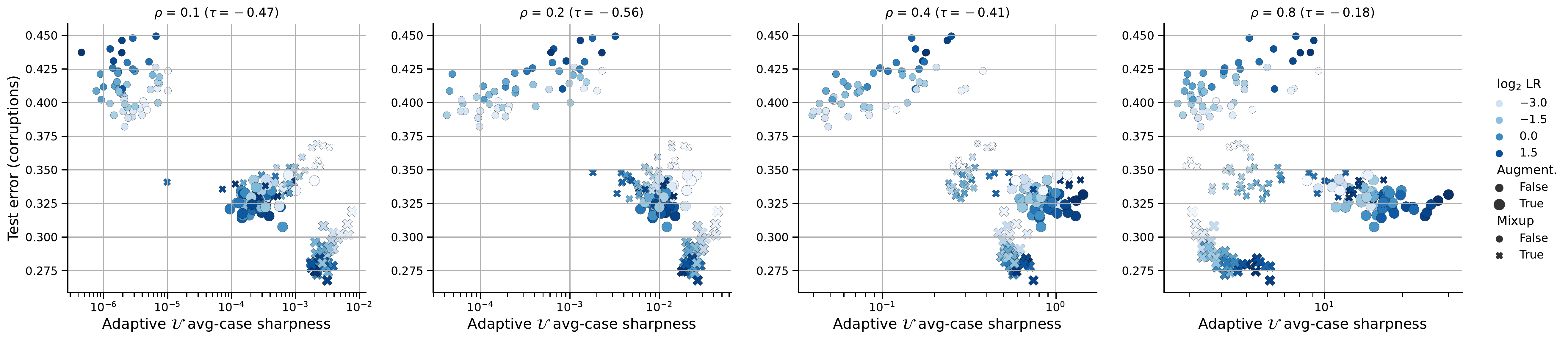}
    \textbf{Adaptive worst-case $\ell_\infty$ sharpness (unnormalized) for ResNets-18}
    \includegraphics[width=1.0\textwidth]{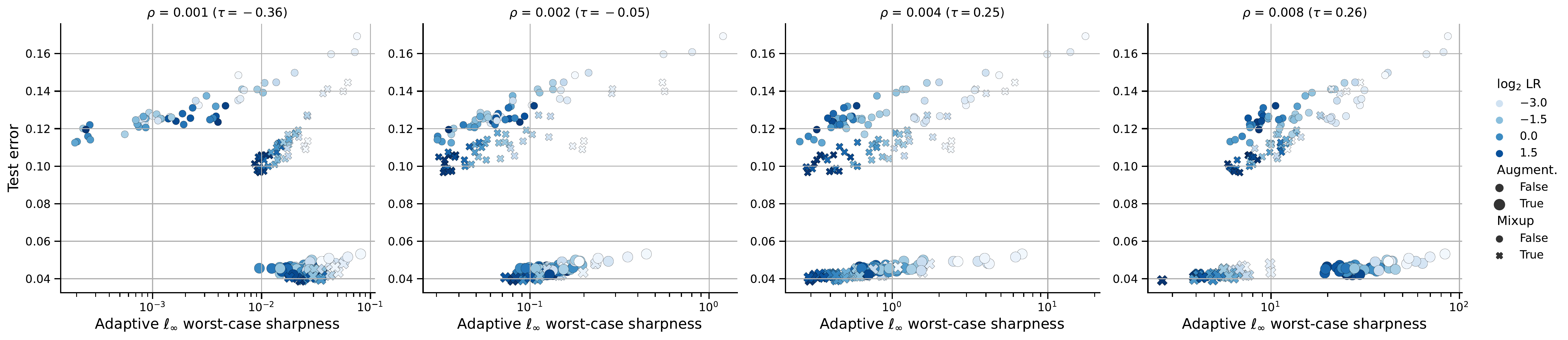}
    \includegraphics[width=1.0\textwidth]{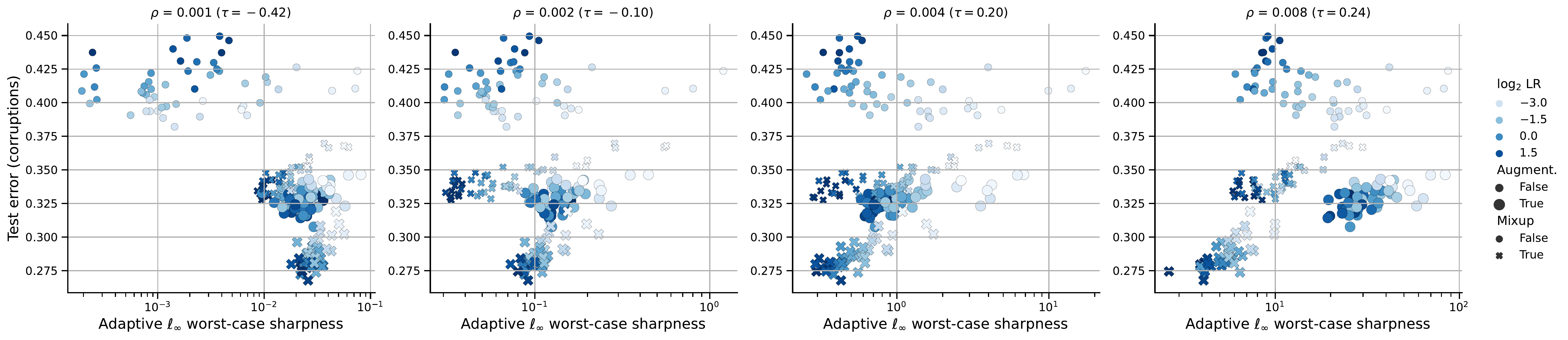}
    \caption{Average and worst-case $\ell_\infty$ adaptive sharpness definitions (unnormalized) vs. test error and OOD test error (common corruptions) on CIFAR-10 for ResNets-18 for different radii $\rho$.}
    \label{fig:cifar10_resnet_adaptive_linf_sharpness}
\end{figure*}
\begin{figure*}[t!]
    \centering
    \textbf{Adaptive average-case $\ell_\infty$ (uniform perturbations) sharpness (normalized) for ResNets-18}
    \includegraphics[width=1.0\textwidth]{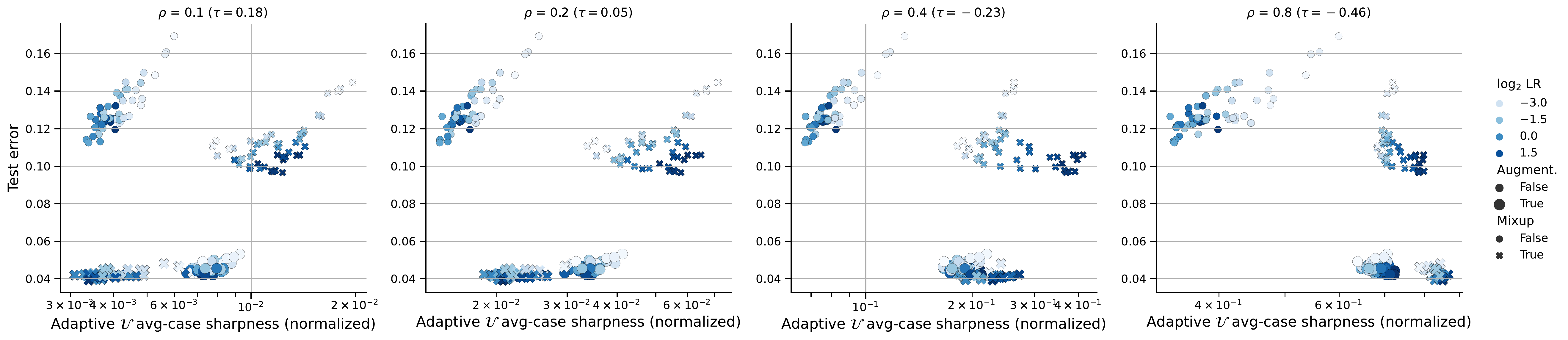}
    \includegraphics[width=1.0\textwidth]{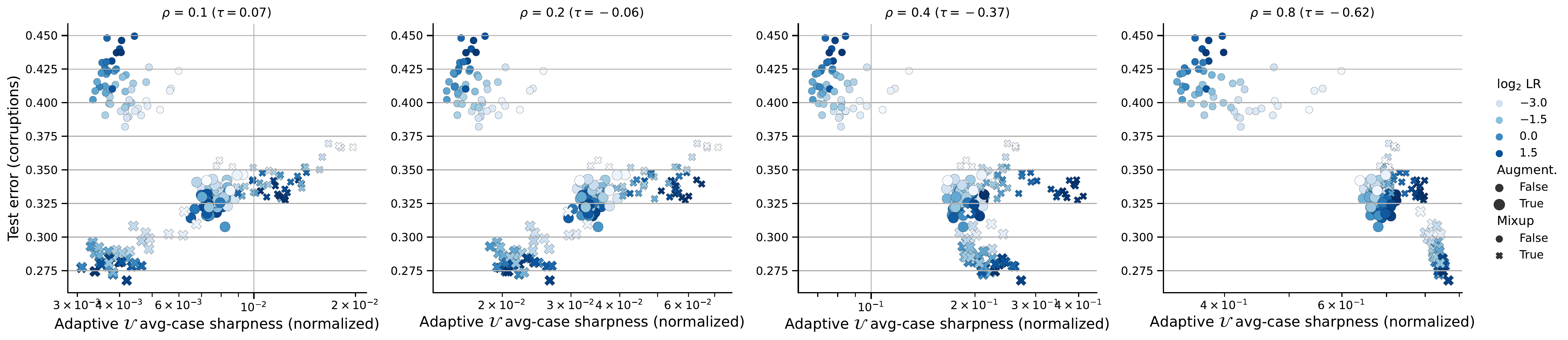}
    \textbf{Adaptive worst-case $\ell_\infty$ sharpness (normalized) for ResNets-18}
    \includegraphics[width=1.0\textwidth]{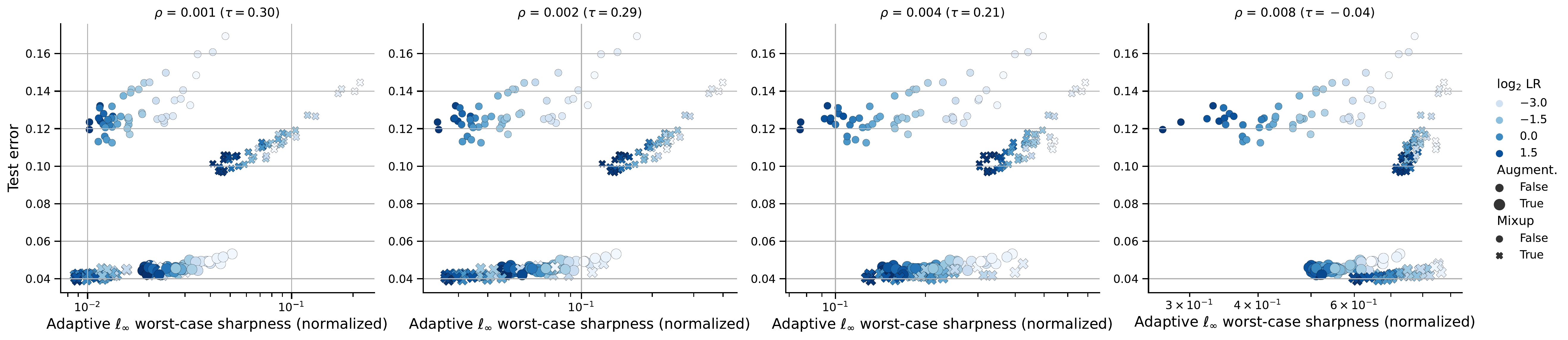}
    \includegraphics[width=1.0\textwidth]{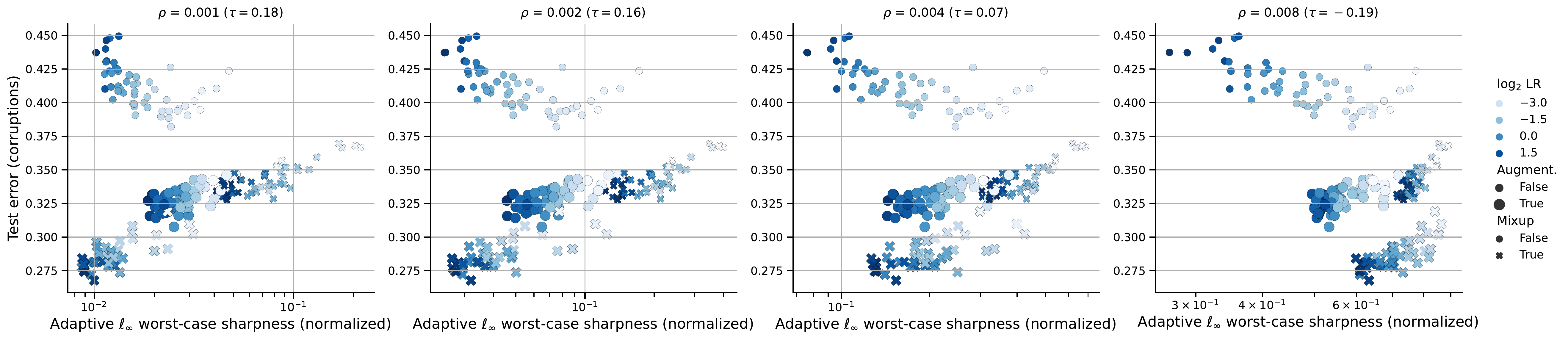}
    \caption{Average and worst-case $\ell_\infty$ adaptive sharpness definitions (normalized) vs. test error and OOD test error (common corruptions) on CIFAR-10 for ResNets-18 for different radii $\rho$.}
    \label{fig:cifar10_resnet_adaptive_linf_sharpness_normalized}
\end{figure*}

\begin{figure*}[t!]
    \centering
    \textbf{Standard average-case $\ell_2$ (Gaussian perturbations) sharpness (unnormalized) for ViTs}
    \includegraphics[width=1.0\textwidth]{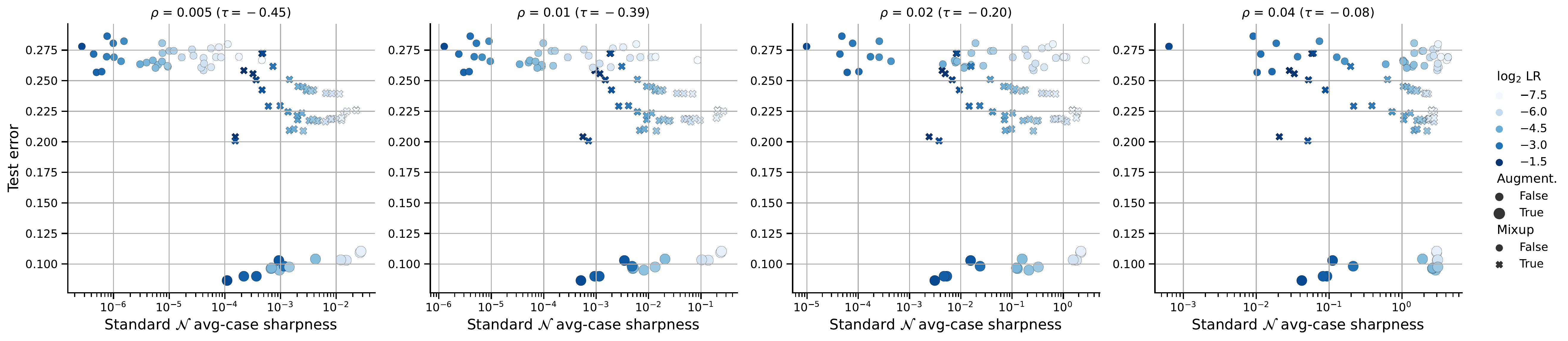}
    \includegraphics[width=1.0\textwidth]{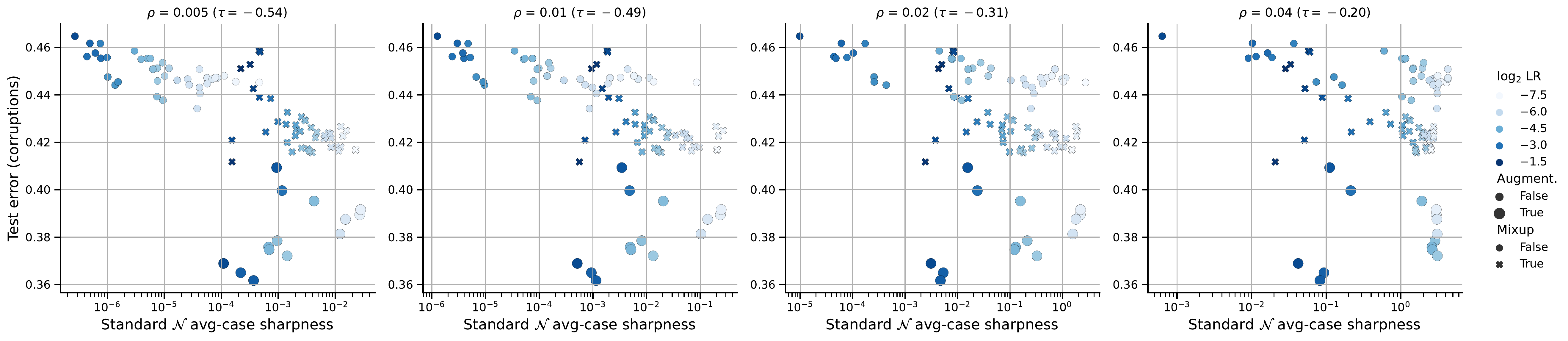}
    \textbf{Standard worst-case $\ell_2$ sharpness (unnormalized) for ViTs}
    \includegraphics[width=1.0\textwidth]{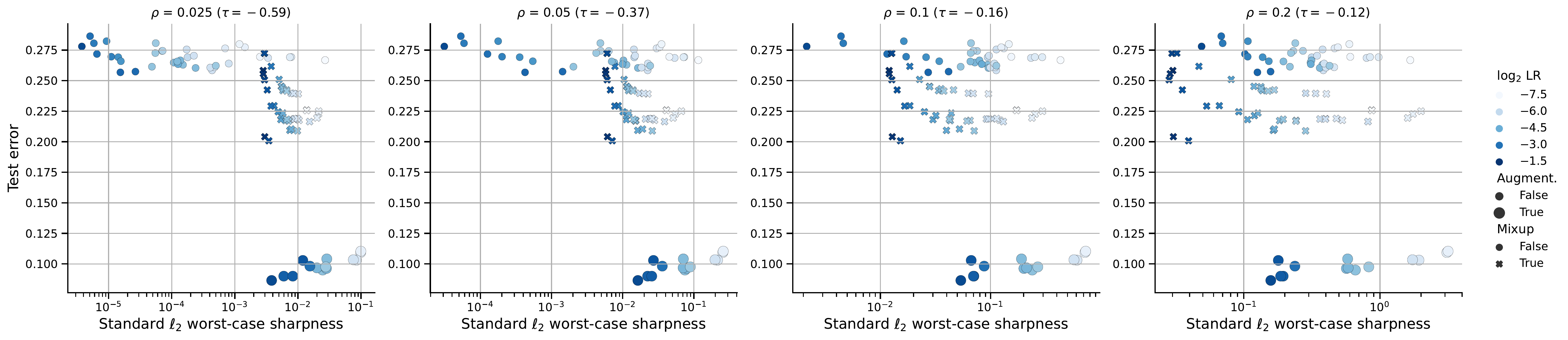}
    \includegraphics[width=1.0\textwidth]{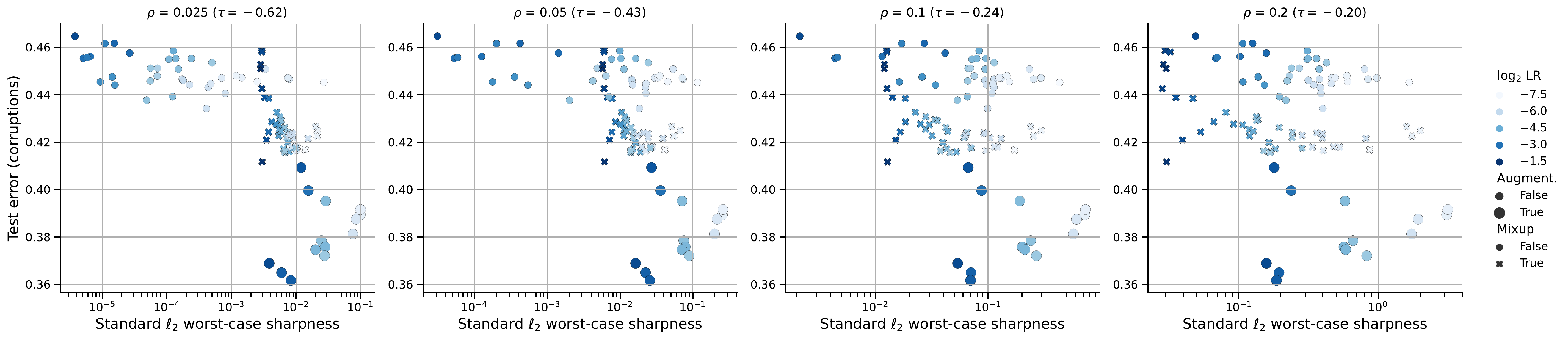}
    \caption{Average and worst-case $\ell_2$ standard sharpness definitions (unnormalized)  vs. test error and OOD test error (common corruptions) on CIFAR-10 for ViTs for different radii $\rho$.}
    \label{fig:cifar10_vit_standard_l2_sharpness}
\end{figure*}
\begin{figure*}[t!]
    \centering
    \textbf{Adaptive average-case $\ell_2$ (Gaussian perturbations) sharpness (unnormalized) for ViTs}
    \includegraphics[width=1.0\textwidth]{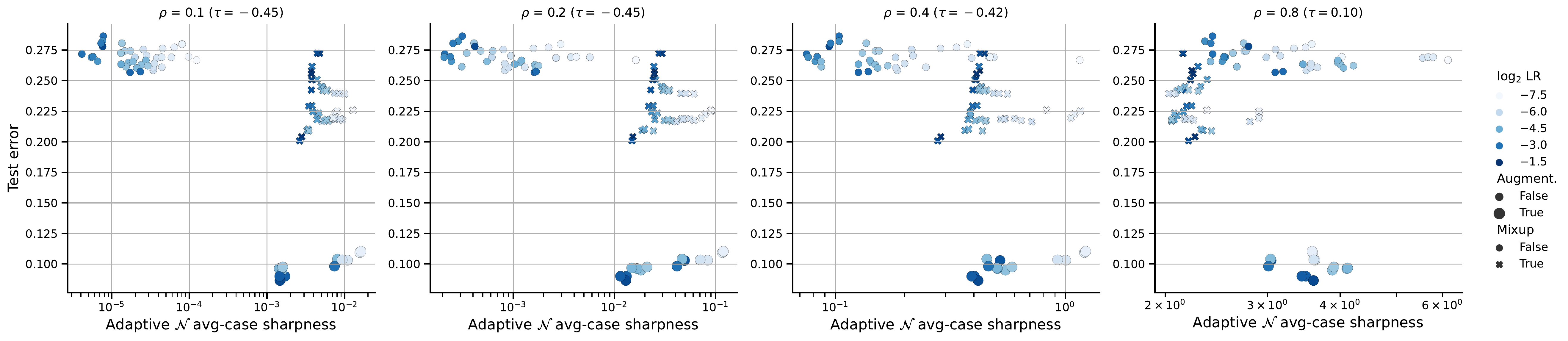}
    \includegraphics[width=1.0\textwidth]{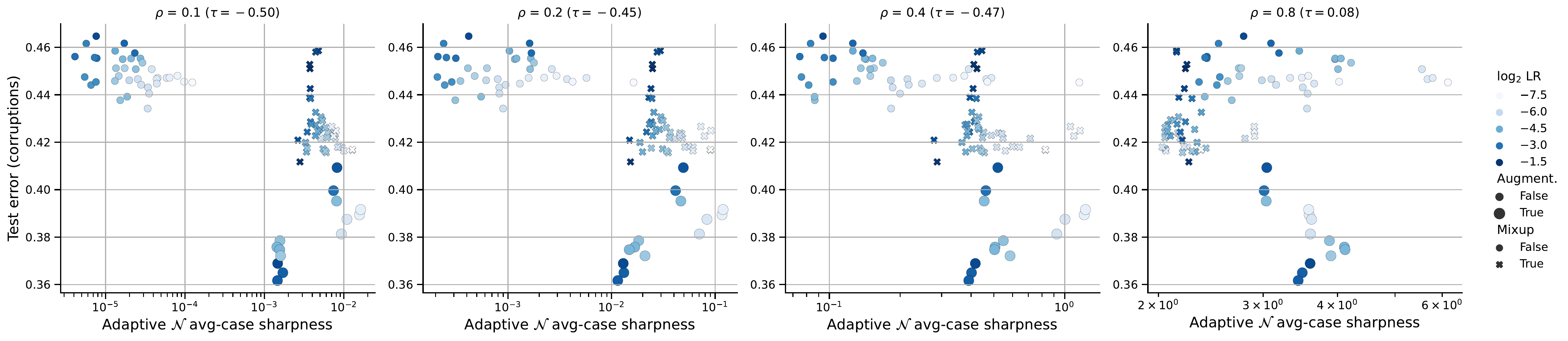}
    \textbf{Adaptive worst-case $\ell_2$ sharpness (unnormalized) for ViTs}
    \includegraphics[width=1.0\textwidth]{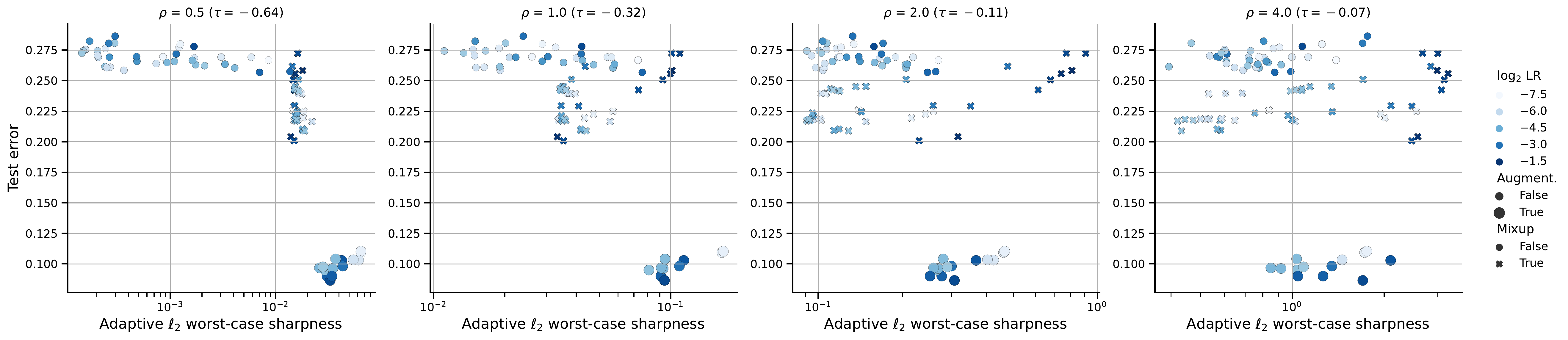}
    \includegraphics[width=1.0\textwidth]{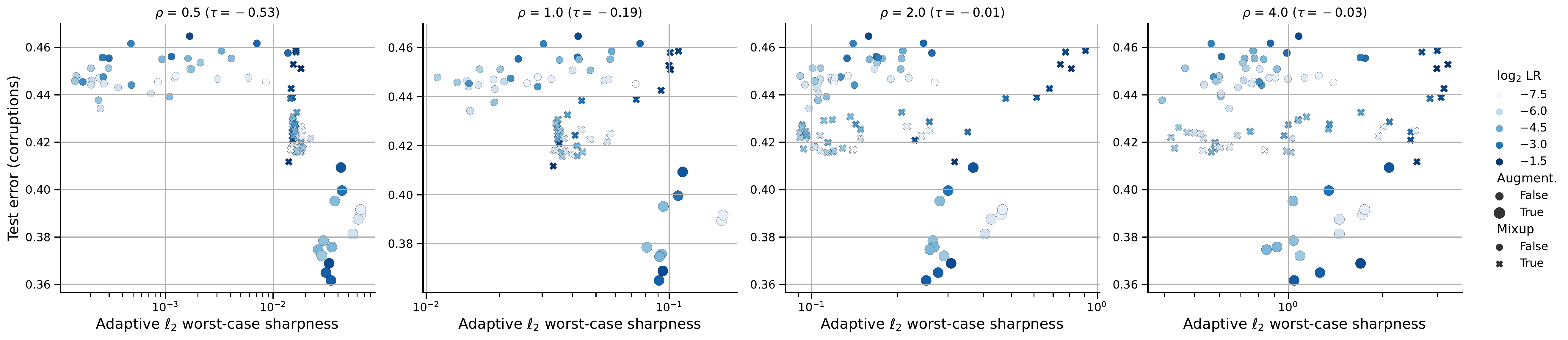}
    \caption{Average and worst-case $\ell_2$ adaptive sharpness definitions (unnormalized)  vs. test error and OOD test error (common corruptions) on CIFAR-10 for ViTs for different radii $\rho$.}
    \label{fig:cifar10_vit_adaptive_l2_sharpness}
\end{figure*}
\begin{figure*}[t!]
    \centering
    \textbf{Adaptive average-case $\ell_2$ (Gaussian perturbations) sharpness (normalized) for ViTs}
    \includegraphics[width=1.0\textwidth]{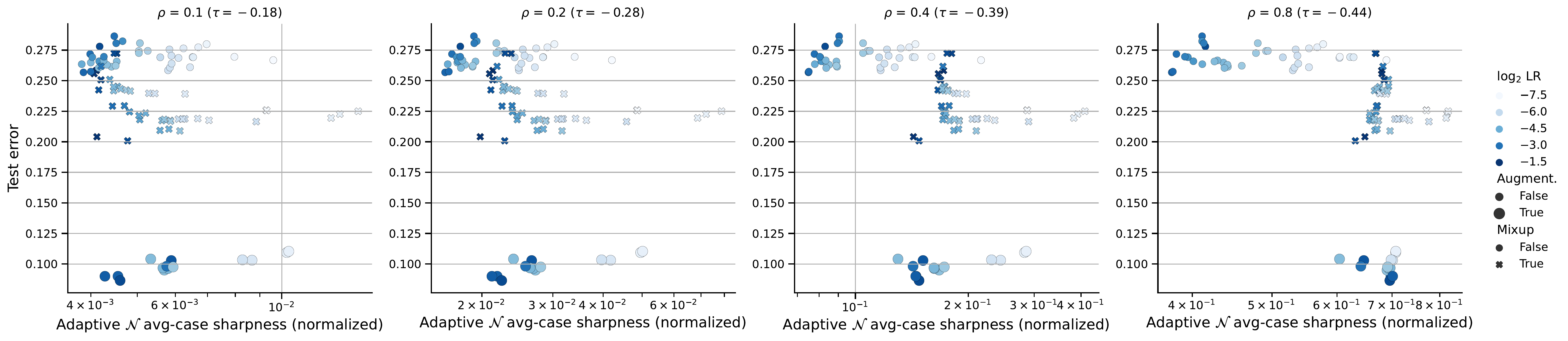}
    \includegraphics[width=1.0\textwidth]{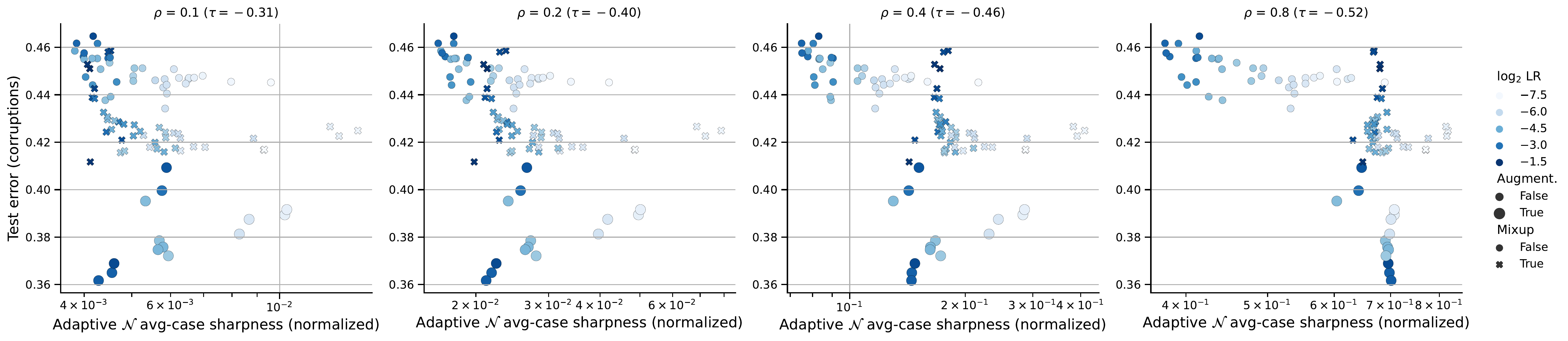}
    \textbf{Adaptive worst-case $\ell_2$ sharpness (normalized) for ViTs}
    \includegraphics[width=1.0\textwidth]{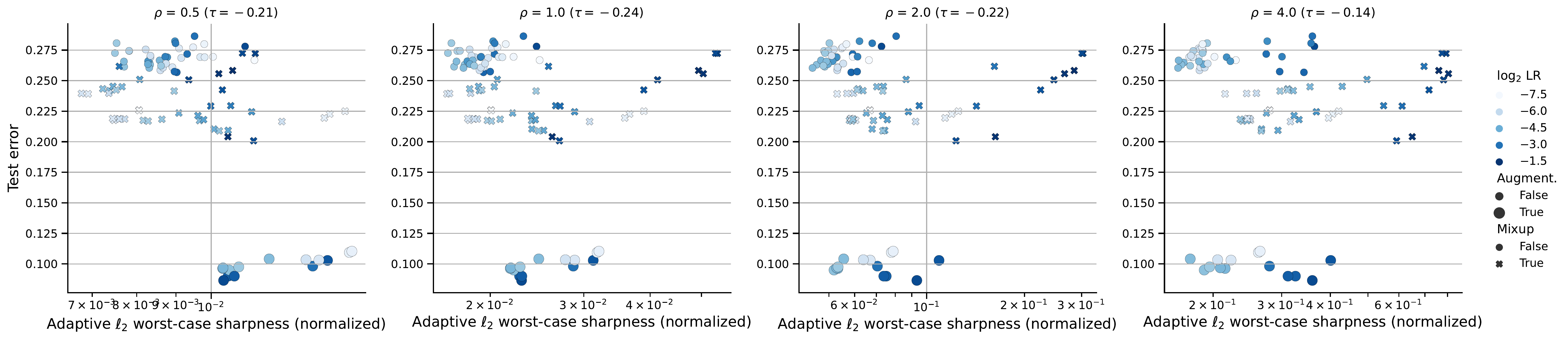}
    \includegraphics[width=1.0\textwidth]{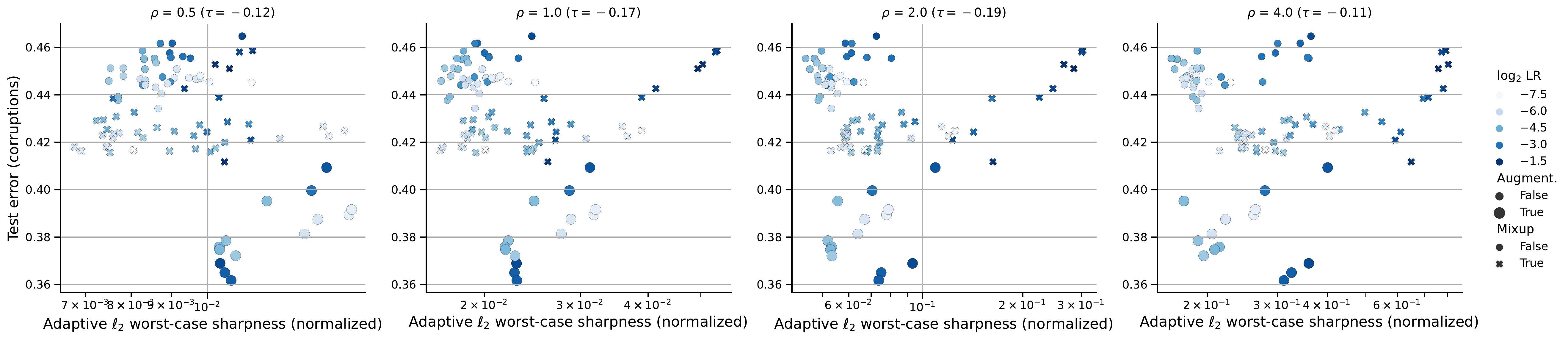}
    \caption{Average and worst-case $\ell_2$ adaptive sharpness definitions (normalized) vs. test error and OOD test error (common corruptions) on CIFAR-10 for ViTs for different radii $\rho$.}
    \label{fig:cifar10_vit_adaptive_l2_sharpness_normalized}
\end{figure*}

\begin{figure*}[t!]
    \centering
    \textbf{Standard average-case $\ell_\infty$ (uniform perturbations) sharpness (unnormalized) for ViTs}
    \includegraphics[width=1.0\textwidth]{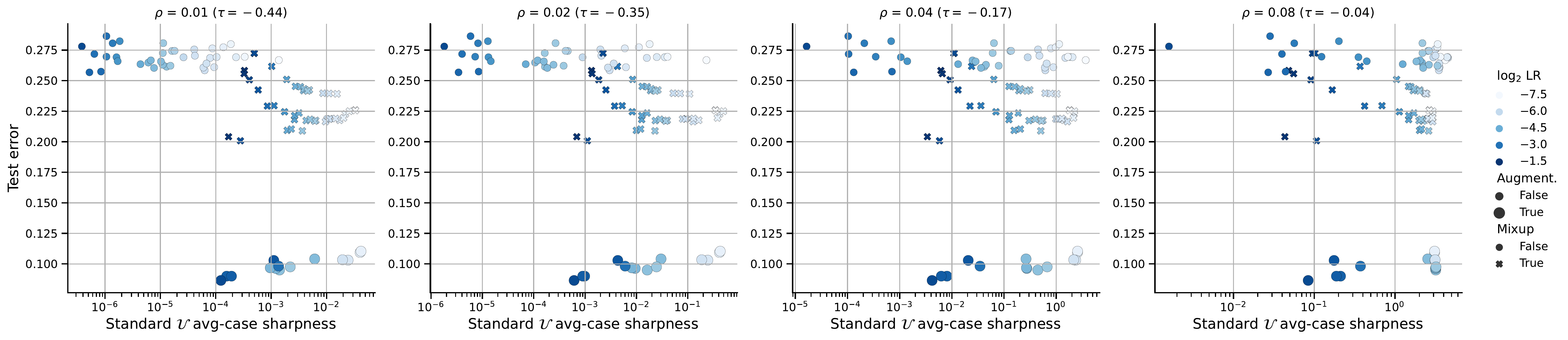}
    \includegraphics[width=1.0\textwidth]{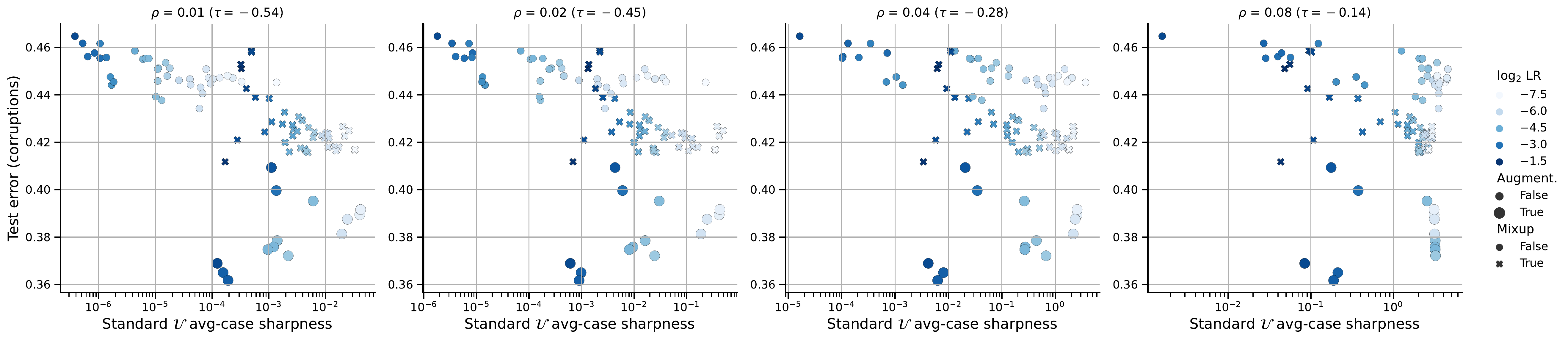}
    \textbf{Standard worst-case $\ell_\infty$ sharpness (unnormalized) for ViTs}
    \includegraphics[width=1.0\textwidth]{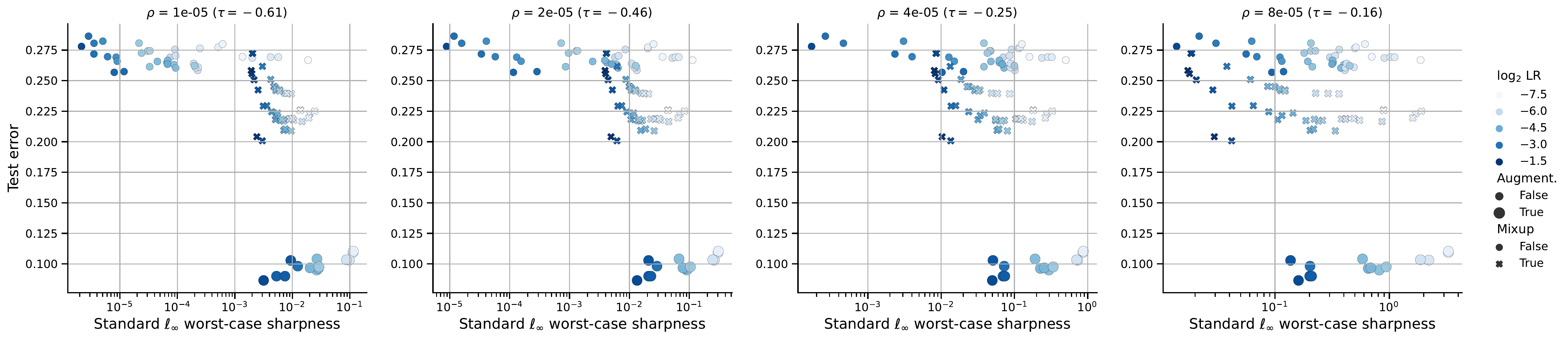}
    \includegraphics[width=1.0\textwidth]{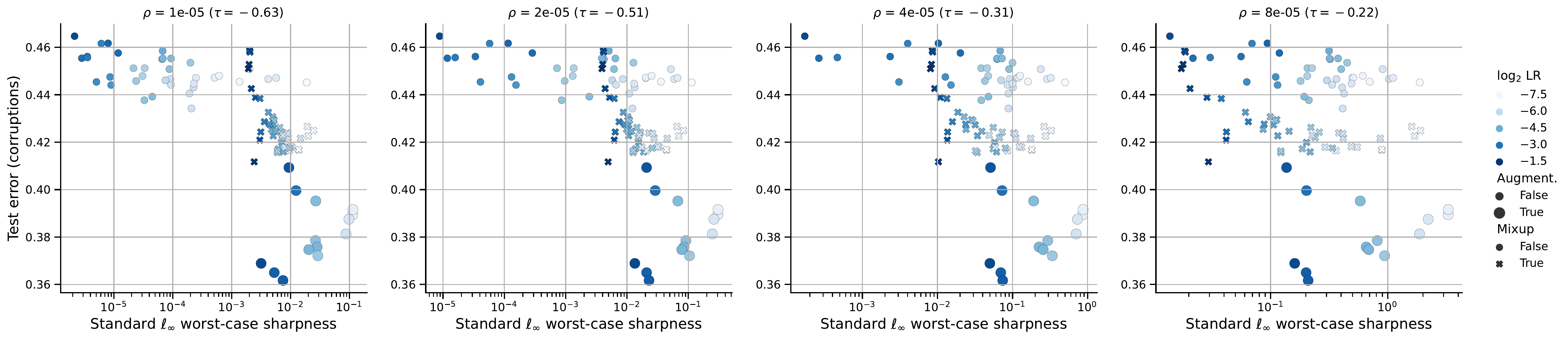}
    \caption{Average and worst-case $\ell_\infty$ standard sharpness definitions (unnormalized)  vs. test error and OOD test error (common corruptions) on CIFAR-10 for ViTs for different radii $\rho$.}
    \label{fig:cifar10_vit_standard_linf_sharpness}
\end{figure*}
\begin{figure*}[t!]
    \centering
    \textbf{Adaptive average-case $\ell_\infty$ (uniform perturbations) sharpness (unnormalized) for ViTs}
    \includegraphics[width=1.0\textwidth]{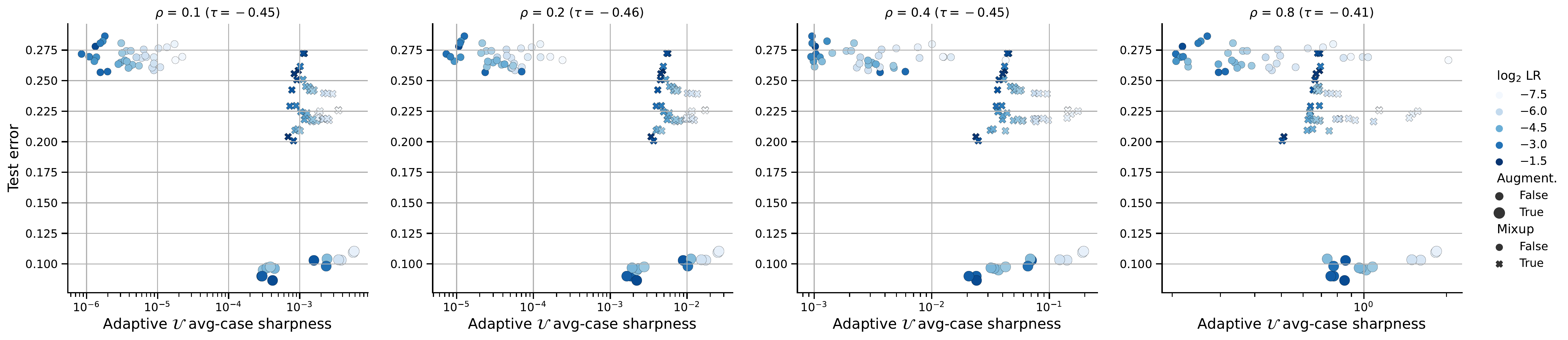}
    \includegraphics[width=1.0\textwidth]{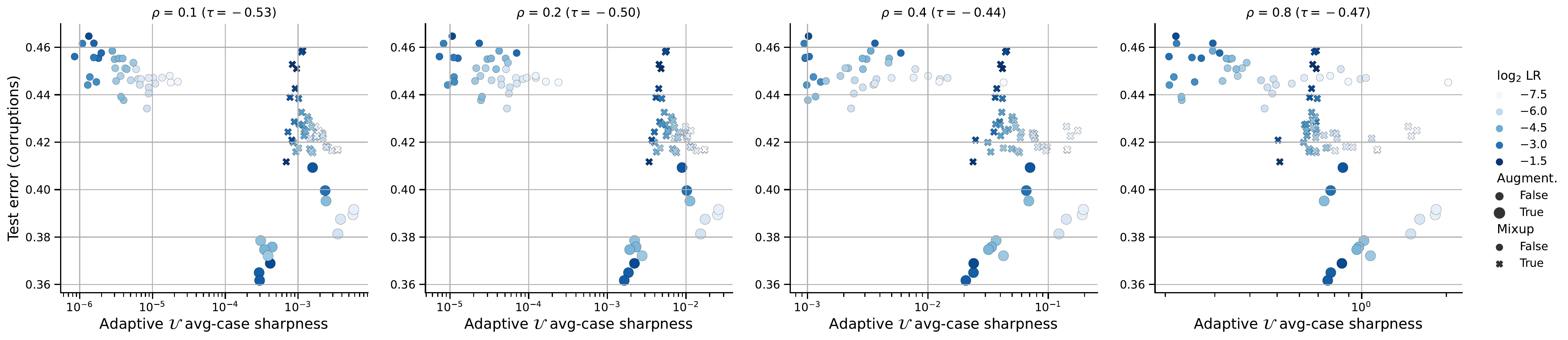}
    \textbf{Adaptive worst-case $\ell_\infty$ sharpness (unnormalized) for ViTs}
    \includegraphics[width=1.0\textwidth]{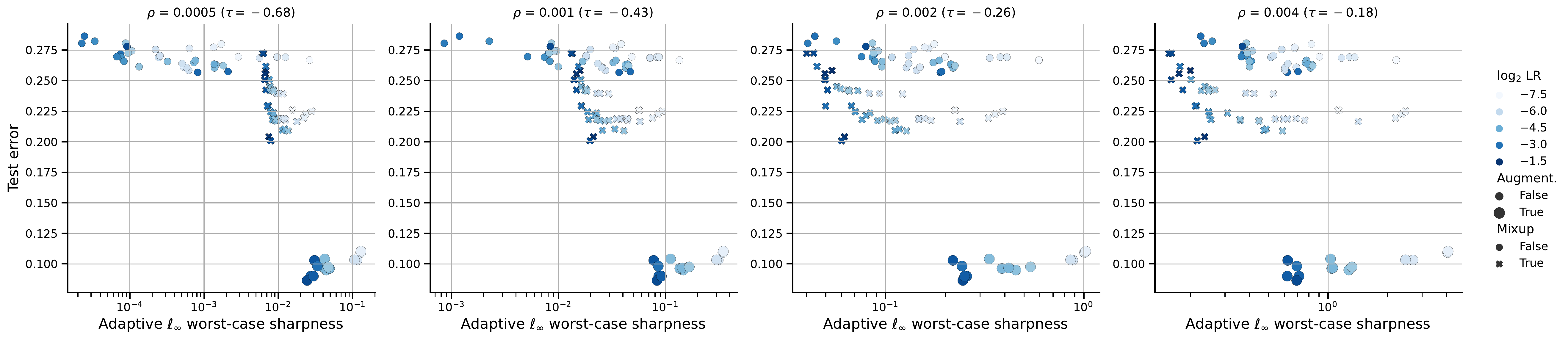}
    \includegraphics[width=1.0\textwidth]{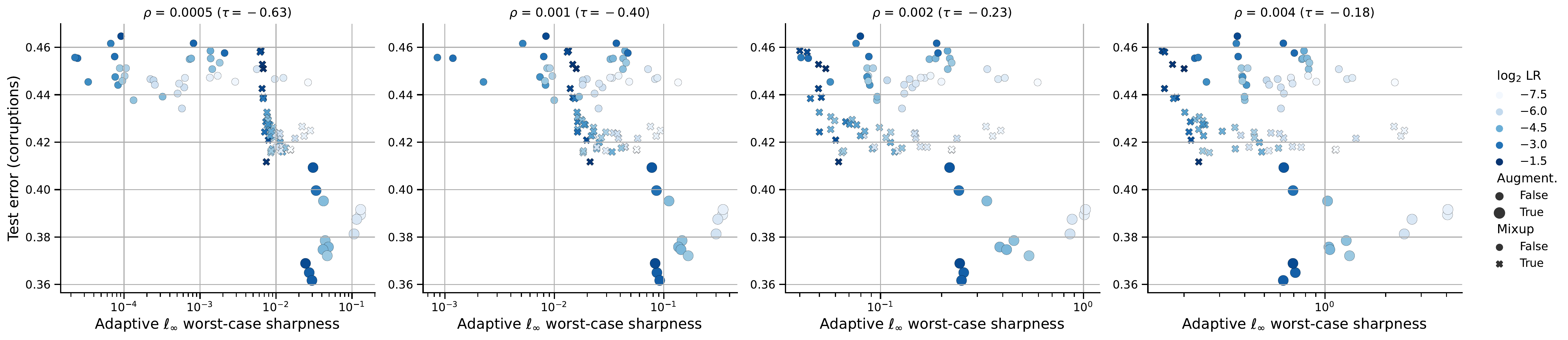}
    \caption{Average and worst-case $\ell_\infty$ adaptive sharpness definitions (unnormalized) and OOD test error (common corruptions) on CIFAR-10 for ViTs for different radii $\rho$.}
    \label{fig:cifar10_vit_adaptive_linf_sharpness}
\end{figure*}
\begin{figure*}[t!]
    \centering
    \textbf{Adaptive average-case $\ell_\infty$ (uniform perturbations) sharpness (normalized) for ViTs}
    \includegraphics[width=1.0\textwidth]{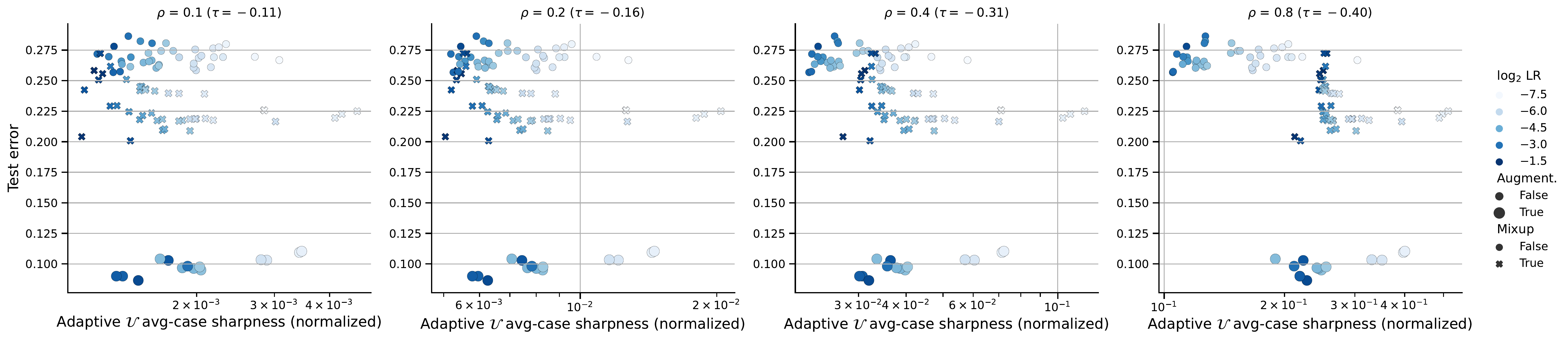}
    \includegraphics[width=1.0\textwidth]{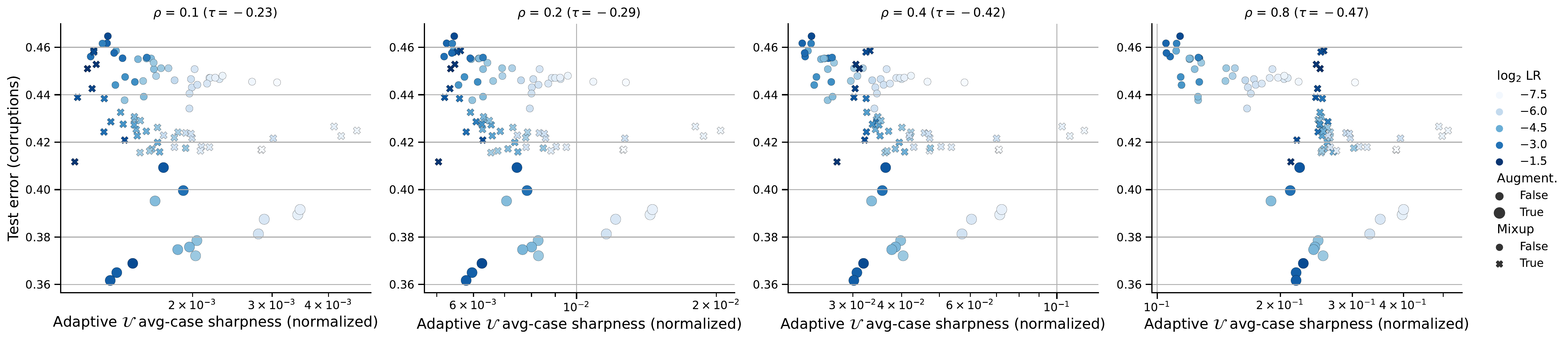}
    \textbf{Adaptive worst-case $\ell_\infty$ sharpness (normalized) for ViTs}
    \includegraphics[width=1.0\textwidth]{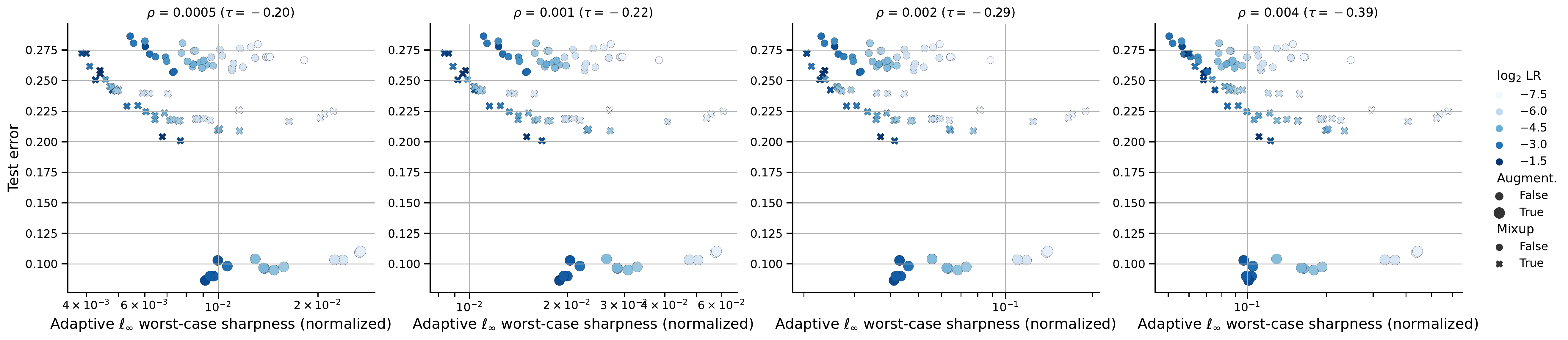}
    \includegraphics[width=1.0\textwidth]{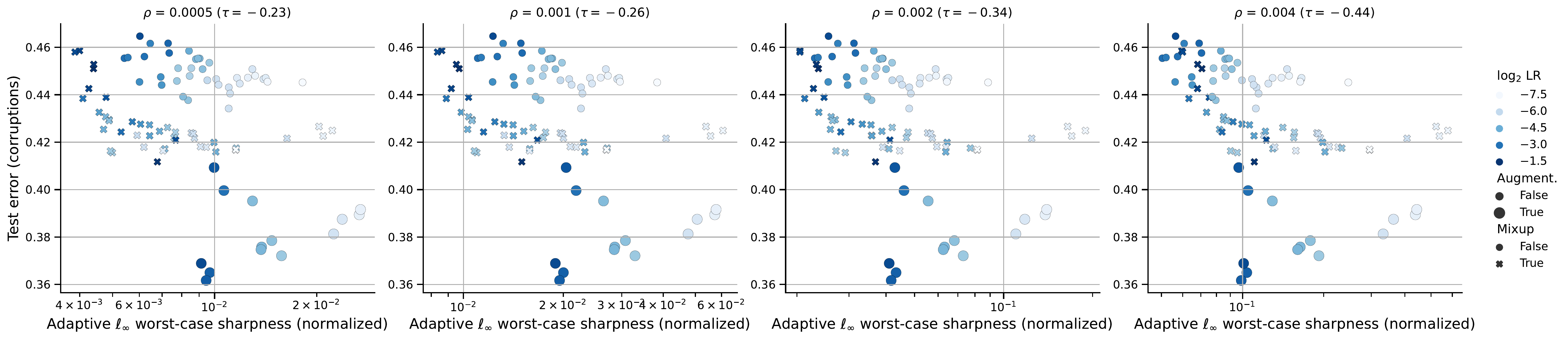}
    \caption{Average and worst-case $\ell_\infty$ adaptive sharpness definitions (normalized) vs. test error and OOD test error (common corruptions) on CIFAR-10 for ViTs for different radii $\rho$.}
    \label{fig:cifar10_vit_adaptive_linf_sharpness_normalized}
\end{figure*}

\end{document}